\newtheorem{theorem}{Theorem}[section]
\newtheorem{lemma}[theorem]{Lemma}
\newtheorem{corollary}[theorem]{Corollary}
\newtheorem{proposition}[theorem]{Proposition}
\newcommand{\cC}{\mathcal{C}}
\newcommand{\pmbDelta}{{\bm{\Delta}}}
\newcommand{\pmbSigma}{{\pmb{\Sigma}}}
\newcommand{\N}{\mathbb{N}}
\newcommand{\R}{\mathbb{R}}
\newcommand{\mbfA}{\mathbf{A}}
\newcommand{\mbfB}{\mathbf{B}}
\newcommand{\mbfP}{\mathbf{P}}
\newcommand{\mbfS}{\mathbf{S}}
\newcommand{\mbfU}{\mathbf{U}}
\newcommand{\mbfV}{\mathbf{V}}
\newcommand{\norm}[1]{\left\Vert {#1} \right\Vert}
\newcommand{\PAREN}[1]{{\left( {#1} \right)}}
\newcommand{\Biggparen}[1]{{\Bigg( {#1} \Bigg)}}
\newcommand{\Bigparen}[1]{{\Big( {#1} \Big)}}
\newcommand{\bigparen}[1]{{\big( {#1} \big)}}
\newcommand{\angles}[1]{{\langle {#1} \rangle}}
\newcommand{\BRACKET}[1]{{\left[ {#1} \right]}}
\newcommand{\IntSet}[2]{[{#1}\,.\,.\,{#2}]}
\newcommand{\card}[1]{\left| {#1} \right|}
\newcommand{\set}[1]{\left\{ {#1} \right\}}
\newcommand{\eps}{\varepsilon}
\DeclareMathOperator{\Tr}{tr}
\NewDocumentCommand\p{ m g }{
  \ensuremath{
    \IfNoValueTF{#2}
    {\Pr [ #1 ]}
    {\Pr_{#1}[#2]}
  }
}
\RenewDocumentCommand\P{ m g }{
  \ensuremath{
    \IfNoValueTF{#2}
    {\Pr \left[#1\right]}
    {\Pr_{#1}\left[#2\right]}
  }
}
\DeclareMathOperator*{\Expectation}{\mathbb{E}}
\NewDocumentCommand\E{ m g }{
  \ensuremath{
    \IfNoValueTF{#2}
    {\Expectation \left[#1\right]}
    {\Expectation_{#1}\left[#2\right]}
  }
}
\DeclareMathOperator*{\Variance}{\mathbb{V}\mathrm{ar}}
\NewDocumentCommand\Var{ m g }{
  \ensuremath{
    \IfNoValueTF{#2}
    {\Variance \left[#1\right]}
    {\Variance_{#1}\left[#2\right]}
  }
}
\newcommand{\algoFD}[0]{\mathcal{A}_{\textit{FD}}}
\newcommand{\algoLFD}[0]{\mathcal{A}_{\textit{LFD}}}
\newcommand{\FrequentDirections}[0]{{\sc Frequent Directions}\xspace}
\NewDocumentCommand\Err{ g }{
  \ensuremath{
    \IfNoValueTF{#1}
    {\mathcal{E}rr}
    {\mathcal{E}rr_{#1}}
  }
}
\newcommand{\numOfInput}[0]{n}
\newcommand{\elemInput}[1]{a_{#1}}
\newcommand{\freq}[1]{f\PAREN{#1}}
\newcommand{\estFreq}[1]{\tilde{f}\PAREN{#1}}
\newcommand{\matrixInput}[0]{\mbfA}
\newcommand{\matrixSketch}[0]{\mbfB}
\newcommand{\vecInput}[1]{\mbfA_{#1}}
\newcommand{\domain}[0]{d}
\newcommand{\dimension}[0]{d}
\newcommand{\numBuckets}[0]{m}
\newcommand{\numLearnedBuckets}[0]{{m_L}}
\newcommand{\thresholdBucketsToRemove}[0]{\tau}
\newcommand{\indicator}[1]{\mathds{1}_{\left[#1\right]}}
\newif\ifcomment
\definecolor{DarkGreen}{rgb}{0.1,0.5,0.1}
\newcommand{\hao}[1]{\textcolor{blue}{[HAO: #1]}}
\newcommand{\hao}[1]{%
  \@bsphack
  \@esphack
}
\title{Learning-Augmented Frequent Directions}
\author{
    Anders Aamand\\
    University of Copenhagen \\
    \texttt{andersaamanda@gmail.com}
    \And
    Justin Y. Chen \\
    MIT \\
    \texttt{justc@mit.edu}
    \And
    Siddharth Gollapudi \\
    Independent \\
    \texttt{sgollapu@berkeley.edu}
    \AND
    Sandeep Silwal\\
    UW-Madison\\
    \texttt{silwal@cs.wisc.edu}
    \And
    Hao WU \\
    University of Waterloo \\
    \texttt{hao.wu1@uwaterloo.ca}
}
\begin{document}
\maketitle

\doparttoc %
\faketableofcontents %

\part{} %
\vspace{-1.5cm}

\begin{abstract}
An influential paper of Hsu et al. (ICLR'19) introduced the study of learning-augmented streaming algorithms in the context of frequency estimation. A fundamental problem in the streaming literature, the goal of frequency estimation is to approximate the number of occurrences of items appearing in a long stream of data using only a small amount of memory. Hsu et al. develop a natural framework to combine the worst-case guarantees of popular solutions such as CountMin and CountSketch with learned predictions of high frequency elements. They demonstrate that learning the underlying structure of data can be used to yield better streaming algorithms, both in theory and practice.

We simplify and generalize past work on learning-augmented frequency estimation. Our first contribution is a learning-augmented variant of the Misra-Gries algorithm which improves upon the error of learned CountMin and learned CountSketch and achieves the state-of-the-art performance of randomized algorithms (Aamand et al., NeurIPS'23) with a simpler, deterministic algorithm. Our second contribution is to adapt learning-augmentation to a high-dimensional generalization of frequency estimation corresponding to finding important directions (top singular vectors) of a matrix given its rows one-by-one in a stream. We analyze a learning-augmented variant of the Frequent Directions algorithm, extending the theoretical and empirical understanding of learned predictions to matrix streaming.
\end{abstract}

\section{introduction}

Learning-augmented algorithms combine the worst-case analysis of traditional algorithm design with machine learning to exploit structure in the specific inputs on which the algorithm is deployed. A burgeoning line of work in this context has studied algorithms furnished with predictions given by domain experts or learned from past data. This general methodology has been applied to create input-optimized data structures \citep{kraska2018learnedindex, mitzenmacher2018learnedbloom}, graph algorithms \citep{dinitz2021fastermatching, chen2022fastergraph}, online algorithms \citep{lykouris2021caching, gollapudi2019skirental}, streaming algorithms \citep{HsuIKV19, jiang2020learningstream, chen2022triangle} among many other applications\footnote{There are hundreds of papers written on this topics. See the survey of \cite{mitzenmacher2022algopredictions} or the website \url{https://algorithms-with-predictions.github.io/}.}. Within the context of streaming algorithms, where the input arrives in an online fashion and the algorithm has too little memory to store everything, predictors can highlight data which are worth remembering. This intuition was formalized in an influential work of \cite{HsuIKV19} in the context of frequency estimation, a fundamental streaming problem where the goal is to provide an estimate of how many times any element appeared in the stream.

Given access to a heavy-hitter oracle identifying the highest frequency elements, \cite{HsuIKV19} give a natural framework where the heavy-hitters are counted exactly while the rest of the frequencies are approximated using standard algorithms such as CountMin \citep{cormode2005countmin} or CountSketch \citep{charikar2002countsketch}. They study the setting where the frequencies follow a power law distribution, commonly seen in practice and therefore well-studied for frequency estimation \citep{cormode2005countmin, metwally2005spacesaving, minton2012concentration}. Given  access to an oracle which can recover the heaviest elements, they give improved error bounds where error is taken in expectation over the empirical distribution of frequencies. A sequence of follow-up works investigate how to learn good predictors \citep{du2021puttingthelearning, chen2022supportaware}, apply the results to other streaming models \citep{shahout2024slidingwindow}, and give improved algorithms \citep{AamandCNSV23}.

\begin{table}[!ht]
\centering
\begin{tabular}{|c|c|c|c|}
\hline
\textbf{Algorithms}        & \textbf{Weighted Error}                                                                                                                            & \textbf{Predictions}? & \textbf{Analysis}                                                       \\ \hline
Frequent Direction         & $\Theta \PAREN{   \frac{ \PAREN{ \ln \frac{\numBuckets}{\ln \frac{2\dimension}{\numBuckets}}  } \cdot \ln \frac{\dimension}{\numBuckets}}{(\ln \dimension)^2} \cdot \frac{\norm{\mbfA}_F^2}{\numBuckets}  }$ & No & Theorem~\ref{theorem: Expected Error of the FrequentDirections}         \\ \hline
Learned Frequent Direction & $\Theta \PAREN{ \frac{1}{(\ln \dimension)^2} \cdot \frac{\norm{\mbfA}_F^2}{\numBuckets} }$                                                                    & Yes      & Theorem~\ref{theorem: Expected Error of the Learned FrequentDirections} \\ \hline
\end{tabular}
\caption{
    Error bounds for Frequent Directions with $\numOfInput$ input vectors from the domain $\R^\dimension$ using $\numBuckets \times \dimension$ words of memory, assuming that the matrix consisting of input vectors has singular value $\sigma_i^2 \propto 1 / i$. 
    The weighted error is defined by Equation~\eqref{eq: def expected error 2 of frequent directions}.
}
\label{tab: table error bounds for frequency directions}
\end{table}

In this work, we define and analyze the corresponding problem in the setting where each data point is a vector rather than an integer, and the goal is to find frequent directions rather than elements (capturing low-rank structure in the space spanned by the vectors). This setting of ``matrix streaming'' is an important tool in big data applications including image analysis, text processing, and numerical linear algebra. Low-rank approximations via SVD/PCA are ubiquitous in these applications, and streaming algorithms for this problem allow for memory-efficient estimation of these approximations. In the matrix context, we define a corresponding notion of expected error and power-law distributed data. We develop a learning-augmented streaming algorithm for the problem based on the Frequent Directions (FD) algorithm \citep{ghashami2016freqdir} and give a detailed theoretical analysis on the space/error tradeoffs of our algorithm given predictions of the important directions of the input matrix. Our framework captures and significantly generalizes that of frequency estimation in one dimension. When the input vectors are basis vectors, our algorithm corresponds to a learning-augmented version of the popular Misra-Gries \citep{MisraG82} heavy hitters algorithm. In this special case of our model, our algorithm achieves state-of-the-art bounds for learning-augmented frequency estimation, matching that of \cite{AamandCNSV23}. In contrast to prior work, we achieve this performance without specializing our algorithm for power-law distributed data.

We experimentally verify the performance of our learning-augmented algorithms on real data. Following prior work, we consider datasets containing numerous problem instances in a temporal order (each instance being either a sequence of items for frequency estimation or a sequence of matrices for matrix streaming). Using predictions trained on past data, we demonstrate the power of incorporating learned structure in our algorithms, achieving state-of-the-art performance in both settings.

\paragraph{Our Contributions}
\begin{itemize}[leftmargin=*]
    \item We generalize the learning-augmented frequency estimation model to the matrix streaming setting: each stream element is a row vector of a matrix $\matrixInput$. We define corresponding notions of expected error and power-law distributed data with respect to the singular vectors and values of $\matrixInput$.
    \item In this setting, we develop and analyze a learning-augmented version of the Frequent Directions algorithm of \cite{ghashami2016freqdir}. Given predictions of the important directions (corresponding to the top right singular vectors of $\matrixInput$), we demonstrate an asymptotically better space/error tradeoff than the base algorithm without learning. See \cref{tab: table error bounds for frequency directions}. %
    \item As a special case of our setting and a corollary of our analysis, we bound the performance of learning-augmented Misra-Gries for frequency estimation. In the learning-augmented setting, past work has analyzed randomized algorithms CountMin and CountSketch as well as specialized variants \citep{HsuIKV19, AamandCNSV23}. To our knowledge, no analysis has been done prior to our work for the popular, deterministic Misra-Gries algorithm. Our analysis shows that learned Misra-Gries achieves state-of-the-art learning-augmented frequency estimation bounds, without randomness or specializing the algorithm for Zipfian data. See \cref{tab: table error bounds for frequency estimation}. %
    \item We empirically validate our theoretical results via experiments on real data. For matrix streaming, our learning-augmented Frequent Directions algorithm outperforms the non-learned version by 1-2 orders of magnitude on all datasets. For frequency estimation, our learned Misra-Gries algorithm achieves superior or competitive performance against the baselines.
\end{itemize}

\begin{table}[!ht]
\centering
\small
\renewcommand{\arraystretch}{1.8} %
\begin{tabular}{|c|c|c|c|c|}
\hline
\textbf{Algorithm}              & \textbf{Weighted Error}                                                                                                                                                & \textbf{Rand}? & \textbf{Pred}? &  \textbf{Reference}             \\ \hline
CountMin          & $\Theta \PAREN{  \frac{ n }{\numBuckets}  }$                                                                                                                  &Yes & No            & \citep{AamandCNSV23} \\ \hline
Learned CountMin       & $\Theta \PAREN{  \frac{ \PAREN{ \ln \frac{\dimension}{m} }^2 }{\numBuckets}  \frac{n}{(\ln \dimension)^2}  }$                                                            &Yes   & Yes           & \citep{HsuIKV19}     \\ \hline
CountSketch     & $O \PAREN{  \frac{1}{\numBuckets} \frac{n}{\ln \dimension}  }$                                                    &Yes   & No            & \citep{AamandCNSV23} \\ \hline
Learned CountSketch & $O \PAREN{  \frac{\ln\frac{d}{m}}{\numBuckets} \frac{n}{(\ln \dimension)^2}  }$                                                                                             &Yes   & Yes           & \citep{AamandCNSV23} \\ \hline
CountSketch++     & $O \PAREN{  \frac{ \ln \numBuckets + \text{poly}(\ln \ln \dimension) }{\numBuckets} \frac{n}{(\ln \dimension)^2}  }$                                                      &Yes & No            & \citep{AamandCNSV23} \\ \hline
Learned CountSketch ++ & $O \PAREN{  \frac{1}{\numBuckets}  \frac{n}{(\ln \dimension)^2}  }$                                                                                             &Yes   & Yes           & \citep{AamandCNSV23} \\ \hline
Misra-Gries           & $\Theta \PAREN{ \PAREN{ \ln \frac{\numBuckets}{\ln \frac{2 \dimension}{\numBuckets}} }  \frac{\ln \frac{\dimension}{\numBuckets}}{\numBuckets}  \frac{n}{(\ln \dimension)^2} }$ &No & No            &    Theorem~\ref{theorem: Expected Error of the Misra-Gries}                  \\ \hline
Learned Misra-Gries    & $\Theta \PAREN{  \frac{1}{\numBuckets} \frac{n}{(\ln \dimension)^2}  }$                                                                                          &No & Yes           &  Theorem~\ref{theorem: Expected Error of the learned Misra-Gries}                    \\ \hline
\end{tabular}
\caption{
Error bounds for frequency estimation with $\numOfInput$ input elements from the domain $[\domain]$ using $\numBuckets$ words of memory, assuming that the frequency of element $i \in [\domain]$ follows $\freq{i} \propto 1 / i$. 
The weighted error indicates that element $i$ is queried with a probability proportional to $1 / i$.}
\label{tab: table error bounds for frequency estimation}
\end{table}

\paragraph{Related Work}

The learning-augmented frequency estimation problem was introduced in \cite{HsuIKV19}. They suggest the model of predicted frequencies and give the first analysis of learning-augmented CountMin and CountSketch with weighted error and Zipfian frequencies. \cite{du2021puttingthelearning} evaluate several choices for the loss functions to use to learn the frequency predictor and \cite{chen2022supportaware} develop a procedure to learn a good predictor itself with a streaming algorithm. \cite{shahout2024slidingwindow} extend the model to sliding window streams where frequency estimation is restricted to recently appearing items. \cite{shahout2024learnspacesaving} analyze a learning-augmented version of the SpaceSaving algorithm~\citep{metwally2005spacesaving} which is a deterministic algorithm for frequency estimation, but, unlike our work, they do not give space/error tradeoffs comparable to \cite{HsuIKV19}. \cite{AamandCNSV23} give tight analysis for CountMin and CountSketch both with and without learned predictions in the setting of weighted error with Zipfian data. Furthermore, they develop a new algorithm based on the CountSketch, which we refer to as learning-augmented CountSketch++, which has better asymptotic and empirical performance.

Matrix sketching and low-rank approximations are ubiquitous in machine learning. The line of work most pertinent to our work is that on matrix streaming where rows arrive one-by-one, and in small space, the goal is to maintain a low-rank approximation of the full matrix. The Frequent Directions algorithm for the matrix streaming problem was introduced by \cite{Liberty13}. Subsequent work of \cite{ghashami2014freqdirrelative} and \cite{woodruff2014lowranklowerbound} refined the analysis and gave a matching lower bound. These works were joined and developed in \cite{ghashami2016freqdir} with an even simpler analysis given by \cite{Liberty22}.

A related line of work is on learning sketching matrices for low-rank approximation, studied in \cite{indyk2019learnlra, indyk2021fewshotlra}. Their goal is to learn a sketching matrix $\mbfS$ with few rows so that the low-rank approximation of $\matrixInput$ can be recovered from $\mbfS \matrixInput$. The main guarantee is that the classical low-rank approximation algorithm of \cite{ClarksonW13}, which uses a random $\mbfS$, can be augmented so that only half of its rows are random, while retaining worst-case error. The learned half of $\mbfS$ can be optimized empirically, leading to a small sketch $\mbfS \matrixInput$ in practice. The difference between these works and us is that their overall procedure cannot be implemented in a single pass over the stream. We discuss other related works in Appendix \ref{appendix-related}.

\paragraph{Organization} 
Section~\ref{sec: preliminary} delves into the necessary preliminaries for our algorithm. We define the problems of frequency estimation, and its natural higher dimensional version, introduce our notion of estimation error for these problems, and discuss the two related algorithms Misra-Gries and Frequent Directions for these problems.
In Section~\ref{sec:learning-based}, we introduce our learning-augmented versions of Misra-Gries and Frequent Directions. We also analyse the performance of learned Misra-Gries algorithms, postponing the the analysis of learned Frequent Directions to Appendix~\ref{appendix: analysis of frequent directions}.
Section~\ref{sec: experiments} presents our experiment results with extensive figures given in \cref{appendix-experiments}.

\section{Preliminaries}
\label{sec: preliminary}

\paragraph{Frequency Estimation.} 
Let $\numOfInput, \domain \in \N^+$, and consider a sequence $\elemInput{1}, \elemInput{2}, \ldots, \elemInput{\numOfInput} \in [\domain]$ arriving one by one.
We are interested in the number of times each element in $[\domain]$ appears in the stream.
Specifically, the frequency $\freq{i}$ of element $i \in [\domain]$ is defined as $\freq{i} \doteq \card{\set{t \in [\numOfInput] : \elemInput{t} = i}}$. 
Thus, $\sum_{i \in [\domain]} \freq{i} = \numOfInput.$
Given estimates $\estFreq{i}$ for each $, i \in [\domain]$, we focus on the following weighted estimation error~\citep{HsuIKV19, AamandCNSV23}:  $ \Err \doteq \sum_{i \in [\domain]} \frac{\freq{i}}{\numOfInput} \cdot \card{ \freq{i} - \estFreq{i} }.~\refstepcounter{equation}(\theequation) \label{eq: def of weighted error for frequency estimation}$

The weighted error assigns a weight to each element's estimation error proportional to its frequency, reflecting the intuition that frequent elements are queried more often than less frequent ones.

\paragraph{Direction Frequency.}
The frequency estimation problem has a natural high-dimensional extension. 
The input now consists of a stream of vectors $\vecInput{1}, \vecInput{2}, \ldots, \vecInput{\numOfInput} \in \R^\dimension$.
For each unit vector $\vec{v} \in \R^\dimension$, we define its "frequency," $\freq{\vec{v}}$, as the sum of the squares of the projected lengths of each input vector onto $\vec{v}$. 
Specifically, let $\matrixInput \in \R^{\numOfInput \times \dimension}$ denote the matrix whose rows are $\vecInput{1}^T, \ldots, \vecInput{\numOfInput}^T$.
Then $\freq{\vec{v}} \doteq \norm{\matrixInput \vec{v}}_2^2$.

To see the definition is a natural extension of the element frequency in the \emph{frequency estimation} problem, suppose each input vector $\vecInput{t}$ is one of the standard basis vectors $\vec{e}_1, \ldots, \vec{e}_d$ in $\R^\dimension$.
Further, we restrict the frequency query vector $\vec{v}$ to be one of these standard basis vectors, i.e., $\vec{v} = \vec{e}_i$ for some $i \in [\dimension].$
Then 
$
    \freq{\vec{e}_i} 
        = \norm{\matrixInput \vec{e}_i}_2^2 
        = \sum_{t \in [\numOfInput]} \langle \vecInput{t}, \vec{e}_i \rangle^2
        = \sum_{t \in [\numOfInput]} \indicator{\vecInput{t} = \vec{e}_i},
$
which is simply the number of times $\vec{e}_i$ appears in $\matrixInput$.

{\it Estimation Error.} Consider an algorithm that can provide an estimate $\estFreq{\vec{v}}$ of $\freq{\vec{v}}$ for any unit vector $\vec{v} \in \R^\dimension$.
The estimation error of a single vector is given by $\card{\freq{\vec{v}} - \estFreq{\vec{v}}} = \card{\norm{\matrixInput \vec{v}}_2^2 - \estFreq{\vec{v}}}$.
Since the set of all unit vectors in $\R^\dimension$ is uncountably infinite, we propose to study the following weighted error: 
\begin{equation}
    \label{eq: def expected error 2 of frequent directions}
    \Err =  \sum_{i \in [\domain]} \frac{\sigma_i^2}{\norm{A}_F^2 } \cdot \card{ \norm{\matrixInput \vec{v}_i }_2^2 
        - \estFreq{\vec{v}_i} },
\end{equation}
where $\sigma_1, \ldots, \sigma_d$ denote the singular values of $\matrixInput$, $\vec{v}_1, \ldots, \vec{v}_d$ are the corresponding right singular vectors, and $\norm{\matrixInput}_F$ is its Frobenius norm.

To see how this generalizes Equation~\eqref{eq: def of weighted error for frequency estimation}, assume again that the rows of $\matrixInput$ consist of standard basis vectors and that $\freq{\vec{e}_1} \ge \freq{\vec{e}_2} \ge \cdots \ge \freq{\vec{e}_\dimension}$. In this case, it is straightforward to verify that $\sigma_i^2 = \freq{\vec{e}_i}$ and $\vec{v}_i = \vec{e}_i$ for all $i \in [\dimension]$. Consequently, $\norm{\matrixInput \vec{v}_i}_2^2 = \freq{\vec{e}_i}$, and $\norm{\matrixInput}_F^2 = \sum_{i \in [\dimension]} \sigma_i^2 = n$. 
Therefore, Equation~\eqref{eq: def expected error 2 of frequent directions} reduces to Equation~\eqref{eq: def of weighted error for frequency estimation} in this case. 
Moreover, for a specific class of algorithms, we can offer an alternative and intuitive interpretation of the weighted error.

\begin{lemma} 
    \label{lemma: equivalence of weighted error}
    For algorithms that estimate $\estFreq{\vec{v}}$ by first constructing a matrix $\matrixSketch$ and then applying the formula $\estFreq{\vec{v}} = \norm{\matrixSketch \vec{v}}_2^2$ such that $0 \leq \estFreq{\vec{v}} \leq \freq{\vec{v}}$, the weighted error defined in Equation~\eqref{eq: def expected error 2 of frequent directions} satisfies $\Err \propto {
            \E{ \vec{v} \sim N(0, \matrixInput^T \matrixInput )}{         
                {
                    \norm{\matrixInput \vec{v}}_2^2 
                    - \estFreq{\vec{v}}
                } 
            }
        }.~\refstepcounter{equation}(\theequation)\label{eq: def expected error 1 of frequent directions}$

\end{lemma}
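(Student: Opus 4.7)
The plan is to drop the absolute value in the definition of $\Err$ using the stated bound $0 \le \estFreq{\vec v}\le \freq{\vec v}$, and then to evaluate the Gaussian expectation $\E{\vec v \sim N(0, \matrixInput^T\matrixInput)}{\norm{\matrixInput \vec v}_2^2 - \estFreq{\vec v}}$ by diagonalizing in the right-singular-vector basis.

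First, since $\estFreq{\vec v_i}\le \freq{\vec v_i} = \norm{\matrixInput \vec v_i}_2^2 = \sigma_i^2$, the weighted error simplifies to
\[
\Err = \frac{1}{\norm{\matrixInput}_F^2}\sum_{i\in[\dimension]} \sigma_i^2\bigl(\sigma_i^2 - \estFreq{\vec v_i}\bigr).
\]
Next, since $\matrixInput^T\matrixInput = \sum_i \sigma_i^2 \vec v_i \vec v_i^T$, a Gaussian $\vec v \sim N(0,\matrixInput^T\matrixInput)$ admits the representation $\vec v = \sum_{i\in[\dimension]} \sigma_i z_i \vec v_i$ with $z_1,\ldots,z_\dimension$ i.i.d.\ standard normals. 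Using $\matrixInput\vec v_i = \sigma_i \vec u_i$ with $\vec u_i$ orthonormal, one gets $\norm{\matrixInput\vec v}_2^2 = \sum_i \sigma_i^4 z_i^2$, whence $\E{}{\norm{\matrixInput\vec v}_2^2} = \sum_i \sigma_i^4$.

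For the sketch term, by hypothesis $\estFreq{\vec v} = \norm{\matrixSketch \vec v}_2^2$, so
\[
\E{\vec v}{\estFreq{\vec v}} = \E{z}{\Big\|\sum_{i} \sigma_i z_i\, \matrixSketch \vec v_i\Big\|_2^2} = \sum_{i,j}\sigma_i\sigma_j \E{}{z_i z_j}\,\langle \matrixSketch \vec v_i,\matrixSketch \vec v_j\rangle = \sum_{i\in[\dimension]} \sigma_i^2 \norm{\matrixSketch \vec v_i}_2^2 = \sum_{i\in[\dimension]} \sigma_i^2 \estFreq{\vec v_i},
\]
using the independence and unit variance of the $z_i$. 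Subtracting the two expressions yields
\[
\E{\vec v \sim N(0,\matrixInput^T\matrixInput)}{\norm{\matrixInput\vec v}_2^2 - \estFreq{\vec v}} = \sum_{i\in[\dimension]} \sigma_i^2\bigl(\sigma_i^2 - \estFreq{\vec v_i}\bigr) = \norm{\matrixInput}_F^2 \cdot \Err,
\]
which establishes the stated proportionality with constant $1/\norm{\matrixInput}_F^2$.

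There is no real obstacle here; the only subtlety is to recognize that the hypothesis $0\le \estFreq{\vec v}\le \freq{\vec v}$ is what lets us remove the absolute values, so that both $\Err$ and the Gaussian expectation become linear in $\estFreq{\vec v_i}$ and can be matched termwise through the SVD diagonalization.
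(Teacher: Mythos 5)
Your proof is correct and takes essentially the same approach as the paper: both compute the Gaussian expectation using the SVD of $\matrixInput$ and match it termwise against the weighted error, arriving at the proportionality constant $1/\norm{\matrixInput}_F^2$. The only cosmetic difference is that the paper evaluates $\E{\norm{\matrixInput\vec v}_2^2-\estFreq{\vec v}}$ via trace cyclicity and $\E{\vec v \vec v^T}=\matrixInput^T\matrixInput$, whereas you diagonalize the Gaussian explicitly as $\vec v=\sum_i\sigma_i z_i\vec v_i$; the two are the same computation in different notation.
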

The conditions stated in the lemma apply to the Frequent Directions algorithm~\citep{ghashami2016freqdir}, discussed later in the section. 
The lemma asserts that the weighted error is proportional to the expected difference between $\norm{\matrixInput \vec{v}}_2^2$ and $\estFreq{\vec{v}}$, where $\vec{v}$ is sampled from a multivariate normal distribution with mean $0$ and covariance matrix $\matrixInput^T \matrixInput$.
The proof is included in the \cref{appendix-prelims}.

\paragraph{Zipfian Distribution.}
We follows the assumption that in the frequency estimation problem, the element frequencies follow a Zipfian distribution~\citep{HsuIKV19, AamandCNSV23}, i.e., $\freq{i} \propto 1 / i \quad  \forall i \in [\domain]$.
Naturally, for the high dimensional counterpart, we assume that $\sigma_i^2 \propto 1 / i$.

\paragraph{Misra-Gries and Frequent Directions Algorithms.}
The Misra-Gries algorithm~\citep{MisraG82} is a well-known algorithm developed for frequency estimation in the streaming setting with limited memory. Its high-dimensional counterpart is the Frequent Directions algorithm~\citep{ghashami2016freqdir}.
We focus on presenting the Frequent Directions algorithm here along with a brief explanation of how Misra-Gries can be derived from it.

The algorithm is described in Algorithm~\ref{algo: Frequent Direction}. 
The matrix $\matrixSketch$ created during the initialization phase can be viewed as an array of $\numBuckets$ buckets, where each bucket can store a vector in $\R^\dimension$. 
As each input vector $\vecInput{i}$ arrives, the algorithm updates $\matrixSketch$ using an "update" procedure, inserting $\vecInput{i}^T$ into the first available bucket in $\matrixSketch$. 
If $\matrixSketch$ is full, additional operations are triggered (Lines \ref{line: frequent direction aggregation start} - \ref{line: frequent direction aggregation end}): essentially, the algorithm performs a singular value decomposition (SVD) of $\matrixSketch$, such that $\matrixSketch = \sum_{j \in [\dimension]} \sigma^{(i)}_j \cdot \vec{u}^{(i)}_j \PAREN{ \vec{v}^{(i)}_j}^T$, where $\vec{u}^{(i)}_j$ and $\vec{v}^{(i)}_j$ are the columns of matrices $\mbfU^{(i)}$ and $\mbfV^{(i)}$, respectively, and $\sigma^{(i)}_j$ are the diagonal entries of $\pmbSigma^{(i)}$. 
The algorithm then retains only the first $\tau - 1$ right singular vectors, $\vec{v}^{(i)}_1, \ldots, \vec{v}^{(i)}_{\tau - 1}$, scaled by the factors $\bigparen{(\sigma^{(i)}_1)^2 - (\sigma^{(i)}_\thresholdBucketsToRemove)^2 }^{1 / 2}, \ldots, \bigparen{(\sigma^{(i)}_{\thresholdBucketsToRemove - 1})^2 - (\sigma^{(i)}_\thresholdBucketsToRemove)^2 }^{1 / 2}$ respectively.

\begin{algorithm}[!th]
    \caption{Frequent Direction $\algoFD$}
    \label{algo: Frequent Direction}
    \begin{algorithmic}[1]
        \Procedure{Initialization}{}
            \State \textbf{Input:} sketch parameters $\numBuckets, \thresholdBucketsToRemove, \dimension \in \N^+$, s.t., $\thresholdBucketsToRemove \le \numBuckets \le \dimension$
            \State Reserve $\numBuckets \times \dimension$ space for an empty matrix $\mbfB$   
        \EndProcedure

        \vspace{1mm}
        \Procedure{Update}{}
            \State \textbf{Input:} an input vector $\vecInput{i} \in \R^\dimension$
                \State $\mbfB \leftarrow [\mbfB ; \vecInput{i}^T]$ matrix obtained by appending $\vecInput{i}^T$ after the last row $\mbfB$   
                \If{$\mbfB$ has $\numBuckets$ rows}
                \label{line: frequent direction aggregation start}
                    \State $\mbfU^{(i)}, \pmbSigma^{(i)}, \mbfV^{(i)} \leftarrow \text{SVD} (\mbfB)$ 
                    \State $\overline{\pmbSigma^{(i)}} \leftarrow \sqrt{ \max \{ { \pmbSigma^{(i)} }^2 - (\sigma^{(i)}_\thresholdBucketsToRemove)^2 \bm{I}, 0 \} }$, where $\sigma^{(i)}_\thresholdBucketsToRemove$ is the $\thresholdBucketsToRemove^{(th)}$ largest singular value
                    \State $\mbfB \leftarrow \overline{\pmbSigma^{(i)}} { \mbfV^{(i)} }^T$
                    \label{line: frequent direction aggregation end}
                \EndIf
        \EndProcedure

        \vspace{2mm}
        \Procedure{Return}{}
            \State {\bf return} $\mbfB$
        \EndProcedure
    \end{algorithmic}
\end{algorithm}

To reduce the algorithm to Misra-Gries, we make the following modifications: each input vector $\vecInput{i}$ is an element in $[\domain]$, and $\matrixSketch$ is replaced by a dynamic array with a capacity of $\numBuckets$. 
The SVD operation is replaced by an aggregation step, where identical elements in $\matrixSketch$ are grouped together, retaining only one copy of each along with its frequency in $\matrixSketch$. 
Consequently, lines~\ref{line: frequent direction aggregation start}–\ref{line: frequent direction aggregation end} now correspond to selecting the top-$(\thresholdBucketsToRemove - 1)$ elements and reducing their frequencies by $\freq{\thresholdBucketsToRemove}$
\footnote{A common implementation of Misra-Gries sets $\thresholdBucketsToRemove = \numBuckets$, and the aggregation step can be optimized using hash tables.}
.

Based on recent work by \citet{Liberty22}, Algorithm~\ref{algo: Frequent Direction} possesses the following properties. For completeness, we provide a brief proof in the Appendix.

\begin{proposition}[\citep{Liberty22}]
\label{lemma: property of frequent direction}

    Algorithm~\ref{algo: Frequent Direction} uses $O(\numBuckets \dimension)$ space, operates in $O\left(\frac{\numOfInput \numBuckets^2 \dimension}{\numBuckets + 1 - \thresholdBucketsToRemove}\right)$ time, and ensures that $\matrixInput^T \matrixInput - \matrixSketch^T \matrixSketch \succeq 0$. Moreover, it guarantees the following error bound: 
    \begin{equation}
        \label{eq: error of freq direction}
        \Vert  \matrixInput^T \matrixInput - \mbfB^T \mbfB  \Vert_2  \le \min_{k \in \IntSet{0}{\thresholdBucketsToRemove - 1}} \frac{\Vert \matrixInput - [\matrixInput]_k \Vert _F^2}{ \thresholdBucketsToRemove - k },
    \end{equation}
    where 
    $\norm{\cdot}_2$ is the spectral norm of a matrix, and 
    $[\matrixInput]_k$ is the best rank-$k$ approximation of $\matrixInput$.
\end{proposition}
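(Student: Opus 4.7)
The space claim is immediate since $\matrixSketch$ is stored as a single $\numBuckets \times \dimension$ matrix. For the running time, the observation to make is that after each shrinkage step $\matrixSketch$ has only $\thresholdBucketsToRemove - 1$ nonzero rows, so exactly $\numBuckets + 1 - \thresholdBucketsToRemove$ row appends are required before the next shrinkage is triggered; since each SVD on $\matrixSketch$ costs $O(\numBuckets^2 \dimension)$ and there are $O(\numOfInput / (\numBuckets + 1 - \thresholdBucketsToRemove))$ shrinkage events, the stated time bound follows.

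To handle the PSD claim and the error bound together, I would track $\mbfC_i := \matrixInput_{1:i}^T \matrixInput_{1:i} - \matrixSketch_i^T \matrixSketch_i$ over time. Row appends leave $\mbfC$ invariant since both $\matrixInput^T \matrixInput$ and $\matrixSketch^T \matrixSketch$ gain the same rank-one term $\vecInput{i+1}\vecInput{i+1}^T$. At shrinkage step $i$, substituting the SVD $\matrixSketch = \mbfU^{(i)} \pmbSigma^{(i)} (\mbfV^{(i)})^T$ and the definition of $\overline{\pmbSigma^{(i)}}$, the increment becomes $\Delta_i = \mbfV^{(i)} \mbfD_i (\mbfV^{(i)})^T$, where $\mbfD_i$ is diagonal with $j$-th entry $(\sigma_\tau^{(i)})^2$ for $j \le \thresholdBucketsToRemove$ and $(\sigma_j^{(i)})^2$ for $j > \thresholdBucketsToRemove$. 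All entries are nonnegative, yielding $\mbfC_n = \sum_i \Delta_i \succeq 0$.

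For the error bound, the plan is to sandwich $\Tr(\mbfC_n)$ between two quantities involving $\norm{\mbfC_n}_2$. Since singular values are nonincreasing, the largest diagonal entry of $\mbfD_i$ is $(\sigma_\tau^{(i)})^2$, so $\norm{\Delta_i}_2 = (\sigma_\tau^{(i)})^2$, while $\Tr(\mbfD_i) \ge \thresholdBucketsToRemove (\sigma_\tau^{(i)})^2$. Summing and applying the triangle inequality for the spectral norm gives
\[
    \Tr(\mbfC_n) \;=\; \sum_i \Tr(\Delta_i) \;\ge\; \thresholdBucketsToRemove \sum_i \norm{\Delta_i}_2 \;\ge\; \thresholdBucketsToRemove \norm{\mbfC_n}_2.
\]
For the upper bound on the trace, let $\vec{v}_1, \ldots, \vec{v}_\dimension$ and $\sigma_1 \ge \cdots \ge \sigma_\dimension$ denote the right singular vectors and singular values of $\matrixInput$. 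Using $\norm{\matrixSketch_n \vec{v}_j}_2^2 = \sigma_j^2 - \vec{v}_j^T \mbfC_n \vec{v}_j \ge \sigma_j^2 - \norm{\mbfC_n}_2$, summing over $j = 1, \ldots, k$, invoking $\sum_{j \le k} \norm{\matrixSketch_n \vec{v}_j}_2^2 \le \norm{\matrixSketch_n}_F^2$ by orthonormality, and writing $\Tr(\mbfC_n) = \norm{\matrixInput}_F^2 - \norm{\matrixSketch_n}_F^2$, I would obtain
\[
    \Tr(\mbfC_n) \;\le\; \norm{\matrixInput - [\matrixInput]_k}_F^2 + k \, \norm{\mbfC_n}_2.
\]
Combining the two bounds for any $k \in \IntSet{0}{\thresholdBucketsToRemove - 1}$ rearranges to $(\thresholdBucketsToRemove - k) \norm{\mbfC_n}_2 \le \norm{\matrixInput - [\matrixInput]_k}_F^2$, as required.

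The main obstacle is the coupling between the spectral norm and the trace across all shrinkage steps. The key insight to exploit is that the specific form of the shrinkage operation forces the local inequality $\Tr(\Delta_i) \ge \thresholdBucketsToRemove \norm{\Delta_i}_2$ at every step, and this per-step inequality composes through the subadditivity of the spectral norm to produce the global bound $\Tr(\mbfC_n) \ge \thresholdBucketsToRemove \norm{\mbfC_n}_2$. Paired with the Bessel-style upper bound on the trace, this is exactly what cancels the $\norm{\mbfC_n}_2$ term on both sides and yields the clean rank-$k$ error bound.
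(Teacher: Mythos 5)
Your proof is correct and uses essentially the same ingredients as the paper: the per-shrinkage PSD increments $\Delta_i = \mbfV^{(i)}\mbfD_i(\mbfV^{(i)})^T$, the two per-step bounds $\norm{\Delta_i}_2 = (\sigma^{(i)}_\tau)^2$ and $\Tr(\Delta_i) \ge \tau(\sigma^{(i)}_\tau)^2$, subadditivity of the spectral norm, and the split of the trace into the contributions of the top-$k$ right singular vectors of $\matrixInput$ versus their complement. The only difference is organizational: the paper carries the $k$-split at the per-step level (proving $\norm{\Delta_i}_2 \le \frac{1}{\tau-k}\Tr(\overline{\bm{P}_k}^T\Delta_i\overline{\bm{P}_k})$ and summing), whereas you first establish the global lower bound $\Tr(\mbfC_n) \ge \tau\norm{\mbfC_n}_2$ and then apply the split once to $\Tr(\mbfC_n)$ via Bessel on $\norm{\matrixSketch_n}_F^2$; these are equivalent rearrangements of the same inequalities.
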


Note that the error in this context is defined by the maximum distortion rather than a weighted one. 
If $\thresholdBucketsToRemove =(1-\Omega(1))\numBuckets$, the running time reduces to $O(\numOfInput \numBuckets \dimension)$. Furthermore, for $k = 0$, the error bound simplifies to the original bound established by \citet{Liberty13}.
These bounds can be adapted for the Misra-Gries algorithm, where $\matrixInput^T \matrixInput - \matrixSketch^T \matrixSketch \succeq 0$ implies that the algorithm never overestimates element frequencies. 
Additionally, when implemented with a hash table, the running time for Misra-Gries can be further optimized to $O(n)$.

\section{Learning-augmented Frequent Direction}\label{sec:learning-based}

We aim to augment the Frequent Directions algorithm with learned predictions. The framework is presented in Algorithm~\ref{algo: Learning-Based Frequent Direction}. Given space for storing $\numBuckets$ vectors in $\R^\dimension$, the algorithm reserves $\numLearnedBuckets \leq \numBuckets$ slots for the predicted "frequent directions" $\vec{w}_1, \ldots, \vec{w}_\numLearnedBuckets$, which are orthonormal vectors returned by a learned oracle.
The algorithm then initializes two seperate instances of Algorithm~\ref{algo: Frequent Direction}, denoted by $\algoFD^{\downarrow}$ and $\algoFD^{\bot}$, with space usage $\numLearnedBuckets$ and $\numBuckets - 2 \cdot \numLearnedBuckets$, respectively. 

After initialization, when an input vector $\vecInput{i}$ arrives, the algorithm decomposes it into two components, $\vecInput{i} = \vecInput{i, \downarrow} + \vecInput{i, \bot}$, where $\vecInput{i, \downarrow}$ is the projection of $\vecInput{i}$ onto the subspace spanned by $\vec{w}_1, \ldots, \vec{w}_\numLearnedBuckets$, and $\vecInput{i, \bot}$ is the component orthogonal to this subspace. 
The vector $\vecInput{i, \downarrow}$ is passed to $\algoFD^{\downarrow}$, while $\vecInput{i, \bot}$ is passed to $\algoFD^{\bot}$.
Intuitively, $\algoFD^{\downarrow}$ is responsible to compute a sketch matrix for the subspace predicted by the learned oracle, whereas $\algoFD^{\bot}$ is responsible to compute a sketch matrix for the orthogonal subspace. 
When the algorithm terminates, the output matrix is obtained by stacking the matrices returned by $\algoFD^{\downarrow}$ and $\algoFD^{\bot}$. To adapt this framework for the learning-augmented Misra-Gries algorithm, $\algoFD^{\downarrow}$ corresponds to an array to record the exact counts of the predicted elements and $\algoFD^{\bot}$ corresponds to a Misra-Gries algorithm over all other elements.

\begin{algorithm}[!ht]
    \caption{Learning-Augmented Frequent Direction $\algoLFD$}
    \label{algo: Learning-Based Frequent Direction}
    \begin{algorithmic}[1]
        \Procedure{Initialization}{}
            \State \textbf{Input:} sketch parameters $\numBuckets, \dimension \in \N^+$; learned oracle parameter $\numLearnedBuckets$ s.t., $\numLearnedBuckets \le \numBuckets$
            \State \parbox[t]{.95\linewidth}{%
                Let $\mbfP_{H} = \BRACKET{\vec{w}_1 \, \mid \, \ldots \, \mid \, \vec{w}_\numLearnedBuckets} \in \R^{\dimension \times \numLearnedBuckets}$ be the matrix consisting of the $\numLearnedBuckets$ orthonormal columns, which are the frequent directions predicted by the learned oracle
            }
            \State Initialize an instance of Algorithm~\ref{algo: Frequent Direction}: $\algoFD^{\downarrow}.initialization (\numLearnedBuckets, 0.5 \cdot \numLearnedBuckets, \dimension)$
            \State Initialize an instance of Algorithm~\ref{algo: Frequent Direction}: $\algoFD^{\bot}.initialization (\numBuckets - 2 \cdot \numLearnedBuckets, 0.5 \cdot (\numBuckets - 2 \cdot \numLearnedBuckets), \dimension)$
        \EndProcedure

        \vspace{1mm}
        \Procedure{Update}{}
            \State \textbf{Input:} an input vector $\vecInput{i}$
            \State $\vecInput{i, \downarrow} \leftarrow \mbfP_H \mbfP_H^T \vecInput{i}$
            \State $\vecInput{i, \bot} \leftarrow \vecInput{i} - \vecInput{i, \downarrow}$
            \State $\algoFD^{\downarrow}.update(\vecInput{i, \downarrow})$
            \State $\algoFD^{\bot}.update(\vecInput{i, \bot})$
        \EndProcedure

        \vspace{1mm}
        \Procedure{Return}{}
            \State $\matrixSketch^{\downarrow} \leftarrow \algoFD^{\downarrow}.return()$
            \State $\matrixSketch^{\bot} \leftarrow \algoFD^{\bot}.return()$
            \State $\matrixSketch \leftarrow \BRACKET{ 
            \matrixSketch^{\downarrow}; \matrixSketch^{\bot}}^T$
            \State {\bf return} $\matrixSketch$
        \EndProcedure
    \end{algorithmic}
\end{algorithm}

\subsection{Theoretical Analysis}

We present the theoretical analysis for the (learned) Misra-Gries and (learned) Frequent Directions algorithms under a Zipfian distribution. The error bounds for the (learned) Misra-Gries algorithms are detailed in Theorems~\ref{theorem: Expected Error of the Misra-Gries} and~\ref{theorem: Expected Error of the learned Misra-Gries}. 
The corresponding results for the (learned) Frequent Directions algorithm are provided in Theorems~\ref{theorem: Expected Error of the FrequentDirections} and~\ref{theorem: Expected Error of the Learned FrequentDirections}.
The complete proofs for are provided in Appendix~\ref{appendix: analysis of Misra-Gries} and Appendix~\ref{appendix: analysis of frequent directions}, respectively. 

Due to space constraints, we provide sketch proofs for the (learned) Misra-Gries algorithm only. 
The proofs for the (learned) Frequent Directions algorithm follow similar techniques. Since the structure of Misra-Gries is simpler, analyzing its bounds first offers clearer insights into the problems.

\begin{theorem}[Expected Error of the Misra-Gries Algorithm]
    \label{theorem: Expected Error of the Misra-Gries}
    Given a stream of $\numOfInput$ elements from a domain $[\domain]$, where each element $i$ has a frequency $\freq{i} \propto 1 / i$ for $i \in [\domain]$, the Misra-Gries algorithm using $\numBuckets$ words of memory achieves expected error of 
$  \Err \in \Theta \PAREN{
                \PAREN{ \ln \frac{\numBuckets}{\ln \frac{\domain}{\numBuckets}}
                } \cdot \frac{\ln \frac{\domain}{\numBuckets}}{(\ln \domain)^2} \cdot \frac{\numOfInput}{\numBuckets}
            }.~\refstepcounter{equation}(\theequation)  \label{ineq: error of misra-gries}$
\end{theorem}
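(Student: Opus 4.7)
The plan is to combine the per-element bound from Proposition~\ref{lemma: property of frequent direction} with a heavy/light split of $[\domain]$, proving matching $O$ and $\Omega$ bounds on the weighted error.

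Specializing Proposition~\ref{lemma: property of frequent direction} to Misra-Gries (where $\matrixInput^T\matrixInput$ and $\matrixSketch^T\matrixSketch$ are diagonal matrices of true and estimated frequencies, so the spectral norm reduces to $\max_i |\freq{i} - \estFreq{i}|$, and $\|\matrixInput - [\matrixInput]_k\|_F^2 = \sum_{j > k} \freq{j}$ when frequencies are sorted non-increasingly), combined with $\estFreq{i} \le \freq{i}$ from the positive semi-definiteness statement, yields the per-element bound $0 \le \freq{i} - \estFreq{i} \le \epsilon^\star := \min_{0 \le k < \thresholdBucketsToRemove} \bigl( \sum_{j > k} \freq{j}\bigr)/(\thresholdBucketsToRemove - k)$. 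Under the Zipfian assumption normalized to $\sum_j \freq{j} = \numOfInput$, we have $\sum_{j > k} \freq{j} = \Theta(\numOfInput \ln(\domain/k)/\ln \domain)$, and a routine calculus minimization of $\ln(\domain/k)/(\thresholdBucketsToRemove - k)$ pinpoints the optimal $k \asymp \thresholdBucketsToRemove/\ln(\domain/\thresholdBucketsToRemove)$, giving $\epsilon^\star = \Theta(\numOfInput \ln(\domain/\numBuckets)/(\numBuckets \ln \domain))$ (taking $\thresholdBucketsToRemove = \Theta(\numBuckets)$).

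For the upper bound, I would split $[\domain]$ using the threshold $i^\star := \numBuckets/\ln(\domain/\numBuckets)$, at which $\freq{i} \asymp \epsilon^\star$. Heavy indices $i \le i^\star$ contribute at most $\epsilon^\star \sum_{i \le i^\star} \freq{i}/\numOfInput = \Theta(\epsilon^\star \ln(i^\star)/\ln \domain)$, which simplifies to the claimed upper bound $\Theta(\numOfInput \ln(\numBuckets/\ln(\domain/\numBuckets)) \cdot \ln(\domain/\numBuckets)/(\numBuckets (\ln \domain)^2))$. Light indices $i > i^\star$ are handled by the trivial bound $|\freq{i} - \estFreq{i}| \le \freq{i}$, contributing $\sum_{i > i^\star} \freq{i}^2/\numOfInput = O(\numOfInput/(i^\star (\ln \domain)^2))$, smaller by a factor of $\ln(\numBuckets/\ln(\domain/\numBuckets))$ and therefore dominated.

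The main obstacle is the matching $\Omega$ bound, which requires exhibiting a concrete ordering of the Zipfian stream on which Misra-Gries actually incurs $\freq{i} - \estFreq{i} = \Omega(\epsilon^\star)$ for a constant fraction of moderately heavy indices $i \in [c \cdot i^\star, i^\star]$. I would consider an ordering in which tail items $j > i^\star$ (carrying tail mass $\Theta(\numOfInput \ln(\domain/i^\star)/\ln \domain)$) are interleaved round-robin to drive decrement events evenly across the moderately heavy counters, then apply an amortized accounting relating total tail mass to total decrements on these counters. Summing $\freq{i}/\numOfInput \cdot \Omega(\epsilon^\star)$ over $i \in [c \cdot i^\star, i^\star]$ restores the $\ln(\numBuckets/\ln(\domain/\numBuckets))$ factor. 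The delicate point is ensuring the decrements spread across moderately heavy counters rather than being absorbed entirely by the very heaviest, which would give a weaker lower bound.
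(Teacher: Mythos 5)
Your upper bound argument matches the paper's closely: you specialize Proposition~\ref{lemma: property of frequent direction} to get a uniform per-element error bound $\epsilon^\star$, split $[\domain]$ at $i^\star \asymp \numBuckets/\ln(\domain/\numBuckets)$, and bound the heavy and light contributions separately, with the light part dominated. The only cosmetic difference is that you optimize over $k$ by calculus while the paper simply substitutes $k = \numBuckets/2$; both give the same $\epsilon^\star = \Theta\bigl(\numOfInput \ln(\domain/\numBuckets)/(\numBuckets \ln \domain)\bigr)$.

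Your lower bound plan has two issues. The more serious one is quantitative: you propose to sum $\freq{i}/\numOfInput \cdot \Omega(\epsilon^\star)$ over $i \in [c \cdot i^\star, i^\star]$ for a constant $c$, but $\sum_{i = c i^\star}^{i^\star} 1/i = \ln(1/c) = \Theta(1)$, giving only $\Theta(\epsilon^\star/\ln \domain) = \Theta\bigl(\numOfInput \ln(\domain/\numBuckets)/(\numBuckets (\ln \domain)^2)\bigr)$ --- exactly the target bound but without the $\ln(\numBuckets/\ln(\domain/\numBuckets))$ factor, so this does not ``restore'' it. To recover that factor you must establish $\freq{i} - \estFreq{i} = \Omega(\epsilon^\star)$ for all $i \le i^\star$ and sum over the full range, so that the harmonic sum contributes $\Theta(\ln i^\star)$.

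The second issue is that your stated ``delicate point'' --- that decrements might be absorbed entirely by the heaviest counters --- is not a real obstacle in Misra-Gries: in the aggregation step every retained counter is reduced by the same amount (the $\thresholdBucketsToRemove$-th largest count), so a single decrement event hits the heaviest and the moderately heavy counters identically. The paper exploits this with a cleaner construction than your round-robin interleaving: first insert all occurrences of elements $1, \ldots, t$ with $t \asymp i^\star$ (so the heavy counters are in place with full counts from the outset), then repeatedly feed a tail element not currently in the sketch. Each fill of the array triggers an aggregation that subtracts one from every one of the $t$ heavy counters, and dividing the tail mass $\sum_{j > \numBuckets + t} \freq{j} = \Omega(\numOfInput \ln(\domain/\numBuckets)/\ln \domain)$ by $\numBuckets$ shows each heavy counter loses $\Omega(\epsilon^\star)$. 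Summing the weighted error over all $i \le t$ then yields the matching $\Omega$ bound.
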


{\it Proof Sketch. }
    At a high level, we first derive an upper bound on the maximum estimation error using Fact~\ref{lemma: property of frequent direction} under the Zipfian distribution assumption. 
    We then partition the elements into two groups: those with frequencies exceeding this error and those that do not. 
    For the first group, the estimation error for each element is bounded by the maximum error. 
    For the second group, since Misra-Gries never overestimates their frequencies, the error is limited to the actual frequency of each element.
    For each group, we can show that the weighted error is bounded above by the RHS of~\eqref{ineq: error of misra-gries}. For the lower bound, we construct an adversarial input sequence such that the weighted error of elements in the first group indeed matches the upper bound, proving that the bound is tight.
    \hfill $\square$

\vspace{1mm}
\begin{theorem}[Expected Error of the Learned Misra-Gries Algorithm]
    \label{theorem: Expected Error of the learned Misra-Gries}
    Given a stream of $\numOfInput$ elements from a domain $[\domain]$, where each element $i$ has a frequency $\freq{i} \propto 1 / i$ for $i \in [\domain]$, and assuming a perfect oracle, the learning-augmented Misra-Gries algorithm using $\numBuckets$ words of memory achieves expected error of $  \Err \in \Theta \PAREN{
            \frac{1}{\numBuckets} 
            \cdot
            \frac{\numOfInput}{(\ln \domain)^2}
        }.$
\end{theorem}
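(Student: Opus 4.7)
The plan is to exploit the decomposition built into Algorithm~\ref{algo: Learning-Based Frequent Direction}: with a perfect oracle, the top-$\numLearnedBuckets$ Zipfian items are tracked exactly by $\algoFD^{\downarrow}$ and contribute $0$ to the weighted error. The analysis therefore reduces to bounding the error that $\algoFD^{\bot}$, a plain Misra-Gries instance with $m' = \numBuckets - 2\numLearnedBuckets$ counters, incurs on the residual substream containing only the items of rank $\numLearnedBuckets + 1, \ldots, \dimension$. I would pick $\numLearnedBuckets = \Theta(\numBuckets)$ (say $\numLearnedBuckets = \numBuckets/4$) so that $m' = \Theta(\numBuckets)$ as well.

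\textbf{Upper bound.} First I would apply Proposition~\ref{lemma: property of frequent direction} with $k=0$ to $\algoFD^{\bot}$ acting on the residual substream. The substream has total mass $n_{\text{tail}} = \sum_{i>\numLearnedBuckets} \freq{i} = \Theta(n \ln(\dimension/\numBuckets)/\ln \dimension)$ under the Zipfian assumption, so the worst-case per-element error is $\epsilon = O(n_{\text{tail}}/m') = O(n \ln(\dimension/\numBuckets)/(\numBuckets \ln \dimension))$. Combined with the no-overestimation property (the PSD inequality in Proposition~\ref{lemma: property of frequent direction} specializes to $\estFreq{i} \le \freq{i}$), we get $|\freq{i} - \estFreq{i}| \le \min(\epsilon, \freq{i})$. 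The crucial observation is that because $\numLearnedBuckets = \Theta(\numBuckets)$, every residual element already satisfies $\freq{i} \le \freq{\numLearnedBuckets+1} = O(n/(\numBuckets \ln \dimension)) \le \epsilon$, so the $\freq{i}$ branch dominates uniformly over the tail. Substituting into the weighted error yields
\[
    \Err \;\le\; \sum_{i > \numLearnedBuckets} \frac{\freq{i}}{n} \cdot \freq{i} \;=\; \frac{n}{(\ln \dimension)^2} \sum_{i > \numLearnedBuckets} \frac{1}{i^2} \;=\; O\PAREN{\frac{1}{\numBuckets} \cdot \frac{n}{(\ln \dimension)^2}},
\]
where the last step uses $\sum_{i > \numLearnedBuckets} 1/i^2 = O(1/\numLearnedBuckets) = O(1/\numBuckets)$.

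\textbf{Lower bound.} For the matching $\Omega$-bound, I would construct an adversarial order of the residual stream that forces $\algoFD^{\bot}$ to output estimate $0$ for every element of rank larger than $\numLearnedBuckets + m'$. Concretely, the plan is to arrange occurrences of "deep tail" items so that each of their counters is always canceled by the Misra-Gries decrement step before the stream terminates (this is standard: feed all copies of the deep tail in rounds that are interleaved with enough distinct fresh elements to force decrements). Once these items have $\estFreq{i} = 0$, their contribution to the absolute error is the full $\freq{i}$, yielding
\[
    \Err \;\ge\; \sum_{i > \numLearnedBuckets + m'} \frac{\freq{i}^2}{n} \;=\; \Omega\PAREN{\frac{n}{(\numLearnedBuckets + m')(\ln \dimension)^2}} \;=\; \Omega\PAREN{\frac{1}{\numBuckets}\cdot\frac{n}{(\ln \dimension)^2}}.
\]

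\textbf{Main obstacle.} The upper bound is mostly a plug-in of Proposition~\ref{lemma: property of frequent direction} once one notices that $\numLearnedBuckets = \Theta(\numBuckets)$ drives the tail frequencies safely below $\epsilon$, so no second regime for "heavy residual" items is needed (this is the improvement over the unlearned bound of Theorem~\ref{theorem: Expected Error of the Misra-Gries}). The more delicate step will be the lower-bound construction: choosing an adversarial schedule that simultaneously respects the Zipfian frequency profile and guarantees that $\Omega(\dimension)$ of the deepest tail elements are reduced to counter-zero at termination. I expect this to be routine but the most fiddly piece, closely mirroring the analogous adversarial construction used in the proof of Theorem~\ref{theorem: Expected Error of the Misra-Gries}.
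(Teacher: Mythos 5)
Your argument lands in essentially the same place as the paper's, but with an unnecessary detour. The paper goes straight to the point: with a perfect oracle the top $\numLearnedBuckets$ elements are counted exactly, and on the tail Misra-Gries never overestimates, so $|\freq{i}-\estFreq{i}|\le\freq{i}$; summing $\freq{i}^2/n$ over $i>\numLearnedBuckets$ gives $O(n/(\numBuckets\,(\ln d)^2))$. You additionally invoke Proposition~\ref{lemma: property of frequent direction} to compute $\epsilon$ and then argue $\freq{i}\le\epsilon$ to discard it; this side step is not needed (and your inequality $\freq{\numLearnedBuckets+1}\le\epsilon$ is delicate when $\numBuckets$ approaches $\dimension$, though it does not affect the final bound since the $\freq{i}$ branch of the min is valid unconditionally). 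Both roads reach the same destination.

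\textbf{Lower bound.} Here is where the approaches genuinely diverge, and yours is heavier than it needs to be. You propose to exhibit an adversarial schedule that forces $\estFreq{i}=0$ for \emph{every} element of rank $>\numLearnedBuckets+m'$. This is a strictly stronger guarantee than required and is not obviously achievable: as you interleave the deep tail, some of those elements can stick in the sketch while elements of rank in $[\numLearnedBuckets+1,\numLearnedBuckets+m']$ get decremented out, so the state at termination is hard to control exactly. The paper sidesteps this with a one-line pigeonhole: the $\algoFD^{\bot}$ sketch holds at most $m'$ counters, so among the $2m'$ residual elements of rank $\numLearnedBuckets+1,\ldots,\numLearnedBuckets+2m'$, at least $m'$ end with $\estFreq{i}=0$, contributing
\[
\sum_{\text{those }i}\frac{\freq{i}^2}{n}\;\ge\; m'\cdot\frac{\freq{\numLearnedBuckets+2m'}^2}{n}\;\in\;\Omega\!\PAREN{\frac{1}{\numBuckets}\cdot\frac{n}{(\ln d)^2}}.
\]
No adversarial scheduling of the deep tail is needed; in fact no adversarial ordering is needed at all for this bound. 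If you replace your "all deep-tail elements become zero" claim with this counting argument over a window of size $2m'$, your calculation gives the same $\Omega(\cdot)$ bound and the proof closes cleanly. I would recommend that simplification; it eliminates the one piece you yourself flagged as fiddly.
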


Here, a perfect oracle is defined as one that makes no mistakes in predicting the top frequent elements. 
The scenario where the learning oracle is not perfect will be discussed later in this section.

{\it Proof Sketch. }
    Under the assumption of access to a perfect oracle, the algorithm does not make estimation error on the top-$\numLearnedBuckets$ elements. 
    For the remaining elements, the Misra-Gries algorithm never overestimates its frequency: $\estFreq{i} \in [0, \freq{i}].$
    Hence the weighted error is at most
    \begin{align}
        \label{ineq: error of learned misra-gries}
        \Err = \sum_{i = \numLearnedBuckets + 1}^{\domain} \frac{\freq{i}}{\numOfInput} \cdot \card{\estFreq{i} - \freq{i}}
            \le \sum_{i = \numLearnedBuckets + 1}^{\domain} \frac{1}{i \cdot \ln d} \cdot \frac{n}{i \cdot \ln \domain} 
            \in O \PAREN{
                \frac{1}{\numBuckets} \cdot \frac{\numOfInput}{(\ln \domain)^2}
            }.
    \end{align}
    
    The lower bound is obtained using a similar technique as in Theorem~\ref{theorem: Expected Error of the Misra-Gries}, by constructing an input sequence such that the error incurred by the non-predicted elements matches the upper bound.
    \hfill $\square$

    {\bf Comparison with Previous Work. }
    This guarantee matches that of the learning-augmented frequency estimation algorithm of~\cite{AamandCNSV23} but with significant simplifications. \cite{AamandCNSV23} also reserve separate buckets for the predicted heavy hitters, but to get a robust algorithm in case of faulty predictions, they maintain $O(\log \log n)$ additional CountSketch tables for determining if an arriving element (which is not predicted to be heavy) is in fact a heavy hitter with reasonably high probability. If these tables deem the element light, they output zero as the estimate, and otherwise, they use the estimate of a separate CountSketch table. In contrast, our algorithm uses just a single implementation of the simple and classic Misra-Gries algorithm. This approach has the additional advantage of being deterministic in contrast to CountSketch, which is randomized.

\paragraph{Robustness and Resilience to Prediction Errors.} 
We note that the learned Misra-Gries algorithm is \emph{robust} in the sense that it essentially retains the error bounds of its classic counterpart regardless of predictor quality. Indeed, the learned version allocates $m/2$ space to maintain exact counts of elements predicted to be heavy, and uses a classic Misra-Gries sketch of size $m/2$ for the remaining elements. Thus, it incurs no error on the elements predicted to be heavy and on the elements predicted to be light, we get the error guarantees of classic Misra-Gries (using space $m/2$ instead of $m$).
It is further worth noting that the error bound of Theorem~\ref{theorem: Expected Error of the learned Misra-Gries} holds even for non-perfect learning oracles or predictions as long as their accuracy is high enough. 
Specifically, assume that the algorithm allocates some $\numLearnedBuckets \in \Omega (\numBuckets)$ buckets for the learned oracle. 
Further, assume that only the top $c \cdot \numLearnedBuckets$ elements with the highest frequencies are included among the $\numLearnedBuckets$ heavy hitters predicted by the oracle, for some $c \leq 1$ (e.g., $c = 0.1$). 
In this case, Inequality~\eqref{ineq: error of learned misra-gries} still holds: the summation now starts from $c \cdot \numLearnedBuckets + 1$ instead of $\numLearnedBuckets + 1$, which does not affect the asymptotic error.

The corresponding theorems for (learned) Frequent Directions are below with proofs in Appendix~\ref{appendix: analysis of frequent directions}.

\begin{theorem}[Expected Error of the \FrequentDirections Algorithm]
    \label{theorem: Expected Error of the FrequentDirections}
    Assume that the singular values of the input matrix $\mbfA$ to the Algorithm~\ref{algo: Frequent Direction} satisfies $\sigma_i^2 \propto \frac{1}{i}$, for all $i \in [d]$, it achieves an expected error of $   \Err\PAREN{\algoFD} \in \Theta \PAREN{
                \PAREN{ \ln \frac{\numBuckets}{\ln \frac{2d}{\numBuckets}}
                } \cdot \frac{\ln \frac{d}{\numBuckets}}{(\ln d)^2} \cdot \frac{\norm{\mbfA}_F^2}{\numBuckets}
            }.$

\end{theorem}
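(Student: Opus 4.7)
The plan is to follow the template of the Misra--Gries proof sketch (Theorem~\ref{theorem: Expected Error of the Misra-Gries}), lifted to the matrix setting through Proposition~\ref{lemma: property of frequent direction}. Two ingredients from that proposition do the work. First, $\mbfA^T\mbfA \succeq \mbfB^T\mbfB$, so $0 \le \estFreq{\vec{v}_i} = \norm{\mbfB\vec{v}_i}_2^2 \le \sigma_i^2$; absolute values drop from the weighted error, and every term $|\sigma_i^2 - \estFreq{\vec{v}_i}|$ is simultaneously bounded by $\sigma_i^2$ (``never overestimate'') and by the spectral-norm error $\norm{\mbfA^T\mbfA - \mbfB^T\mbfB}_2$, since $\vec{v}_i$ is a unit vector. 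Second, running the algorithm with $\thresholdBucketsToRemove = \Theta(\numBuckets)$ (say $\thresholdBucketsToRemove = \numBuckets/2$), this spectral error is at most $E \doteq \min_{0 \le k < \thresholdBucketsToRemove}\norm{\mbfA - [\mbfA]_k}_F^2/(\thresholdBucketsToRemove - k)$. Under the Zipfian assumption $\sigma_i^2 \propto 1/i$, so $\norm{\mbfA}_F^2 = \Theta(c\ln\dimension)$ and $\norm{\mbfA - [\mbfA]_k}_F^2 = \Theta(c\ln(\dimension/k))$, the calculus first-order condition $\thresholdBucketsToRemove/k = 1 + \ln(\dimension/k)$ gives the optimizer $k^* \approx \thresholdBucketsToRemove/\ln(\dimension/\thresholdBucketsToRemove)$, yielding $E = \Theta\!\left(\norm{\mbfA}_F^2 \ln(\dimension/\numBuckets)/(\numBuckets \ln\dimension)\right)$.

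For the upper bound I partition the singular directions into \emph{heavy} indices $i \le h$ where $\sigma_i^2 > E$ and \emph{light} indices $i > h$; solving $c/i = E$ yields $h = \Theta(\numBuckets/\ln(2\dimension/\numBuckets))$. Applying $|\sigma_i^2 - \estFreq{\vec{v}_i}| \le E$ on the heavy part gives
\begin{equation*}
\sum_{i=1}^h \frac{\sigma_i^2}{\norm{\mbfA}_F^2} \cdot E \;=\; \frac{E}{\norm{\mbfA}_F^2}\sum_{i=1}^h \frac{c}{i} \;=\; \Theta\!\left(\ln\frac{\numBuckets}{\ln(2\dimension/\numBuckets)} \cdot \frac{\ln(\dimension/\numBuckets)}{(\ln\dimension)^2} \cdot \frac{\norm{\mbfA}_F^2}{\numBuckets}\right),
\end{equation*}
which is exactly the target. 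Applying $|\sigma_i^2 - \estFreq{\vec{v}_i}| \le \sigma_i^2$ on the light part gives $\sum_{i>h} \sigma_i^4/\norm{\mbfA}_F^2 = O(c^2/(h \ln\dimension)) = O(\norm{\mbfA}_F^2 \ln(\dimension/\numBuckets)/(\numBuckets(\ln\dimension)^2))$, a factor $\ln(\numBuckets/\ln(2\dimension/\numBuckets))$ smaller, so the heavy contribution dominates.

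For the lower bound I will construct an adversarial input whose right singular vectors are standard basis vectors (so $\vec{v}_i = \vec{e}_i$) and whose singular values satisfy $\sigma_i^2 = c/i$, presented in an order engineered so that after each SVD/shrink step in Algorithm~\ref{algo: Frequent Direction} the residual spectral error saturates the bound of Proposition~\ref{lemma: property of frequent direction} on precisely the $h$ heaviest directions. This would force a per-heavy-direction error of $\Omega(E)$, and the weighted sum would match the upper bound term-by-term. I expect this to be the main obstacle: unlike Misra--Gries, whose adversarial stream is a simple counting argument, here I need to control the interaction between directions under repeated SVDs and ensure the worst case really aligns with the Zipfian heavy directions. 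The plan is to adapt the extremal construction underlying the classical Frequent Directions lower bounds of \citet{woodruff2014lowranklowerbound} and \citet{ghashami2016freqdir} to the Zipfian spectrum, then verify the resulting weighted sum by a direct computation analogous to the heavy-indices estimate above. Full details are deferred to Appendix~\ref{appendix: analysis of frequent directions}.
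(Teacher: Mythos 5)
Your upper bound is the paper's argument. You invoke \cref{lemma: property of frequent direction} for $\matrixSketch^T\matrixSketch \preceq \matrixInput^T\matrixInput$ (so each per-direction error is nonnegative and at most $\sigma_i^2$) together with the spectral-norm bound, set $\thresholdBucketsToRemove = \Theta(\numBuckets)$, evaluate the $\min_k$ under the Zipfian spectrum to obtain $E = \Theta\PAREN{\norm{\matrixInput}_F^2 \ln(\dimension/\numBuckets)/(\numBuckets \ln \dimension)}$, split indices at the threshold $h = \Theta(\numBuckets/\ln(2\dimension/\numBuckets))$ where $\sigma_i^2$ equals $E$, and bound the heavy part by $E$ per term and the light part by $\sigma_i^2$ per term. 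The paper does the same, only plugging in $k = \numBuckets/2$ rather than optimizing $k$ by calculus; both give the same $\Theta$.

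For the lower bound you correctly start by restricting to streams of standard basis vectors so that $\vec{v}_i = \vec{e}_i$ and $\sigma_i^2$ is simply the frequency of $\vec{e}_i$, but you then anticipate an obstacle that is not there. You write that, unlike Misra--Gries, you must ``control the interaction between directions under repeated SVDs,'' and propose adapting the worst-case constructions of \citet{woodruff2014lowranklowerbound} and \citet{ghashami2016freqdir}. The observation that closes the gap is that on basis-vector streams, \cref{algo: Frequent Direction} \emph{is} Misra--Gries: if every row of the working sketch is a scalar multiple of some $\vec{e}_j$, its SVD has singular vectors among $\set{\vec{e}_j}$ and merely aggregates repeated directions, and the shrink step subtracts $\PAREN{\sigma^{(i)}_{\thresholdBucketsToRemove}}^2$ from every squared singular value --- exactly the Misra--Gries decrement. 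There is no cross-direction interaction to control. Hence the adversarial ordering already built in the proof of \cref{theorem: Expected Error of the Misra-Gries} (feed the top $t = \Theta(\numBuckets/\ln(2\dimension/\numBuckets))$ elements first, then repeatedly feed an element not currently stored to force decrements) applies verbatim and gives the matching $\Omega$. Your route via the Woodruff--Ghashami constructions would likely also succeed but is far heavier than what is needed, and as written it remains a plan rather than a proof, so the lower bound is not yet established.
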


\begin{theorem}[Expected Error of the Learned \FrequentDirections Algorithm]
    \label{theorem: Expected Error of the Learned FrequentDirections}
    Assume that the singular values of the input matrix $\mbfA$ to Algorithm~\ref{algo: Learning-Based Frequent Direction} satisfies $\sigma_i^2 \propto \frac{1}{i}$, for all $i \in [d]$, and that learning oracle is perfect, it achieves an expected error of $  \Err\PAREN{\algoFD} \in \Theta \PAREN{
                \frac{1}{(\ln d)^2} \cdot \frac{\norm{\mbfA}_F^2}{\numBuckets}
            }.$
\end{theorem}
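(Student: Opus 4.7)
The plan is to apply \cref{lemma: equivalence of weighted error}, which rewrites the weighted error as $\Err \propto \Tr(\matrixInput^T\matrixInput(\matrixInput^T\matrixInput - \matrixSketch^T\matrixSketch))/\norm{\matrixInput}_F^2$, and then to split the trace using the orthogonality induced by the perfect oracle. First I would note that the columns of $\mbfP_H$ are the top $\numLearnedBuckets$ right singular vectors of $\matrixInput$ and hence span an invariant subspace of $\matrixInput^T\matrixInput$. This yields the orthogonal decomposition $\matrixInput^T\matrixInput = \matrixInput^{\downarrow T}\matrixInput^\downarrow + \matrixInput^{\bot T}\matrixInput^\bot$, together with the analogous $\matrixSketch^T\matrixSketch = \matrixSketch^{\downarrow T}\matrixSketch^\downarrow + \matrixSketch^{\bot T}\matrixSketch^\bot$, where by construction of \cref{algo: Learning-Based Frequent Direction} the two sub-sketches live in $\mathrm{span}(\mbfP_H)$ and its orthogonal complement, respectively. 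The trace then separates cleanly into $\Tr(\matrixInput^{\downarrow T}\matrixInput^\downarrow E^\downarrow) + \Tr(\matrixInput^{\bot T}\matrixInput^\bot E^\bot)$, where both deficit matrices are PSD by \cref{lemma: property of frequent direction}.

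Next I would bound each piece. For the $\bot$ contribution, using only the trivial pointwise inequality $\vec{v}_i^T E^\bot \vec{v}_i \le \sigma_i^2$ (from $\matrixSketch^{\bot T}\matrixSketch^\bot \preceq \matrixInput^{\bot T}\matrixInput^\bot$) together with the Zipfian tail identity $\sum_{i > \numLearnedBuckets} 1/i^2 = \Theta(1/\numLearnedBuckets)$ delivers $\Err^\bot = \Theta(\norm{\matrixInput}_F^2/(\numBuckets(\ln\dimension)^2))$, where $\numLearnedBuckets = \Theta(\numBuckets)$. For the $\downarrow$ contribution, \cref{lemma: property of frequent direction} applied to $\algoFD^\downarrow$ with the near-optimal choice $k = \Theta(\numLearnedBuckets)$ under Zipfian decay gives $\norm{E^\downarrow}_2 = O(\sigma_1^2/\numLearnedBuckets)$; combining this with $\vec{v}_i^T E^\downarrow \vec{v}_i \le \min(\norm{E^\downarrow}_2, \sigma_i^2)$ and summing against $\sigma_i^2$ then bounds the $\downarrow$ contribution at the same order. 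The matching lower bound is obtained by constructing an explicit Zipfian input and observing that the $\bot$ tail alone forces $\Omega(\norm{\matrixInput}_F^2/(\numBuckets(\ln\dimension)^2))$ error, via the $\Theta$-tightness side of \cref{lemma: property of frequent direction}.

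The hard part will be the $\downarrow$ term: unlike $\bot$, every direction $i \le \numLearnedBuckets$ carries a heavy weight $\sigma_i^2 \propto 1/i$, so the crude bound $\sum_i \sigma_i^2 \norm{E^\downarrow}_2$ picks up an unwanted $\sum_{i \le \numLearnedBuckets} 1/i = \Theta(\ln \numLearnedBuckets)$ factor. To avoid this, I plan to supplement the spectral-norm bound with the trace bound $\Tr(E^\downarrow) = \norm{\matrixInput^\downarrow}_F^2 - \norm{\matrixSketch^\downarrow}_F^2 = O(\sigma_1^2)$, derivable from the FD compression identity $(\thresholdBucketsToRemove - 1)\norm{E^\downarrow}_2 \le \Tr(E^\downarrow)$ and Weyl's inequality applied to the preserved top singular values, and to exploit the rank constraint $\mathrm{rank}(\matrixSketch^{\downarrow T}\matrixSketch^\downarrow) \le \thresholdBucketsToRemove - 1$ to localize the mass of $E^\downarrow$ away from the very top of the spectrum. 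Pushing these constraints through a careful case-split on whether $\sigma_i^2$ exceeds $\norm{E^\downarrow}_2$ should then yield the claimed $\Theta(\norm{\matrixInput}_F^2/(\numBuckets(\ln\dimension)^2))$ bound without any residual logarithmic losses.
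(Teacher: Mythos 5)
Your analysis of the $\bot$ contribution is essentially the paper's: with a perfect oracle, $H$ is invariant, the cross term vanishes, $E^\bot \doteq \vecInput{\bot}^T\vecInput{\bot} - \matrixSketch^{\bot T}\matrixSketch^{\bot}$ is PSD with $\vec{v}_i^T E^\bot\vec{v}_i \le \sigma_i^2$ for $i > \numLearnedBuckets$ (and $=0$ for $i\le \numLearnedBuckets$), and summing the Zipfian tail gives the claimed order. That part is fine.

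The genuine gap is in the $\downarrow$ term, and you have in fact correctly identified where it is but not how to close it. Your proposed fallback---bounding $\norm{E^\downarrow}_2$ via Fact~\ref{lemma: property of frequent direction}, supplementing with $\Tr(E^\downarrow)$ and a rank constraint---does not eliminate the $\ln\numLearnedBuckets$ factor. One can show $\Tr(E^\downarrow) \le \numBuckets\cdot\norm{E^\downarrow}_2$ from the fact that each increment $\pmbDelta^{(i)}$ satisfies $\Tr(\pmbDelta^{(i)}) \le \numBuckets(\sigma_\thresholdBucketsToRemove^{(i)})^2 = \numBuckets\norm{\pmbDelta^{(i)}}_2$ (not via Weyl as you suggest, but the bound is true). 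But $\numBuckets\norm{E^\downarrow}_2 \approx \sigma_1^2$ does not restrict the adversary: the constraints $0\le e_i\le\norm{E^\downarrow}_2$ and $\sum_i e_i\le\Tr(E^\downarrow)$ are simultaneously saturable by $e_i \approx \norm{E^\downarrow}_2$ for all $i\le\numLearnedBuckets$, which reproduces exactly the unwanted $\sum_{i\le\numLearnedBuckets}\sigma_i^2\cdot\norm{E^\downarrow}_2 \propto \ln\numLearnedBuckets$ loss. The rank constraint $\mathrm{rank}(\matrixSketch^{\downarrow T}\matrixSketch^\downarrow)\le\thresholdBucketsToRemove-1$ cuts the wrong way: it forces $E^\downarrow$ to agree with $\vecInput{\downarrow}^T\vecInput{\downarrow}$ on a $\Theta(\numLearnedBuckets)$-dimensional subspace and so pushes mass \emph{toward} the heavy end, not away from it. So this route does not close.

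The observation you are missing is much simpler and makes the entire $\downarrow$ analysis collapse: with a perfect oracle the rows of $\vecInput{\downarrow} = \matrixInput\mbfP_H\mbfP_H^T$ all lie in the $\numLearnedBuckets$-dimensional subspace $H$, so $\vecInput{\downarrow}$ has rank at most $\numLearnedBuckets$. Consequently $\vecInput{\downarrow}^T\vecInput{\downarrow}$ is representable exactly within the $O(\numLearnedBuckets\dimension)$ space allocated to $\algoFD^{\downarrow}$ (equivalently, take $k=\numLearnedBuckets$ in Fact~\ref{lemma: property of frequent direction}: $\norm{\vecInput{\downarrow}-[\vecInput{\downarrow}]_{\numLearnedBuckets}}_F^2 = 0$). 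Hence $E^\downarrow = 0$ and the $\downarrow$ term contributes nothing; this mirrors how, in the Misra-Gries special case, the predicted heavy hitters are counted exactly. Once you have $E^\downarrow = 0$, the whole proof reduces to your $\bot$ tail computation plus the lower bound, which you correctly reduce (as the paper does) to the corresponding learned Misra-Gries construction on standard basis vectors.
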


\paragraph{Robustness of Learned Frequent Directions.}
It turns out that~\cref{algo: Learning-Based Frequent Direction} does not come with a robustness guarantee similar to that of Learned Misra-Gries discussed above. In fact, we can construct adversarial inputs for which the expected error is much worse than in the classic setting. Fortunately, there is a way to modify the algorithm slightly using the fact that the residual error $\|\matrixInput -[\matrixInput ]_k\|_F^2$ can be computed within a constant factor using an algorithm from~\cite{DBLP:conf/iclr/0002LW24}. Since the error of the algorithm scales with the residual error, this essentially allows us to determine if we should output the result of a learned or standard Frequent Directions algorithm. The result is~\cref{thm:robust} on robustness. Combined with~\cref{cor:explicit-error}, which explicitly bounds the error of~\cref{algo: Learning-Based Frequent Direction} in terms of the true and predicted frequent directions, we obtain consistency/robustness tradeoffs for the modified algorithm. Details are provided in Appendix~\ref{appendix: robustness analysis}.

\section{experiments}
\label{sec: experiments}

We complement our theoretical results with experiments on real data both in the frequency estimation (1-dimensional stream elements) and frequent directions (row vector stream elements) settings. We highlight the main experimental results here and include extensive figures in \cref{appendix-experiments}.

\begin{figure}[!h]
    \centering
    \subfloat[Hyper]{
        \includegraphics[width=0.45\linewidth]{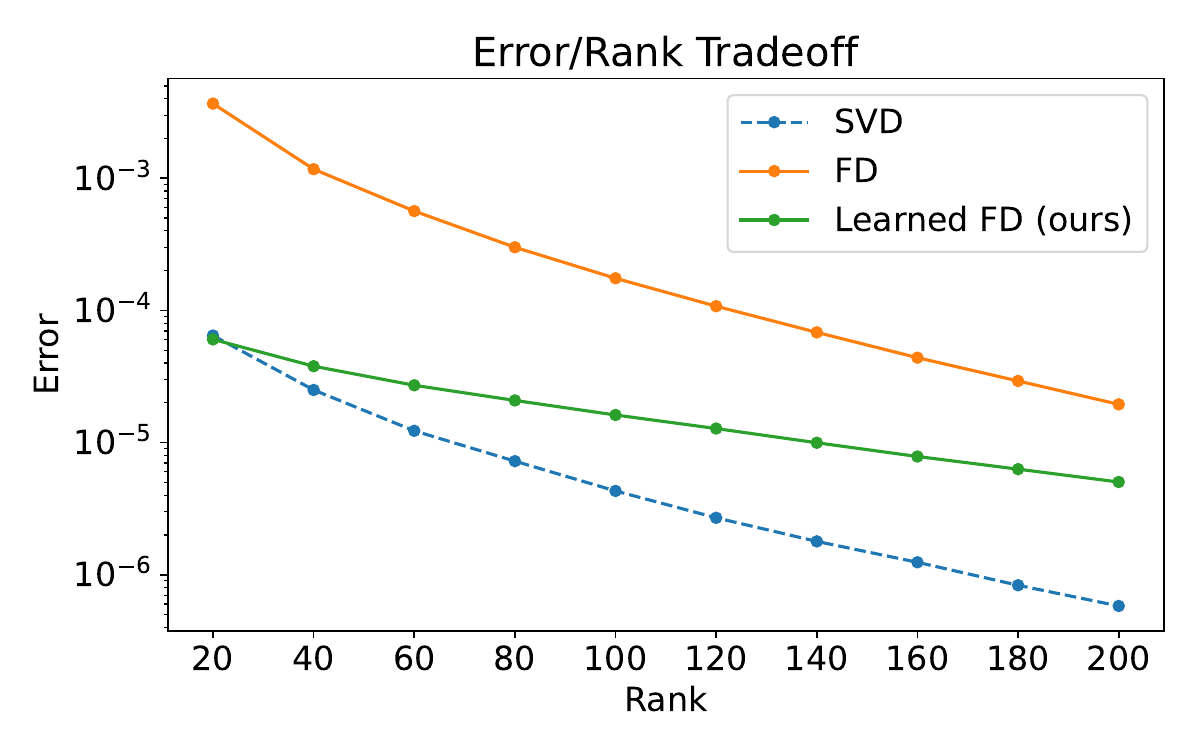}
        \includegraphics[width=0.45\linewidth]{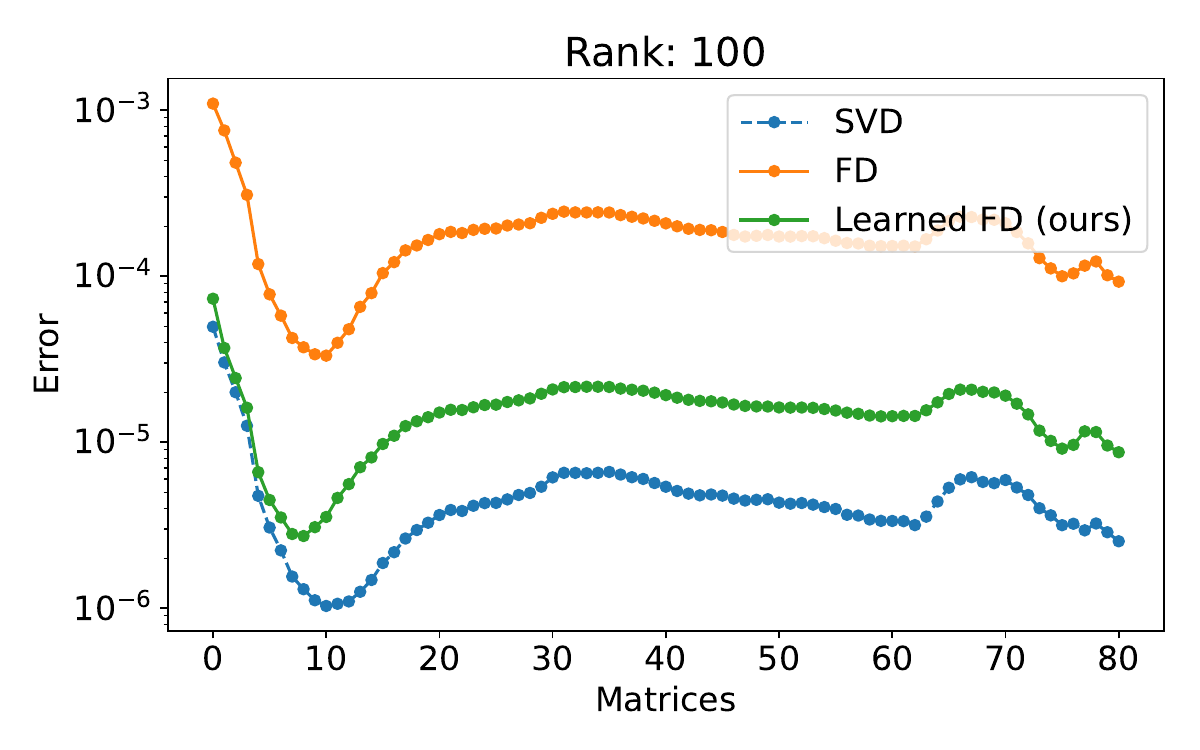}
    } \\
    \subfloat[Logo]{
        \includegraphics[width=0.45\linewidth]{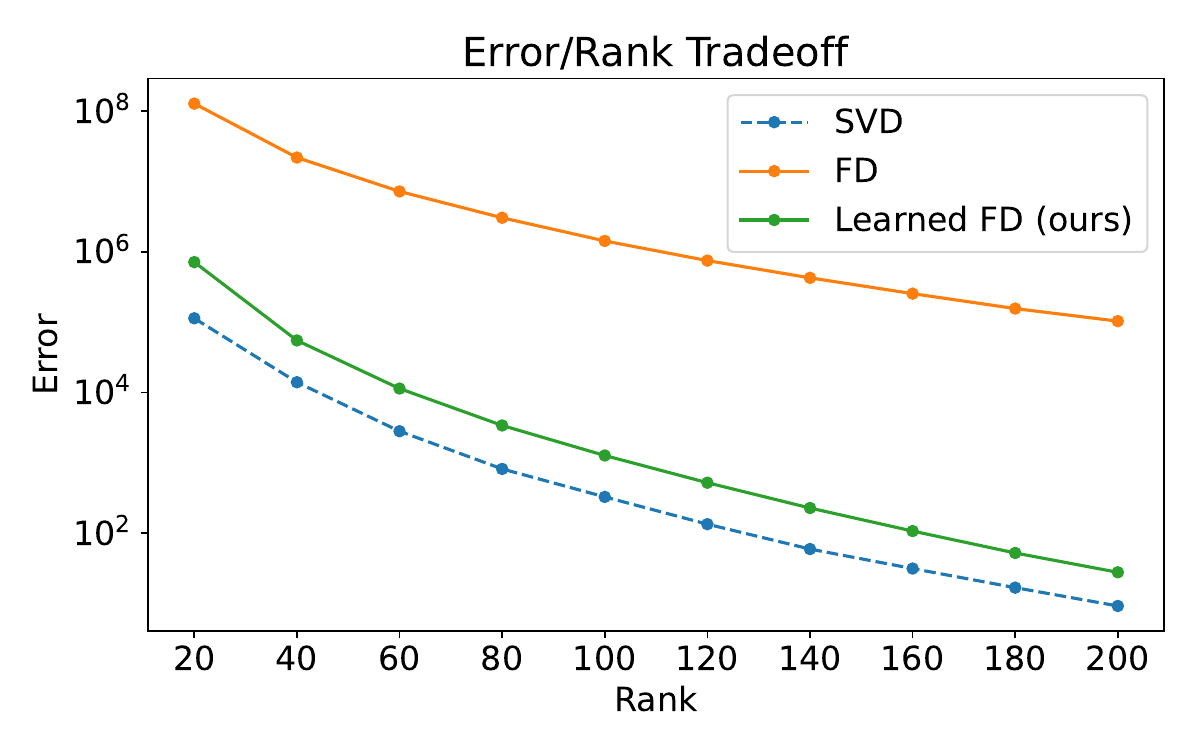}
        \includegraphics[width=0.45\linewidth]{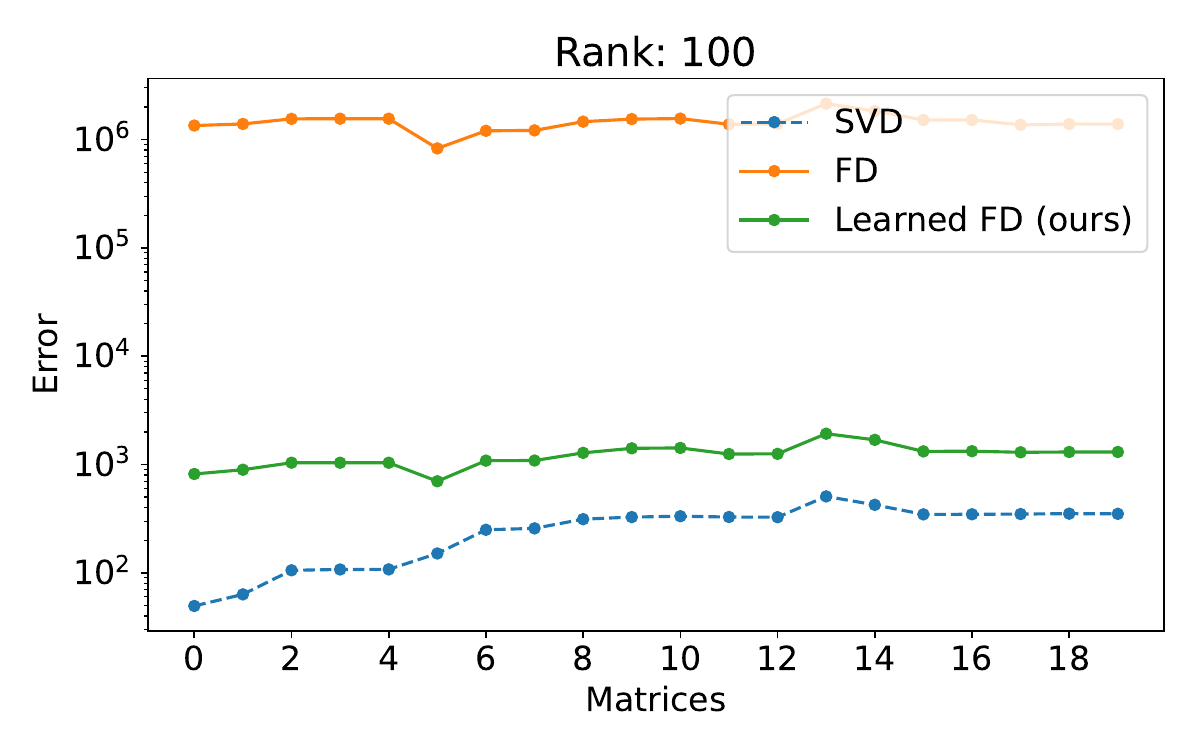}
    } \\
    \subfloat[Eagle]{
        \includegraphics[width=0.45\linewidth]{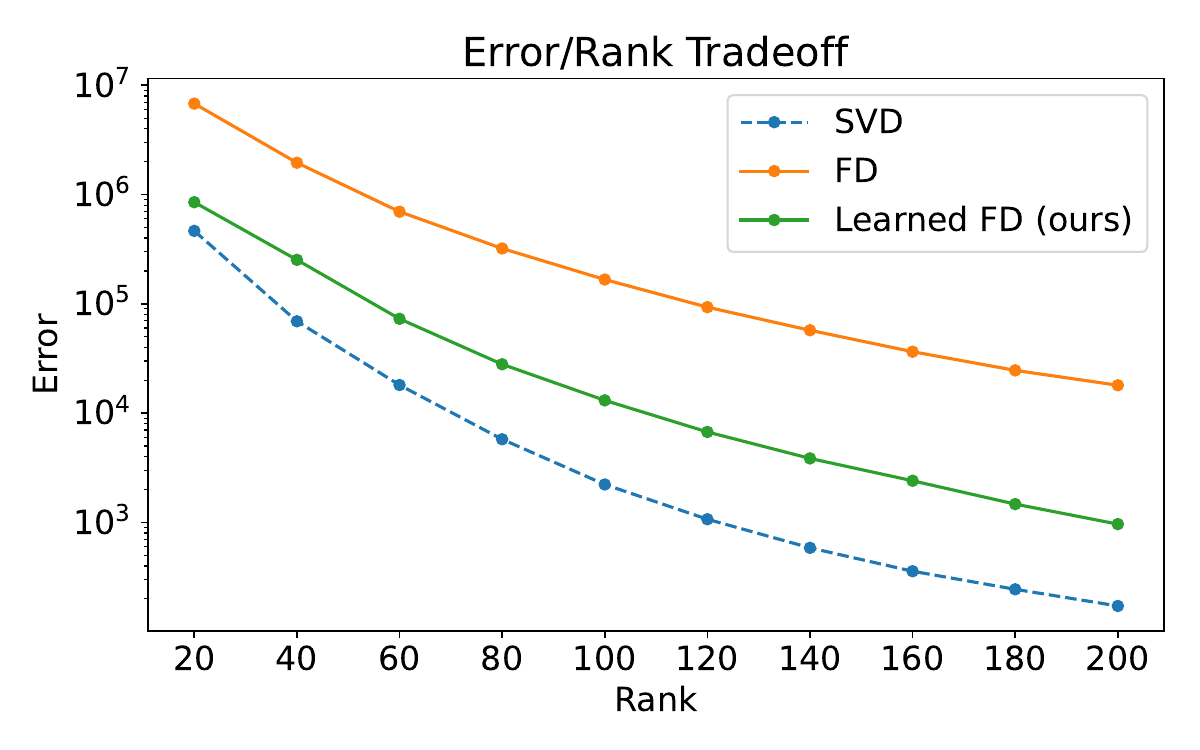}
        \includegraphics[width=0.45\linewidth]{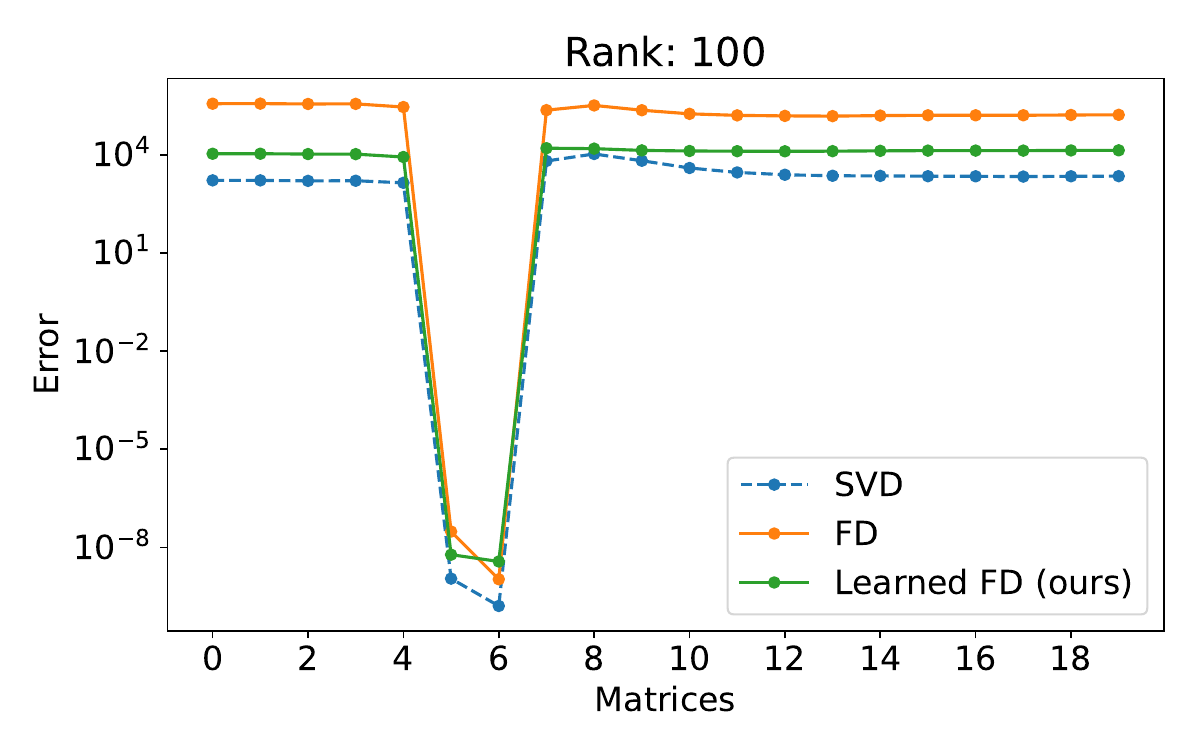}
    } \\
    \subfloat[Friends]{
        \includegraphics[width=0.45\linewidth]{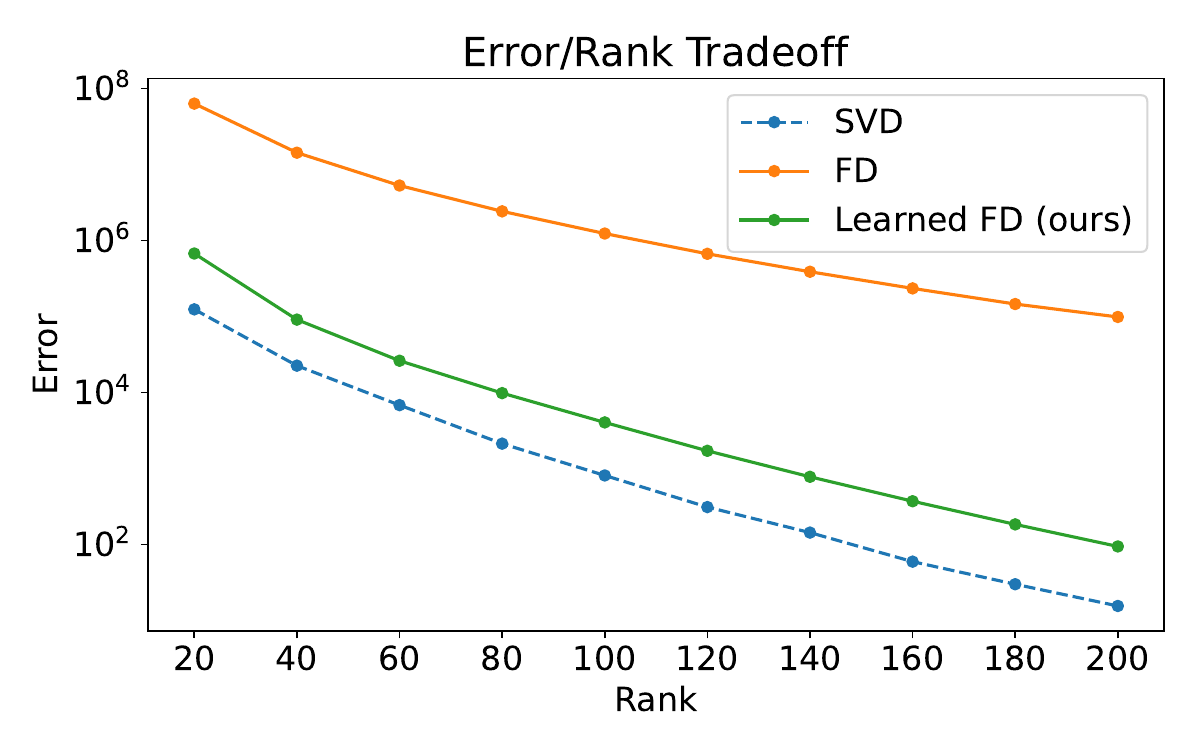}
        \includegraphics[width=0.45\linewidth]{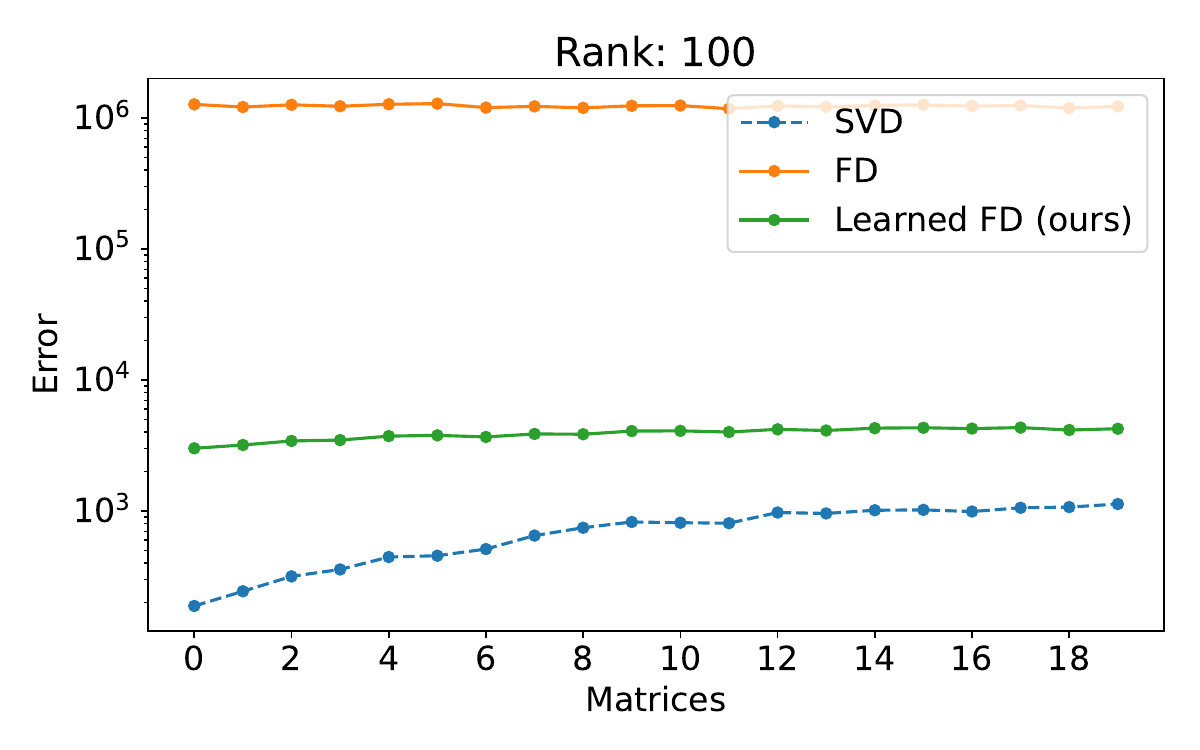}
    }
    \caption{Comparison of matrix approximations. The Frequent Directions and learning-augmented Frequent Directions algorithms are streaming algorithms while the exact SVD stores the entire matrix to compute a low-rank approximation (so it cannot be implemented in a stream). For each dataset, the left plot shows median error (error formula from \cref{eq: def expected error 2 of frequent directions}) as the rank of the approximation varies while the right plot shows error over the sequence of matrices with a fixed rank of $100$. The sudden drop in error in Eagle corresponds to several frames of a black screen in the video.}
    \vspace{-1em}
    \label{fig:freqdir}
\end{figure}

\subsection{Frequent Directions Experiments}

\paragraph{Datasets and Predictions}

We use datasets from \cite{indyk2019learnlra} and \cite{indyk2021fewshotlra}, prior works on learning-based low rank approximation not in the streaming setting. The Hyper dataset \citep{imamoglu2018hyperspectral} contains a sequence of hyperspectral images of natural scenes. We consider $80$ images each of dimension $1024 \times 768$. The Logo, Friends, and Eagle datasets come from high-resolution Youtube videos\footnote{Originally downloaded from \url{http://youtu.be/L5HQoFIaT4I}, \url{http://youtu.be/xmLZsEfXEgE} and
\url{http://youtu.be/ufnf_q_3Ofg} and appearing in \cite{indyk2019learnlra}.}. We consider $20$ frames from each video each with dimension $3240 \times 1920$. We plot the distribution of singular values for each dataset in \cref{appendix-experiments}. For each dataset, we use the top singular vectors of the first matrix in the sequence to form the prediction via a low-rank projection matrix (see \cref{algo: Learning-Based Frequent Direction}).
\vspace{-2mm}
\paragraph{Baselines}
We compare two streaming algorithms and one incomparable baseline. In the streaming setting, we compare the Frequent Directions algorithm of \cite{ghashami2016freqdir} with our learning-augmented variant. Both implementations are based on an existing implementation of Frequent Directions\footnote{\url{https://github.com/edoliberty/frequent-directions}}. We additionally plot the performance of the low-rank approximation given by the largest right singular vectors (weighted by singular values). This matrix is \emph{not computable in a stream} as it involves taking the SVD of the entire matrix $A$ but we evaluate it for comparison purposes. Results are displayed based on the rank of the matrix output by the algorithm, which we vary from $20$ to $200$. For both Frequent Directions and our learned variant, the space used by the streaming algorithm is twice the rank: this is a choice made in the Frequent Directions implementation to avoid running SVD on every insertion and thus improve the update time. We use half of the space for the learned projection component and half for the orthogonal component in our algorithm.

\paragraph{Results}

For each of the four datasets, we plot tradeoffs between median error (across the sequence of matrices) and rank as well as error across the sequence for a fixed rank of $100$ (see \cref{fig:freqdir}). We include the latter plots for choices of rank in \cref{appendix-experiments}. Our learning-augmented Frequent Directions algorithm improves upon the base Frequent Directions by 1-2 orders of magnitude on all datasets. In most cases, it performs within an order of magnitude of the (full-memory, non-streaming) SVD approximation. In all cases, increasing rank, or equivalently, space, yields significant improvement in the error. These results indicate that learned hints taken from the SVD solution on the first matrix in the sequence can be extremely powerful in improving matrix approximations in streams. As the sequences of matrices retain self-similarity (e.g., due to being a sequence of frames in a video), the predicted projection allows our streaming algorithm to achieve error closer to that of the memory-intensive SVD solution than that of the base streaming algorithm.

\subsection{Frequency Estimation Experiments}

\paragraph{Datasets and Predictions}

We test our algorithm and baselines on the CAIDA~\citep{caidadataset} and AOL~\citep{aoldataset} datasets used in prior work \citep{HsuIKV19, AamandCNSV23}. The CAIDA dataset contains 50 minutes of internet traffic data, with a stream corresponding to the IP addressed associated with packets passing through an ISP over a minute of data. Each minute of data contains approximately 30 million packets with 1 million unique IPs. The AOL dataset contains 80 days of internet search query data with each stream (corresponding to a day) having around 300k total queries and 100k unique queries. We plot the frequency distribution for both datasets in \cref{appendix-experiments}. We use recurrent neural networks trained in past work of \cite{HsuIKV19} as the predictor for both datasets.

\paragraph{Algorithms}

We compare our learning-augmented Misra-Gries algorithm with learning-augmented CountSketch \citep{HsuIKV19} and learning-augmented CountSketch++ \citep{AamandCNSV23}. As in \cite{AamandCNSV23}, we forego comparisons against CountMin as it has worse performance both in theory \citep{AamandCNSV23} and practice \citep{HsuIKV19}. For the prior state-of-the-art, learned CS++, the implemented algorithm does not exactly correspond to the one which achieves the best theoretical bounds as only a single CountSketch table is used (as opposed to two) and the number of rows of the sketch is $3$ (as opposed to $O(\log\log n)$). There is a tunable hyperparameter $C$ where elements with estimated frequency less than $C n / w$ have their estimates truncated to zero (where $n$ is the stream length and $w$ is the sketch width). The space stored by the sketch corresponds to $3w$ as there are $3$ rows. For Misra-Gries, the space corresponds to double the number of stored counters as each counter requires storing a key as well as a count. As in prior work, for all algorithms, their learned variants use half of the space for the normal algorithm and half of the space to store exact counts for the elements with top predicted frequencies.

\begin{figure}[!h]
    \centering
    \subfloat{
        \includegraphics[width=0.45\linewidth]{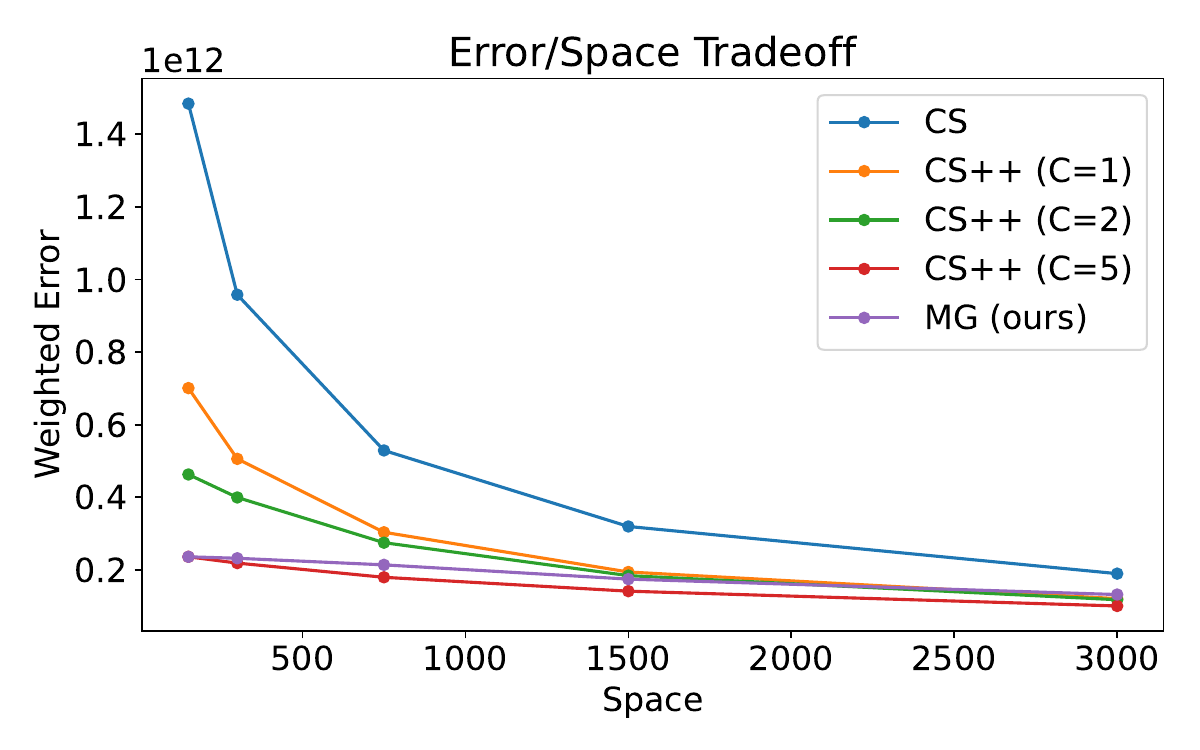}
        \includegraphics[width=0.45\linewidth]{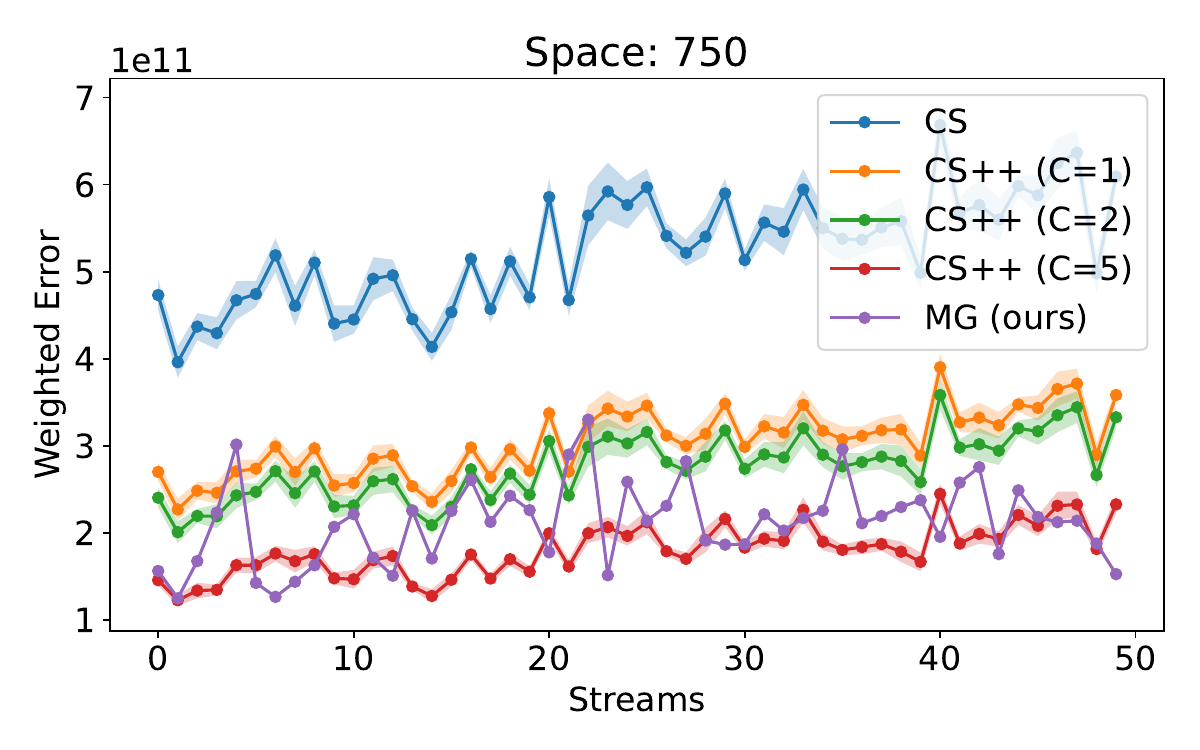}
    }
    \\
    \subfloat{
        \includegraphics[width=0.45\linewidth]{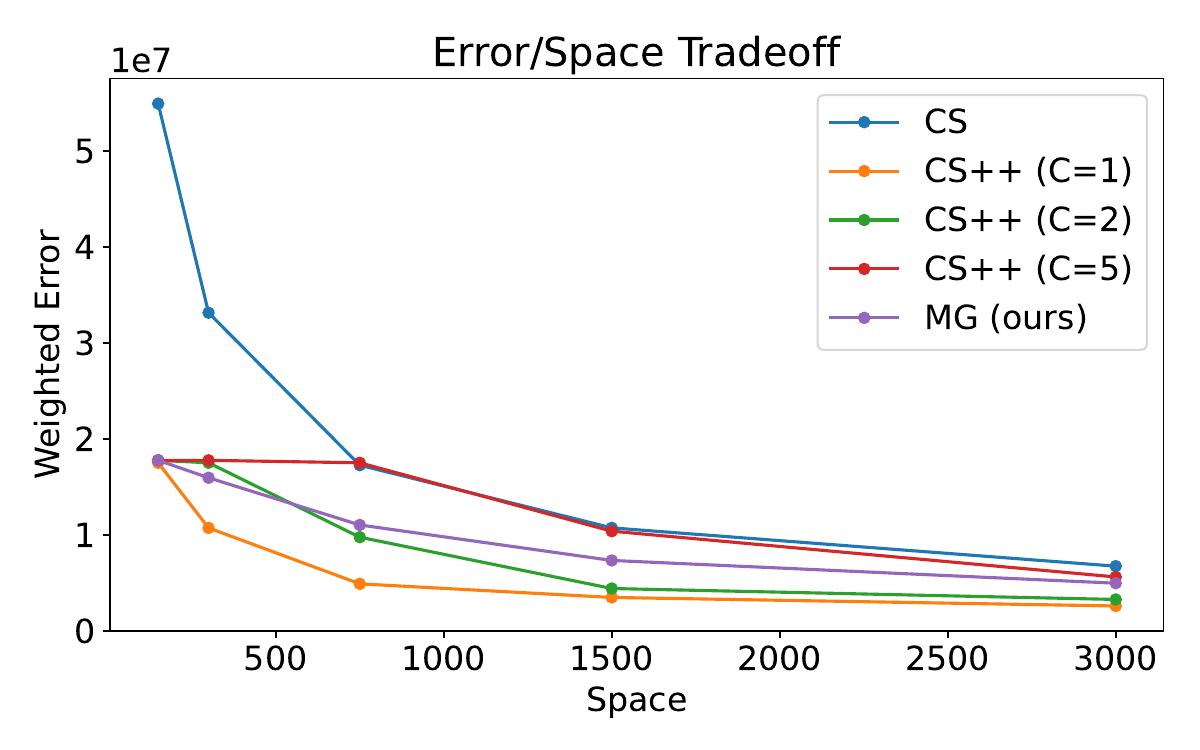}
        \includegraphics[width=0.45\linewidth]{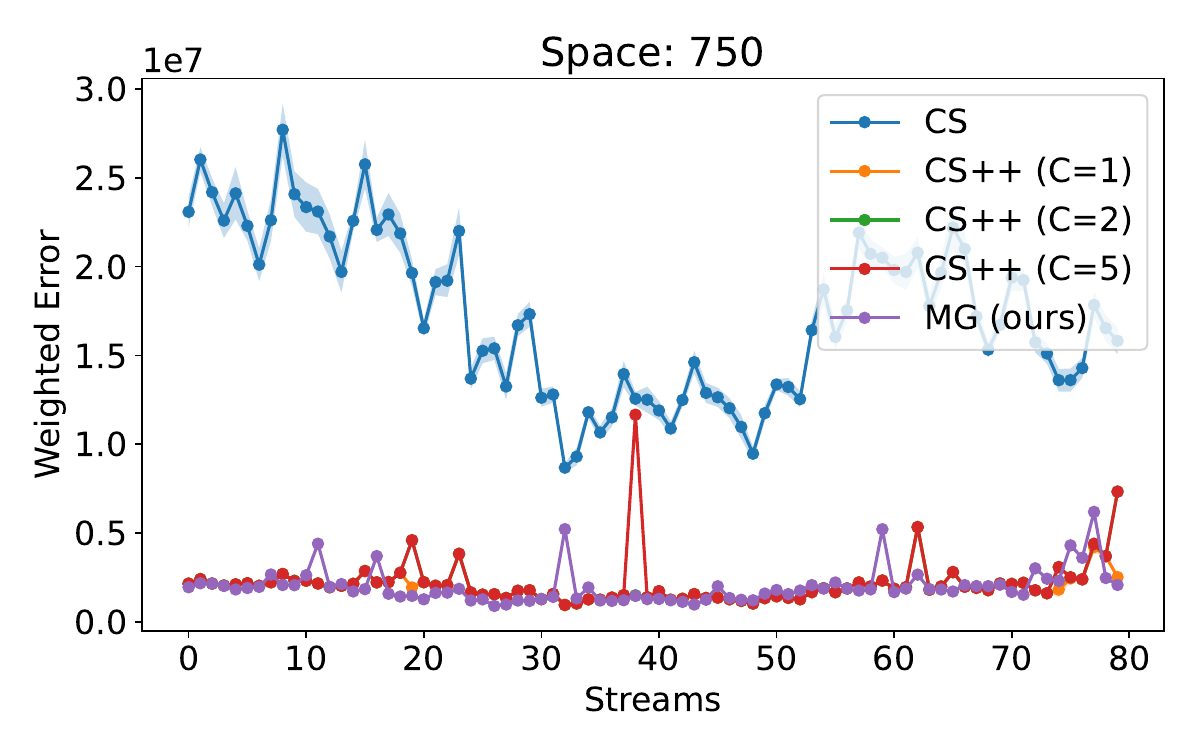}
    }
    \caption{Comparison of learning-augmented frequency estimation algorithms. Top: CAIDA, Bottom: AOL. For both datasets, the left plot show the median error of each method (across all 50 streams) with varying space budgets. The right plot shows the performance of each algorithm across streams with fixed space of $750$ words. Randomized algorithms are averaged across 10 trials and one standard deviation is shaded.}
    \vspace{-1em}
    \label{fig:freqest}
\end{figure}

\paragraph{Results}

For both datasets, we compare the learning-augmented algorithms by plotting the tradeoff between median error and space as well as error across the sequence of streams for a fixed space of $750$ (see \cref{fig:freqest}). In \cref{appendix-experiments}, we include the latter plots for all choices of space, as well as all corresponding plots both without predictions (to compare the base CS, CS++, and MG algorithms) and under unweighted error (taken as the unweighted sum of absolute errors over all stream items regardless of frequency) which was also evaluated in prior work. The learning-augmented Misra-Gries algorithm improves significantly over learning-augmented CountSketch, as implied by our theoretical bounds. Furthermore, it is competitive with the state-of-the-art learning-augmented CS++ algorithm. Sometimes our algorithm outperforms the best hyperparameter choice CS++ and often outperforms several of the hyperparameter choices of CS++. Furthermore, learning-augmented MG has no equivalent tunable parameter and is simpler to deploy (especially as CS++ is already a simplification of the theoretical algorithm of \cite{AamandCNSV23}). As learning-augmented MG is the only \emph{deterministic} algorithm with provable guarantees in the setting of \cite{HsuIKV19}, our results indicate that there is essentially no cost to derandomization.

\subsubsection*{Acknowledgments}
Justin Y.\ Chen is supported by an NSF Graduate Research Fellowship under Grant No. 17453.
Hao WU was a Postdoctoral Fellow at the University of Copenhagen, supported by Providentia, a Data Science Distinguished Investigator grant from Novo Nordisk Fonden. Siddharth Gollapudi is supported in part by the NSF (CSGrad4US award no. 2313998).

\bibliography{ICLR/reference}
\bibliographystyle{iclr2025_conference}

\newpage
\appendix
\addcontentsline{toc}{section}{Appendix} %
\part{Appendix} %
\parttoc %

\vspace{1cm}
\paragraph{\Large Table of Notations}
$\quad$
\vspace{.5cm}

\begin{table}[!hb]
\centering
\begin{tabular}{|c|l|}
\hline
Symbol                                                            & \multicolumn{1}{c|}{Definition}                                                                                  \\ \hline
$n$                                                               &  Number of Inputs in the Stream                                                                               \\ \hline
$d$                                                               & \begin{tabular}[c]{@{}l@{}} Domain Size (Frequency Estimation) \\ Dimension (Frequent Directions) \end{tabular} \\ \hline
$\freq{\cdot}$                                                    & Frequency                                                                                                        \\ \hline
$a_i$                                                             & The $i^{\text{th}}$ input element in the stream (Frequency Estimation)                                              \\ \hline
$\vecInput{i}$                                                    & The $i^{\text{th}}$ input vector in the stream (Frequent Directions)                                                \\ \hline
$\matrixInput$                                                    & Stream Input Matrix                                                                                                     \\ \hline
$\vec{e}_i$                                                       & Standard Basis Vector                                                                                            \\ \hline
$\sigma_i$                                                        & Singular Value of a Matrix                                                                                      \\ \hline
$N(\cdot, \cdot)$                                                 & Normal Distribution                                                                                              \\ \hline
\multicolumn{1}{|l|}{$\mbfU^{(i)}, \pmbSigma^{(i)}, \mbfV^{(i)}$} & The SVD decomposition matrices at the $i^{\text{th}}$ iteration of Algorithm 1                                        \\ \hline
\end{tabular}
\caption{Definitions of Main Notation.}
\label{tab:my-table}
\end{table}

\newpage
\section{Other Related Works}\label{appendix-related}
There is also a vast literature on sketching based algorithms for low-rank approximation without any learned-augmentation \citep{sarlos2006improved, ghashami2014relative, Liberty13, tropp2017practical, meyer2021hutch++}. We refer to the monograph \cite{woodruff2014sketching} for more details.

\section{Missing Proofs for Preliminaries}
\label{appendix-prelims}

We refer the reader to \Cref{sec: preliminary} for the full statements.

\begin{proof}[Proof of Lemma~\ref{lemma: equivalence of weighted error}]

    \begin{align*}
        \E{\vec{v} \sim N(0, \matrixInput^T \matrixInput )}{
            {\norm{\matrixInput \vec{v}}_2^2 
            - \estFreq{\vec{v}}}  
        } 
            &= \E{\vec{v}^T \matrixInput^T \matrixInput \vec{v}} - \E{\vec{v}^T\matrixSketch^T \matrixSketch \vec{v}} \\
            &= \E{\Tr(\vec{v}^T\matrixInput^T \matrixInput \vec{v})} - \E{\Tr(\vec{v}^T\matrixSketch^T \matrixSketch \vec{v})} \\
            &= \E{\Tr(\matrixInput^T \matrixInput xx^T)} - \E{\Tr(\matrixSketch^T \matrixSketch xx^T)} \\
            &= \Tr(\matrixInput^T \matrixInput \E{xx^T}) - \Tr(\matrixSketch^T \matrixSketch \E{xx^T}) \\
            &= \Tr((\matrixInput^T \matrixInput )^2) - \Tr(\matrixSketch^T \matrixSketch \matrixInput^T \matrixInput ) 
    \end{align*}

Let $\vec{v}_i$ be the right singular vectors of $\matrixInput$: 
\begin{align}
    \Tr \PAREN{ \PAREN{\matrixInput^T \matrixInput}^2 }
        &= \Tr \PAREN{ \PAREN{ \sum_{i \in [d]} \sigma_i^2 \cdot \vec{v}_i \vec{v}_i^T }^2 } \\
        &= \Tr \PAREN{ \sum_{i \in [d]} \sigma_i^4 \cdot \vec{v}_i \vec{v}_i^T }
        = \sum_{i \in [d]} \sigma_i^4, \\
    \Tr \PAREN{ \matrixSketch^T \matrixSketch \matrixInput^T \matrixInput }
        &= \Tr \PAREN{\matrixSketch^T \matrixSketch \PAREN{ \sum_{i \in [d]} \sigma_i^2 \cdot \vec{v}_i \vec{v}_i^T} } \\
        &= \sum_{i \in [d]} \sigma_i^2 \cdot \Tr \PAREN{\matrixSketch^T \matrixSketch \vec{v}_i \vec{v}_i^T} 
        = \sum_{i \in [d]} \sigma_i^2 \cdot \Tr \PAREN{\vec{v}_i^T \matrixSketch^T \matrixSketch \vec{v}_i}.
\end{align}
Therefore, 
\begin{align}
      \E{\vec{v} \sim N(0, \matrixInput^T \matrixInput )}{\vec{v}^T(\matrixInput^T \matrixInput  - \matrixSketch^T \matrixSketch ) \vec{v} } 
      = \sum_{i \in [d]} \sigma_i^4 - \sum_{i \in [d]} \sigma_i^2 \cdot \Tr \PAREN{\vec{v}_i^T \matrixSketch^T \matrixSketch \vec{v}_i}.
\end{align}

Further, since $\estFreq{\vec{v}} = \norm{\matrixSketch \vec{v}}_2^2$ and $0 \le \estFreq{\vec{v}} \le \freq{\vec{v}}$, 
Equation~\eqref{eq: def expected error 2 of frequent directions} can be written as
\begin{equation}
    \sum_{i \in [d]} \frac{\sigma_i^2}{\norm{\matrixInput}_F^2 } \cdot \vec{v}_i^T (\matrixInput^T \matrixInput - \matrixSketch^T \matrixSketch) \vec{v}_i.
\end{equation}
The first term is given by 
\begin{equation}
    \sum_{i \in [d]} \sigma_i^2 \cdot \vec{v}_i^T \PAREN{\sum_{j \in [d]} \sigma_j^2 \vec{v}_j \vec{v}_j^T} \vec{v}_i
    = \sum_{i \in [d]} \sigma_i^4 \cdot \vec{v}_i^T \vec{v}_i
    = \sum_{i \in [d]} \sigma_i^4.
\end{equation}
Further note that 
\begin{align}
    \E{\vec{v} \sim N(0, \matrixInput^T \matrixInput)}{\norm{\vec{v}}_2^2}
        &= \E{z \sim N(0, I)}{z^T \matrixInput \matrixInput^T z}
        = \E{z \sim N(0, I)}{\Tr \PAREN{ z^T \matrixInput \matrixInput^T z } } \\
        &= \E{z \sim N(0, I)}{\Tr \PAREN{ \matrixInput \matrixInput^T z z^T } }
        = \Tr \PAREN{ \matrixInput \matrixInput^T }
        = \norm{\matrixInput}_F^2.
\end{align}
Therefore, 
\begin{equation}
    \sum_{i \in [d]} \frac{\sigma_i^2}{\norm{ \matrixInput }_F^2 } \cdot \vec{v}_i^T (\matrixInput^T \matrixInput - \matrixSketch^T \matrixSketch) \vec{v}_i
        = \frac{
            \E{\vec{v} \sim N(0, \matrixInput^T \matrixInput )}{\vec{v}^T(\matrixInput^T \matrixInput  - \matrixSketch^T \matrixSketch ) \vec{v} } 
        }{  \E{\vec{v} \sim N(0, \matrixInput^T \matrixInput)}{\norm{\vec{v}}_2^2} }.
\end{equation}

\end{proof}

\begin{proof}[Proof of Fact~\ref{lemma: property of frequent direction}]

    For consistence, at each iteration, if $\matrixSketch$ has less than $\numBuckets$ rows after insertion, we still define 
    $$
        \mbfU^{(i)}, \pmbSigma^{(i)}, \mbfV^{(i)} \leftarrow \text{SVD} (\matrixSketch) , 
        \, 
        \text{ and }
        \, 
        \overline{\pmbSigma^{(i)}} = \pmbSigma^{(i)}. 
    $$
    
    \noindent
    To analyze the error, let $\matrixSketch^{(i)}$ denote the value of $\matrixSketch$ after the $i^{(th)}$ iteration,
    and define 
    \begin{align*}
        \pmbDelta^{(i)} 
            &\doteq \mbfA_i^T \mbfA_i + { \matrixSketch^{(i - 1)} }^T \matrixSketch^{(i - 1)} - { \matrixSketch^{(i)} }^T \matrixSketch^{(i)} \\
            &= \mbfV^{(i)} { \pmbSigma^{(i)} }^T {\mbfU^{(i)}}^T \mbfU^{(i)} \pmbSigma^{(i)} { \mbfV^{(i)} }^T - \mbfV^{(i)} \overline{\pmbSigma^{(i)}}^T \overline{\pmbSigma^{(i)}} { \mbfV^{(i)} }^T \\
            &= \mbfV^{(i)} \PAREN{ { \pmbSigma^{(i)} }^T \pmbSigma^{(i)} - \overline{\pmbSigma^{(i)}}^T \overline{\pmbSigma^{(i)}} } { \mbfV^{(i)} }^T
            .
    \end{align*}
    Then 
    \begin{equation}
        \matrixInput^T \matrixInput - {\matrixSketch^{(n)}}^T \matrixSketch^{(n)}
            = \sum_{i \in [n]} \PAREN{ \mbfA_i^T \mbfA_i + { \matrixSketch^{(i - 1)} }^T \matrixSketch^{(i - 1)} - { \matrixSketch^{(i)} }^T \matrixSketch^{(i)} }
            = \sum_{i \in [n]} \pmbDelta^{(i)}.
    \end{equation}
    
    Since each $\pmbDelta^{(i)} \succeq 0$, we prove that 
    \begin{equation}
        \matrixInput^T \matrixInput - {\matrixSketch^{(n)}}^T \matrixSketch^{(n)}
            \succeq 0.
    \end{equation}
    
    \noindent
    Let $\vec{v}_1, \dots, \vec{v}_d \in \R^k$ be the right singular vectors of $\matrixInput$.
    For each $k = 0, \ldots, \thresholdBucketsToRemove - 1$, define the projection matrix $\overline{\bm{P}_k} = [\vec{0} \mid \ldots  \mid \vec{0} \mid  \vec{v}_{k + 1} \mid \ldots \mid \vec{v}_d ] \in \R^{d \times d}$, consisting of columns vectors $\vec{0}, \ldots  , \vec{0} ,  \vec{v}_{k + 1} , \ldots , \vec{v}_d$.
    The null space is thus spanned by the top-$k$ right singular vectors of $\matrixInput$.
    We claim the following holds:
    \begin{align}
        \label{ineq: delta}
        \norm{ \pmbDelta^{(i)} }_2 
            &\le \frac{1}{\thresholdBucketsToRemove - k} \cdot \Tr\PAREN{ \overline{\bm{P}_k}^T \pmbDelta^{(i)} \overline{\bm{P}_k} },
            & \forall k = 0, \ldots, \thresholdBucketsToRemove - 1. 
    \end{align}
    
    \noindent
    Before proving it, we complete the proof of the error: 
    \begin{align}
            \norm{ \matrixInput^T \matrixInput - {\matrixSketch^{(n)}}^T \matrixSketch^{(n)} }_2
                &= \norm{ \sum_{i \in [n]} \pmbDelta^{(i)} }_2 
                \le \sum_{i \in [n]} \norm{ \pmbDelta^{(i)} }_2 
                \le \frac{1}{\thresholdBucketsToRemove - k} \cdot \sum_{i \in [n]} \Tr\PAREN{ \overline{\bm{P}_k}^T \pmbDelta^{(i)} \overline{\bm{P}_k} } \\
                &= \frac{1}{\thresholdBucketsToRemove - k} \cdot \Tr\PAREN{ \overline{\bm{P}_k}^T \Biggparen{ \sum_{i \in [n]}  \pmbDelta^{(i)} } \overline{\bm{P}_k} } \\
                &= \frac{1}{\thresholdBucketsToRemove - k} \cdot \Tr\PAREN{ \overline{\bm{P}_k}^T \matrixInput^T \matrixInput \overline{\bm{P}_k} } 
                = \frac{1}{\thresholdBucketsToRemove - k} \cdot \norm{\matrixInput - [\matrixInput]_k}_F^2. 
    \end{align}

    \noindent

    \noindent
    {\it Proof of Inequality~\eqref{ineq: delta}}:
    First, 
    \begin{align*}
        \norm{ \pmbDelta^{(i)} }_2
            &= \norm{ \mbfV^{(i)} \PAREN{ { \pmbSigma^{(i)} }^T \pmbSigma^{(i)} - \overline{\pmbSigma^{(i)}}^T \overline{\pmbSigma^{(i)}} } { \mbfV^{(i)} }^T  }_2  \\
            &= \norm{ { \pmbSigma^{(i)} }^T \pmbSigma^{(i)} - \overline{\pmbSigma^{(i)}}^T \overline{\pmbSigma^{(i)}} }_2  \\
            &= \norm{  \min \{ { \pmbSigma^{(i)} }^2 , \PAREN{ \sigma^{(i)}_{\thresholdBucketsToRemove} }^2 \bm{I} \} }_2
            = \PAREN{ \sigma^{(i)}_{\thresholdBucketsToRemove} }^2,  
    \end{align*}
    where $\sigma^{(i)}_{\thresholdBucketsToRemove}$ is the $\thresholdBucketsToRemove^{(th)}$ largest singular value of $\pmbSigma^{(i)}$.
    Next, 
    \begin{align}
        \Tr \PAREN{ \pmbDelta^{(i)} }
            &= \Tr \PAREN{ \mbfV^{(i)} \PAREN{ { \pmbSigma^{(i)} }^T \pmbSigma^{(i)} - \overline{\pmbSigma^{(i)}}^T \overline{\pmbSigma^{(i)}} } { \mbfV^{(i)} }^T  } \\
            &= \Tr \PAREN{ { \pmbSigma^{(i)} }^T \pmbSigma^{(i)} - \overline{\pmbSigma^{(i)}}^T \overline{\pmbSigma^{(i)}} }
            = \sum_{j \in [d]} \PAREN{ \sigma^{(i)}_{j} }^2
            \ge \thresholdBucketsToRemove \cdot \PAREN{ \sigma^{(i)}_{\thresholdBucketsToRemove} }^2.
    \end{align}
    Next, let $\bm{P}_k = [\vec{v}_1 \mid \ldots \mid \vec{v}_k \mid \vec{0} \mid \ldots \mid \vec{0} ] \in \R^{d \times d}$ be the projection matrix to the space spanned by the top-$k$ right singular vectors of $\matrixInput$.    
    Then $\bm{P}_k + \overline{\bm{P}_k} = \bm{I}$, and 
    \begin{align}
        \Tr \PAREN{ \pmbDelta^{(i)} }
            &= \Tr\PAREN{ \PAREN{\bm{P}_k + \overline{\bm{P}_k}}^T \pmbDelta^{(i)} \PAREN{\bm{P}_k + \overline{\bm{P}_k}} } 
            = \Tr\PAREN{ \bm{P}_k^T \pmbDelta^{(i)} \bm{P}_k }
            + \Tr\PAREN{ \overline{\bm{P}_k}^T \pmbDelta^{(i)} \overline{\bm{P}_k} }.
    \end{align}
    Expanding $\Tr\PAREN{ \bm{P}_k^T \pmbDelta^{(i)} \bm{P}_k }$ we get 
    \begin{equation}
        \Tr\PAREN{ \bm{P}_k^T \pmbDelta^{(i)} \bm{P}_k }
            = \sum_{j \in [k]} \vec{v}_j^T \pmbDelta^{(i)} \vec{v}_j
            \le k \cdot \PAREN{ \sigma^{(i)}_{\thresholdBucketsToRemove} }^2,
    \end{equation}
    which implies that 
    \begin{equation}
        \Tr\PAREN{ \overline{\bm{P}_k}^T \pmbDelta^{(i)} \overline{\bm{P}_k} } 
            \le (\thresholdBucketsToRemove - k) \cdot \PAREN{ \sigma^{(i)}_{\thresholdBucketsToRemove} }^2
            = (\thresholdBucketsToRemove - k) \cdot \norm{\pmbDelta^{(i)}}_2^2. 
    \end{equation}

    \textbf{Running Time:} 
    For each $\vecInput{i}$, inserting it into $\matrixSketch$ takes $O(\dimension)$ time. When $\matrixSketch$ reaches its capacity of $\numBuckets$ rows, the operations in Lines~\ref{line: frequent direction aggregation start}-\ref{line: frequent direction aggregation end} are triggered, and performing the SVD requires $O(\numBuckets^2 \dimension)$ time.
    
    After completing this step, $\matrixSketch$ has at least $\numBuckets - \thresholdBucketsToRemove + 1$ empty rows. Thus, the algorithm can accommodate at least $\numBuckets - \thresholdBucketsToRemove + 1$ additional insertions before Lines~\ref{line: frequent direction aggregation start}-\ref{line: frequent direction aggregation end} need to be executed again. Consequently, the total running time is:
    \begin{equation}
        O \PAREN{
            \frac{\numOfInput}{\numBuckets - \thresholdBucketsToRemove + 1} \cdot \numBuckets^2 \dimension
        }.
    \end{equation}

\end{proof}

\newpage
\section{Analysis of Misra-Gries}
\label{appendix: analysis of Misra-Gries}

\begin{proof}[\bf Proof of Theorem~\ref{theorem: Expected Error of the Misra-Gries}]
    We need to prove both upper bounds and lower bounds for the theorem.
    
    \textit{Upper Bound for Theorem~\ref{theorem: Expected Error of the Misra-Gries}. \,}
    W.L.O.G., assume that $\freq{1} \ge \freq{2} \ge \cdots \ge \freq{\domain}$.
    Since $\freq{i} \propto \frac{1}{i}$ for each $i \in [\domain]$, and the input stream consists of $\numOfInput$ elements, it follows that $\freq{i} \approx \frac{\numOfInput}{i \cdot \ln \domain}$.     
    
    Assume that we have a Misra-Gries sketch of size $\numBuckets \in \N^+$.
    Then by translating the error guarantee from Fact~\ref{lemma: property of frequent direction} for Misra-Gries, we have 
    \begin{equation}
        \max_{i \in [\domain]} \card{\estFreq{i} - \freq{i}} 
            \le \min_{k \in \IntSet{0}{\numBuckets - 1}} \frac{\numOfInput - \sum_{j \in [k]} \freq{i}}{\thresholdBucketsToRemove - k}
            \le \frac{\numOfInput - \sum_{j \in [\numBuckets / 2]} \freq{i}}{\numBuckets / 2}
            = \frac{2 \cdot \numOfInput \cdot \ln \frac{2 \domain}{\numBuckets}}{\numBuckets \cdot \ln \domain}. 
    \end{equation}
    Further, 
    \begin{equation}
        i \ge \frac{\numBuckets}{2 \cdot \ln \frac{2 \domain}{\numBuckets}} 
        \implies
        \freq{i} = \frac{\numOfInput}{i \cdot \ln \domain} \le \frac{2 \cdot \numOfInput \cdot \ln \frac{2 \domain}{\numBuckets}}{\numBuckets \cdot \ln \domain}
    \end{equation}
    
    Since we also know that $0 \le \estFreq{i} \le \freq{i}$, it follows that 
    \begin{align*}
        \sum_{i \in [\domain} \frac{\freq{i}}{\numOfInput} \cdot \card{\estFreq{i} - \freq{i}}
            &= \sum_{i = 1}^{\frac{\numBuckets}{2 \cdot \ln \frac{2 \domain}{\numBuckets}} } \frac{\freq{i}}{\numOfInput} \cdot \card{\estFreq{i} - \freq{i}}
            + \sum_{i = \frac{\numBuckets}{2 \cdot \ln \frac{2 \domain}{\numBuckets}} + 1}^{\domain} \frac{\freq{i}}{\numOfInput} \cdot \card{\estFreq{i} - \freq{i}} \\
            &\le \sum_{i = 1}^{\frac{\numBuckets}{2 \cdot \ln \frac{2 \domain}{\numBuckets}} } \frac{\freq{i}}{\numOfInput} \cdot \frac{2 \cdot \numOfInput \cdot \ln \frac{2 \domain}{\numBuckets}}{\numBuckets \cdot \ln \domain} 
            + \sum_{i = \frac{\numBuckets}{2 \cdot \ln \frac{2 \domain}{\numBuckets}} + 1}^{\domain} \frac{\freq{i}}{\numOfInput} \cdot \freq{i} \\
            &\le \sum_{i = 1}^{\frac{\numBuckets}{2 \cdot \ln \frac{2 \domain}{\numBuckets}} } \frac{1}{i \cdot \ln \domain} \cdot \frac{2 \cdot \numOfInput \cdot \ln \frac{2 \domain}{\numBuckets}}{\numBuckets \cdot \ln \domain} 
            + \sum_{i = \frac{\numBuckets}{2 \cdot \ln \frac{2 \domain}{\numBuckets}} + 1}^{\domain} \frac{1}{i \cdot \ln \domain} \cdot \frac{\numOfInput}{i \cdot \ln \domain} \\
            &\in O \PAREN{
                \frac{\ln \frac{\numBuckets}{2 \cdot \ln \frac{2 \domain}{\numBuckets}}}{\ln \domain} \cdot \frac{\numOfInput \cdot \ln \frac{\domain}{\numBuckets}}{\numBuckets \cdot \ln \domain} 
                + \frac{1}{\frac{\numBuckets}{2 \cdot \ln \frac{2 \domain}{\numBuckets}} + 1} \cdot \frac{\numOfInput}{(\ln \domain)^2}
            } \\
            &= O \PAREN{
                \frac{\ln \frac{\numBuckets}{2 \cdot \ln \frac{2 \domain}{\numBuckets}}}{1} \cdot \frac{\numOfInput \cdot \ln \frac{\domain}{\numBuckets}}{\numBuckets \cdot (\ln \domain)^2} 
                + \frac{\ln \frac{\domain}{\numBuckets}}{\numBuckets} \cdot \frac{\numOfInput}{(\ln \domain)^2}
            } \\
            &= O \PAREN{
                \PAREN{ \ln \frac{\numBuckets}{\ln \frac{2 \domain}{\numBuckets}}
                } \cdot \frac{\ln \frac{\domain}{\numBuckets}}{\numBuckets} \cdot \frac{\numOfInput}{(\ln \domain)^2}
            } \\
    \end{align*}

    \textit{Lower Bound for Theorem~\ref{theorem: Expected Error of the Misra-Gries}. \,}
    To prove the lower bound, we assume there is an adversary which controls the order that the input elements arrive, under the constraints that $\sum_{i \in [\domain]} \freq{i} = n$ and $\freq{i} \propto 1 / i$, 
    to maximize the error of the Misra-Gries algorithm.
    
    Denote $\matrixSketch$ the array maintained by the Misra-Gries algorithm, containing $\numBuckets$ buckets. 
    Initially, all buckets are empty. 
    
    First, the adversary inserts elements $1, \ldots, t$ to the Misra-Gries algorithm, with multiplicities $\freq{1}, \ldots, \freq{t}$, where $t = \frac{\numBuckets}{\ln \frac{2d}{\numBuckets}}$.
    After this, $\matrixSketch$ contains $t$ non-empty buckets (for simplicity, here we assume that $\sum_{i \in [t]} \freq{i}$ is a multiple of $\numBuckets$), which stores elements $1, \ldots, t$, associated with their recorded frequencies $\freq{1}, \ldots, \freq{t}$, which we call their counters.

    Next, let $\cC$ be the multi-set consisting of elements $t + 1, \ldots, \domain$, such that each element $i \in \IntSet{t + 1}{\domain}$ has multiplicity $\freq{i}$ in $\cC$.
    Consider the following game: 
    \begin{itemize}
        \item Adversary: pick an element $i$ from $\cC$ that is not in $\matrixSketch$.
        If such element exists, remove one copy of it from $\cC$, and send it to the Misra-Gries algorithm as the next input.
        If there is no such element, stop the game. 

        \item Misra-Gries Algorithm: process the input $i$.
    \end{itemize}

    During this game, $\matrixSketch$ are filled up and contains no empty bucket after at most every $\numBuckets$ input, and the counters of elements $1, \ldots, t$ decrease by $1$ (if they are still above zero) when $\matrixSketch$ is updated to make empty buckets.
    
    Further, when the game stops, there can be at most $\numBuckets$ distinct elements in $\cC$,
    with frequency sum at most $\sum_{i = t + 1}^{t + \numBuckets} \freq{i}$.
    It follows that the counters of elements $1, \ldots, t$ in $\matrixSketch$ decrease at least by 
    \begin{align*}
        \frac{1}{\numBuckets} \cdot \sum_{i = \numBuckets + t + 1}^d \freq{i} 
            \in \Omega \PAREN{ \
                \frac{n \cdot \ln \frac{d}{\numBuckets}}{\numBuckets \cdot \ln d}
            }, 
        \quad \text{ since } \numBuckets + t + 1 \le 2 \numBuckets.
    \end{align*}
    Therefore, the weighted error introduced by these counters is at least 
    \begin{align*}
        \Omega \PAREN{
            \sum_{i \in [t]} \frac{1}{i \cdot \ln d} \cdot \frac{n \cdot \ln \frac{d}{\numBuckets}}{\numBuckets \cdot \ln d}
        }
        = \Omega \PAREN{
            \frac{\ln t}{\ln d} \cdot \frac{n \cdot \ln \frac{d}{\numBuckets}}{\numBuckets \cdot \ln d}
        }
        = \Omega \PAREN{
            \PAREN{ \ln \frac{\numBuckets}{\ln \frac{2d}{\numBuckets}}
            } \cdot \frac{\ln \frac{d}{\numBuckets}}{(\ln d)^2} \cdot \frac{\numOfInput}{\numBuckets}
        }.
    \end{align*}

\end{proof}

\begin{proof}[\bf Proof of Theorem~\ref{theorem: Expected Error of the learned Misra-Gries}]
    We need to prove both upper bounds and lower bounds for the theorem.
    
    \textit{Upper Bound for Theorem~\ref{theorem: Expected Error of the learned Misra-Gries}. \,}
    It suffices to show that, there exists a parameter setting of $\numLearnedBuckets$ which enables the algorithm to achieve the desired error bound. 
        
    Assume that the algorithm reserves $\numLearnedBuckets = \numBuckets / 3$ words for the learned oracle. 
    Then for each element $i \in [\numLearnedBuckets]$, its frequency estimate $\estFreq{i} = \freq{i}$.
    And for each $i \notin [\numLearnedBuckets]$, the Misra-Gries algorithm never overestimate its frequency: $\estFreq{i} \in [0, \freq{i}].$
    Hence
    \begin{align}
        \Err = \sum_{i = \numLearnedBuckets + 1}^{\domain} \frac{\freq{i}}{\numOfInput} \cdot \card{\estFreq{i} - \freq{i}}
            \le \sum_{i = \numLearnedBuckets + 1}^{\domain} \frac{1}{i \cdot \ln d} \cdot \frac{n}{i \cdot \ln \domain} 
            \in O \PAREN{
                \frac{1}{\numBuckets} \cdot \frac{\numOfInput}{(\ln \domain)^2}
            }.
    \end{align}

    \textit{Lower Bound for Theorem~\ref{theorem: Expected Error of the learned Misra-Gries}. \,}
    To establish the lower bound, we consider an adversarial scenario where an adversary controls the order in which elements arrive, subject to the constraints $\sum_{i \in [\domain]} \freq{i} = n$ and $\freq{i} \propto 1 / i$. The adversary's goal is to maximize the error of the learned Misra-Gries algorithm.

    According to the framework presented in Algorithm~\ref{algo: Learning-Based Frequent Direction} for the learned Frequent Directions, the learned Misra-Gries algorithm initializes two separate Misra-Gries instances: one for the $\numLearnedBuckets$ elements predicted to be frequent and one for elements predicted to be non-frequent.
    
    Since $\numLearnedBuckets$ memory words are already reserved for storing the frequencies of the predicted frequent elements, we do not need to run a full Misra-Gries algorithm on the these elements. 
    Instead, we only record their observed frequencies.
    
    By overloading the notation a little, let us denote $\matrixSketch$ as the array used by the Misra-Gries instance managing the predicted non-frequent elements, which has a capacity of $\numBuckets - \numLearnedBuckets$ buckets. Initially, all buckets in $\matrixSketch$ are empty.

    Since the learned Misra-Gries algorithm incurs no estimation error for the predicted frequent elements, our analysis focuses on the non-frequent elements and the potential error introduced by the Misra-Gries instance that processes them.
    
    Let $\cC$ denote the multi-set of elements $\numLearnedBuckets + 1, \ldots, \domain$, where each element $i \in \IntSet{\numLearnedBuckets + 1}{\domain}$ appears with multiplicity $\freq{i}$ in $\cC$. 
    Consider the following adversarial game:
    
    \begin{itemize}
        \item Adversary's Role: At each step, the adversary selects an element $i$ from $\cC$ that is not currently stored in the array $\matrixSketch$. If such an element exists, the adversary removes one occurrence of $i$ from $\cC$ and sends it to the Misra-Gries algorithm as the next input. If there is no such element left, the adversary halts the game.
    
        \item Misra-Gries Algorithm's Role: The Misra-Gries algorithm processes the incoming element $i$ as it would normally, using the array $\matrixSketch$ of capacity $\numBuckets - \numLearnedBuckets$.
    \end{itemize}

    After the game, the remaining elements in $\cC$ are fed to the Misra-Gries algorithm in arbitrary order by the adversary.

    Now, consider the estimation error made by the Misra-Gries algorithm on the elements $\numLearnedBuckets + 1, \ldots, \numLearnedBuckets + 2 (\numBuckets - \numLearnedBuckets)$. Since the array $\matrixSketch$ can only store up to $\numBuckets - \numLearnedBuckets$ elements, the algorithm must estimate the frequency of at least $\numBuckets - \numLearnedBuckets$ elements from this range as zero.
    Therefore, the error is at least 
    \begin{align*}
        \sum_{i = \numLearnedBuckets + (\numBuckets - \numLearnedBuckets) + 1}^{\numLearnedBuckets + 2 (\numBuckets - \numLearnedBuckets)} \frac{\freq{i}}{\numOfInput} \cdot \freq{i}
            &= \sum_{i = \numBuckets - \numLearnedBuckets + 1}^{\numLearnedBuckets + 2 (\numBuckets - \numLearnedBuckets)} \frac{1}{i \cdot \ln d} \cdot \frac{n}{i \cdot \ln d}
            \in \Omega \PAREN{
                \frac{n}{\numBuckets \cdot \ln^2 d}
            },
    \end{align*}
    which finishes the proof.
     
\end{proof}

\newpage
\section{Analysis of Frequent Directions}
\label{appendix: analysis of frequent directions}

\subsection{Frequent Directions Under Zipfian}

\begin{proof}[\bf Proof of Theorem~\ref{theorem: Expected Error of the FrequentDirections}]
    We need to prove both upper bounds and lower bounds for the theorem.
    
    \textit{Upper Bound for Theorem~\ref{theorem: Expected Error of the FrequentDirections}. \,}
    First, based on the assumption that $\sigma_i^2 \propto \frac{1}{i}$, and Fact~\ref{lemma: property of frequent direction}, we have 
    \begin{align}
        \Vert  \matrixInput^T \matrixInput - \matrixSketch^T \matrixSketch  \Vert_2  
            &\le \min_{k \in [0 \, . \, . \, \numBuckets - 1]} \frac{\Vert \matrixInput - [\matrixInput]_k \Vert _F^2}{ \thresholdBucketsToRemove - k } \\
            &= \min_{k \in [0 \, . \, . \, \numBuckets - 1]} \frac{\sum_{i = k + 1}^d \sigma_i^2 }{ \thresholdBucketsToRemove - k } \\
            &= \min_{k \in [0 \, . \, . \, \numBuckets - 1]} \frac{(H_d - H_k) \cdot \norm{\matrixInput}_F^2}{ (\thresholdBucketsToRemove - k) \cdot \ln d } \\
            &\le \frac{2 \cdot (H_d - H_{\numBuckets / 2}) \cdot \norm{\matrixInput}_F^2}{ \numBuckets \cdot \ln d } \\
            &\in O \PAREN{ 
                \frac{\norm{\matrixInput}_F^2 \cdot \ln \frac{2d}{\numBuckets}}{\numBuckets \cdot \ln d}
            },
    \end{align}
    where $H_m \doteq \sum_{j \in [m]} \frac{1}{j}, \forall m \in \N^+$ are the harmonic numbers.
    Further, 
    \begin{equation}
        i \ge \frac{\numBuckets}{\ln \frac{2d}{\numBuckets}} 
        \implies
        \sigma_i^2 = \frac{\norm{\matrixInput}_F^2}{i \cdot \ln d} \le \frac{\norm{\matrixInput}_F^2 \cdot \ln \frac{2d}{\numBuckets}}{\numBuckets \cdot \ln d}
    \end{equation}
    Since $\matrixSketch^T \matrixSketch \succeq 0$, and $\matrixInput^T \matrixInput - \matrixSketch^T \matrixSketch \succeq 0$ by Fact~\ref{lemma: property of frequent direction}, it follows that for each right singular vector $\vec{v}_i$ of $\matrixInput$
    \begin{equation}
        0 \le 
            \vec{v}_i^T (\matrixInput^T \matrixInput - \matrixSketch^T \matrixSketch) \vec{v}_i
        \le \vec{v}_i^T \matrixInput^T \matrixInput \vec{v}_i
        \le \sigma_i^2,  
    \end{equation}
    where $\sigma_i$ is the singular value associated with $\vec{v}_i$. 
    
    Therefore, the expected error is given by 
    {\allowdisplaybreaks
    \begin{align}
        \Err\PAREN{\algoFD} 
            &\doteq \sum_{i \in [d]} \frac{\sigma_i^2}{\norm{ \matrixInput }_F^2 } \cdot \vec{v}_i^T (\matrixInput^T \matrixInput - \matrixSketch^T \matrixSketch) \vec{v}_i \\
            &= \sum_{i = 1}^{\frac{\numBuckets}{ \ln \frac{2d}{\numBuckets} } } \frac{\sigma_i^2}{\norm{ \matrixInput }_F^2 } \cdot \vec{v}_i^T (\matrixInput^T \matrixInput - \matrixSketch^T \matrixSketch) \vec{v}_i
            + \sum_{i = \frac{\numBuckets}{ \ln \frac{2d}{\numBuckets} } + 1}^{d} \frac{\sigma_i^2}{\norm{ \matrixInput }_F^2 } \cdot \vec{v}_i^T (\matrixInput^T \matrixInput - \matrixSketch^T \matrixSketch) \vec{v}_i \\
            &\in O \PAREN{ \sum_{i = 1}^{\frac{\numBuckets}{ \ln \frac{2d}{\numBuckets} } } \frac{\sigma_i^2}{\norm{ \matrixInput }_F^2 } \cdot \frac{\norm{\matrixInput}_F^2 \cdot \ln \frac{2d}{\numBuckets}}{\numBuckets \cdot \ln d} 
            + \sum_{i = \frac{\numBuckets}{ \ln \frac{2d}{\numBuckets} } + 1}^{d} \frac{\sigma_i^2}{\norm{ \matrixInput }_F^2 } \cdot \sigma_i^2 } \\
            &= O \PAREN{ 
                \sum_{i = 1}^{\frac{\numBuckets}{ \ln \frac{2d}{\numBuckets} } } \frac{1}{i \cdot \ln d} \cdot \frac{\norm{\matrixInput}_F^2 \cdot \ln \frac{2d}{\numBuckets}}{\numBuckets \cdot \ln d} 
                + 
                \sum_{i = \frac{\numBuckets}{ \ln \frac{2d}{\numBuckets} } + 1}^{d} \frac{1}{i \cdot \ln d} \cdot \frac{\norm{\matrixInput}_F^2}{i \cdot \ln d} 
            } \\
            &= O \PAREN{
                \frac{\ln \frac{\numBuckets}{ \ln \frac{2d}{\numBuckets} }}{\ln d} \cdot \frac{\norm{\matrixInput}_F^2 \cdot \ln \frac{2d}{\numBuckets}}{\numBuckets \cdot \ln d} 
                + \frac{1}{\frac{\numBuckets}{ \ln \frac{2d}{\numBuckets} } + 1} \cdot \frac{\norm{\matrixInput}_F^2}{(\ln d)^2}
            } \\
            &= O \PAREN{
                \frac{\ln \frac{\numBuckets}{ \ln \frac{2d}{\numBuckets} }}{1} \cdot \frac{\norm{\matrixInput}_F^2 \cdot \ln \frac{2d}{\numBuckets}}{\numBuckets \cdot (\ln d)^2} 
                + \frac{\ln \frac{2d}{\numBuckets}}{\numBuckets} \cdot \frac{\norm{\matrixInput}_F^2}{(\ln d)^2}
            } \\
            &= O \PAREN{
                \PAREN{ \ln \frac{\numBuckets}{\ln \frac{2d}{\numBuckets}}
                } \cdot \frac{\ln \frac{d}{\numBuckets}}{\numBuckets} \cdot \frac{\norm{\matrixInput}_F^2}{(\ln d)^2}
            } 
    \end{align}
    }

    \textit{Lower Bound for Theorem~\ref{theorem: Expected Error of the FrequentDirections}. \,}
    The proof of the lower bound follows the same approach as the one for Theorem~\ref{theorem: Expected Error of the Misra-Gries}, in Appendix~\ref{appendix: analysis of Misra-Gries}.
    
    Assume that $\matrixInput$ consists of standard basis vectors $\vec{e}_1, \ldots, \vec{e}_d \in \R^d$. 
    Let $\freq{\vec{e}_i}$ denote the number of occurrences of $\vec{e}_i$ in $\matrixInput$. Without loss of generality, assume that $\freq{\vec{e}_1} \ge \ldots \ge \freq{\vec{e}_\dimension}$. 
    In this case, we have $\freq{\vec{e}_i} = \sigma_i^2$ and $\sum_{i \in [d]} \freq{\vec{e}_i} = \sum_{i \in [d]} \sigma_i^2 = \norm{\matrixInput}_F^2 = n$. 
    Further, we can then view the $\matrixSketch$ maintained by the Frequent Directions algorithm as an array of $\numBuckets$ buckets.
    
    Now the setting is exactly the same as the Misra-Gries algorithm. 
    Consequently, the constructive lower bound proof from Theorem~\ref{theorem: Expected Error of the Misra-Gries} directly applies to Frequent Directions.

\end{proof}

\subsection{Learned Frequent Directions Under Zipfian}

We need an additional result to prove Theorem~\ref{theorem: Expected Error of the Learned FrequentDirections}.
Recall that $\mbfP_{H}$ in Algorithm~\ref{algo: Learning-Based Frequent Direction} consists of orthonormal column vectors $\vec{w}_1 ,  \ldots , \vec{w}_\numLearnedBuckets \in \R^d$.
Extending this set of vectors to form an orthonormal basis of $\R^d$: $\vec{w}_1 ,  \ldots , \vec{w}_\numLearnedBuckets, \vec{w}_{\numLearnedBuckets + 1}, \ldots, \vec{w}_d.$
Write $\mbfP_{\overline{H}} = \BRACKET{\vec{w}_{\numLearnedBuckets + 1} \, \mid \, \ldots \, \mid \, \vec{w}_d}$ the projection matrix to the orthogonal subspace. 
Let $\vecInput{\downarrow} \doteq \matrixInput \mbfP_{H} \mbfP_{H}^T$ be the matrix of projecting the rows of $\matrixInput$ to the predicted subspace, 
and $\vecInput{\bot} \doteq \matrixInput - \vecInput{\downarrow} = \matrixInput (\bm{I} - \mbfP_{H} \mbfP_{H}^T)$.

The following lemma holds. 

\begin{lemma}
    \label{lemma: decomposition}
    For a vector $\vec{x} \in \R^\dimension$, we have 
    \begin{align}
        \vec{x}^T \matrixInput^T \matrixInput \vec{x}
            &= \vec{x}^T \vecInput{\downarrow}^T \vecInput{\downarrow} \vec{x}
            + \vec{x}^T \vecInput{\bot}^T \vecInput{\bot} \vec{x}
            + 2 \cdot \sum_{i \in [\dimension]} \sigma_i^2 \cdot \angles{
                \mbfP_{H}^T \vec{v}_i, \mbfP_{H}^T \vec{x}
            } \cdot \angles{
                \mbfP_{\overline{H}}^T \vec{v}_i, \mbfP_{\overline{H}}^T \vec{x}
            } .
    \end{align}
\end{lemma}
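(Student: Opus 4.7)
The plan is to exploit the additive decomposition $\matrixInput = \vecInput{\downarrow} + \vecInput{\bot}$ induced by the complementary projectors $\mbfP_H \mbfP_H^T$ and $\mbfP_{\overline{H}} \mbfP_{\overline{H}}^T = \bm{I} - \mbfP_H \mbfP_H^T$. Expanding the quadratic form gives
\begin{equation*}
    \vec{x}^T \matrixInput^T \matrixInput \vec{x}
    = \vec{x}^T \vecInput{\downarrow}^T \vecInput{\downarrow} \vec{x}
    + \vec{x}^T \vecInput{\bot}^T \vecInput{\bot} \vec{x}
    + \vec{x}^T \bigl(\vecInput{\downarrow}^T \vecInput{\bot} + \vecInput{\bot}^T \vecInput{\downarrow} \bigr) \vec{x}.
\end{equation*}
Since $\vec{x}^T (A + A^T) \vec{x} = 2\vec{x}^T A \vec{x}$ for any real matrix $A$, the cross-term simplifies to $2 \vec{x}^T \vecInput{\downarrow}^T \vecInput{\bot} \vec{x}$, and it remains to show this equals the stated sum over singular directions.

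For that step, I would substitute the definitions and use the SVD $\matrixInput^T \matrixInput = \sum_{i \in [d]} \sigma_i^2 \vec{v}_i \vec{v}_i^T$. Since $\vecInput{\downarrow}^T \vecInput{\bot} = \mbfP_H \mbfP_H^T \matrixInput^T \matrixInput \mbfP_{\overline{H}} \mbfP_{\overline{H}}^T$, inserting the SVD gives
\begin{equation*}
    \vec{x}^T \vecInput{\downarrow}^T \vecInput{\bot} \vec{x}
    = \sum_{i \in [d]} \sigma_i^2 \, (\vec{x}^T \mbfP_H \mbfP_H^T \vec{v}_i)\,(\vec{v}_i^T \mbfP_{\overline{H}} \mbfP_{\overline{H}}^T \vec{x})
    = \sum_{i \in [d]} \sigma_i^2 \,\angles{\mbfP_H^T \vec{v}_i, \mbfP_H^T \vec{x}}\,\angles{\mbfP_{\overline{H}}^T \vec{v}_i, \mbfP_{\overline{H}}^T \vec{x}},
\end{equation*}
where the second equality simply rewrites each scalar $\vec{x}^T \mbfP_H \mbfP_H^T \vec{v}_i = (\mbfP_H^T \vec{x})^T (\mbfP_H^T \vec{v}_i)$ as an inner product (and similarly for the $\mbfP_{\overline{H}}$ factor). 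Multiplying by two and combining with the diagonal terms completes the identity.

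There is no real obstacle here; the lemma is a direct algebraic identity, and the only thing to be careful about is to correctly use $\mbfP_H^T \mbfP_H = I_{\numLearnedBuckets}$ and the orthogonality $\mbfP_H^T \mbfP_{\overline{H}} = 0$ when simplifying intermediate products such as $\vecInput{\downarrow}^T \vecInput{\downarrow} = \mbfP_H \mbfP_H^T \matrixInput^T \matrixInput \mbfP_H \mbfP_H^T$. Since these properties are immediate from the construction of $\mbfP_H$ and $\mbfP_{\overline{H}}$ as the two halves of an orthonormal basis, the proof reduces to matching the cross-term expression to the right-hand side of the claim.
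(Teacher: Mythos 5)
Your proof is correct and takes essentially the same approach as the paper: expand the quadratic form of $\matrixInput = \vecInput{\downarrow} + \vecInput{\bot}$, observe that the two cross-terms are equal (you use $\vec{x}^T(A+A^T)\vec{x} = 2\vec{x}^T A\vec{x}$; the paper verifies the equality by taking transposes), substitute the SVD $\matrixInput^T\matrixInput = \sum_i \sigma_i^2 \vec{v}_i\vec{v}_i^T$ into the cross-term, and rewrite the resulting scalars as inner products of the projected vectors. No gaps.
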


The proof of the lemma is included at the end of the section.

\begin{proof}[\bf Proof of Theorem~\ref{theorem: Expected Error of the Learned FrequentDirections}]
    We need to prove both upper bounds and lower bounds for the theorem.

    \textit{Upper Bound for Theorem~\ref{theorem: Expected Error of the Learned FrequentDirections}. \,}
    It suffices to show that, there exists a parameter setting of $\numLearnedBuckets$ which enables the algorithm to achieve the desired error bound. 
    We assume that the algorithm uses $\numLearnedBuckets = \numBuckets / 3$ predicted directions from the learned oracle.

    Recall that Algorithm~\ref{algo: Learning-Based Frequent Direction} maintains two instances of Algorithm~\ref{algo: Frequent Direction}: $\algoFD^{\downarrow}$ and $\algoFD^{\bot}$. 
    The former processes the vectors projected onto the subspace defined by $\mbfP_{H}$, while the latter handles the vectors projected onto the orthogonal subspace. 
    Therefore, the input to $\algoFD^{\downarrow}$ is  $\vecInput{\downarrow} = \matrixInput \mbfP_{H} \mbfP_{H}^T$, and the input to $\algoFD^{\bot}$ is $\vecInput{\bot} = \matrixInput - \vecInput{\downarrow} = \matrixInput (\bm{I} - \mbfP_{H} \mbfP_{H}^T) = \matrixInput \mbfP_{\overline{H}} \mbfP_{\overline{H}}^T$.
    Ultimately, the resulting matrix $\matrixSketch$ is a combination of the matrices returned by $\algoFD^{\downarrow}$ and $\algoFD^{\bot}$, specifically denoted as $\matrixSketch_{\downarrow}$ and $\matrixSketch_{\bot}$, respectively.

    Combined with Lemma~\ref{lemma: decomposition}, for each right singular vector $\vec{v}_j$ of $\matrixInput$, we have 
    \begin{align}
        \vec{v}_j^T (\matrixInput^T \matrixInput - \matrixSketch^T \matrixSketch) \vec{v}_j
            &= \vec{v}_j^T \matrixInput^T \vec{v}_j \matrixInput 
            - \vec{v}_j^T (\matrixSketch_{\downarrow})^T \matrixSketch_{\downarrow} \vec{v}_j 
            - \vec{v}_j^T (\matrixSketch_{\bot})^T \matrixSketch_{\bot} \vec{v}_j \\
            &= \vec{v}_j^T \vecInput{\downarrow}^T \vecInput{\downarrow} \vec{v}_j
            - \vec{v}_j^T (\matrixSketch_{\downarrow})^T \matrixSketch_{\downarrow} \vec{v}_j \\
            &\quad + \vec{v}_j^T \vecInput{\bot}^T \vecInput{\bot} \vec{v}_j
            - \vec{v}_j^T (\matrixSketch_{\bot})^T \matrixSketch_{\bot} \vec{v}_j \\
            &\quad + 2 \cdot \sum_{i \in [\dimension]} \sigma_i^2 \cdot \angles{
                \mbfP_{H}^T \vec{v}_i, \mbfP_{H}^T \vec{v}_j
            } \cdot \angles{
                \mbfP_{\overline{H}}^T \vec{v}_i, \mbfP_{\overline{H}}^T \vec{v}_j
            } .
    \end{align}

    First, observe that since $\algoFD^{\downarrow}$ is allocated $\numLearnedBuckets \times \dimension$ space for the matrix $\matrixInput^{\downarrow}$ with rank at most $\numLearnedBuckets$, by the error guarantee of Frequent Direction algorithm (Fact~\ref{lemma: property of frequent direction}), it is guaranteed that $\vec{v}_j^T \vecInput{\downarrow}^T \vecInput{\downarrow} \vec{v}_j
            - \vec{v}_j^T (\matrixSketch_{\downarrow})^T \matrixSketch_{\downarrow} \vec{v}_j = 0$.
    
    Second, note that $\angles{
                \mbfP_{H}^T \vec{v}_i, \mbfP_{H}^T \vec{v}_j
            }$
    is the inner product, between the projected vectors $\vec{v}_i$ and $\vec{v}_j$ to the subspace $H$ specified by the predicted frequent directions, 
    and that $\angles{
                    \mbfP_{\overline{H}}^T \vec{v}_i, \mbfP_{\overline{H}}^T \vec{v}_j
                }$
    is the inner product, between the projected vectors $\vec{v}_i$ and $\vec{v}_j$ to the orthogonal complement of $H$.
    In particular, when the machine learning oracle makes perfect predictions of $\vec{v}_1, \ldots, \vec{v}_\numLearnedBuckets$, 
    i.e., $\vec{w}_1 = \vec{v}_1, \ldots, \vec{w}_\numLearnedBuckets = \vec{v}_\numLearnedBuckets$,
    then for each $i$, either $\mbfP_{H}^T \vec{v}_i$ or $\mbfP_{\overline{H}}^T \vec{v}_i $ will be zero.

    Therefore, it holds that 
        \begin{align}
        \vec{v}_j^T (\matrixInput^T \matrixInput - \matrixSketch^T \matrixSketch) \vec{v}_j
            &= \vec{v}_j^T \vecInput{\bot}^T \vecInput{\bot} \vec{v}_j
            - \vec{v}_j^T (\matrixSketch_{\bot})^T \matrixSketch_{\bot} \vec{v}_j.
    \end{align}

    Further, by the property of Frequent Direction algorithm $\algoFD^{\bot}$, 
    $
        \vecInput{\bot}^T \vecInput{\bot}
        - (\matrixSketch_{\bot})^T \matrixSketch_{\bot} \succeq 0.
    $ 
    And since $\vecInput{\bot}$ is the projection of $\matrixInput$ to the subspace spanned by the right singular vectors $\vec{v}_{\numLearnedBuckets + 1}, \ldots, \vec{v}_{\dimension}$,
    it still has right singular vectors $\vec{v}_{\numLearnedBuckets + 1}, \ldots, \vec{v}_{\dimension}$, associated with singular values $\sigma_{\numLearnedBuckets + 1}, \ldots, \sigma_{\dimension}$.
    It follows that 
    \begin{align}
         0 
            &\le \vec{v}_j^T (\matrixInput^T \matrixInput - \matrixSketch^T \matrixSketch) \vec{v}_j \\
            &= \vec{v}_j^T \vecInput{\bot}^T \vecInput{\bot} \vec{v}_j
            - \vec{v}_j^T (\matrixSketch_{\bot})^T \matrixSketch_{\bot} \vec{v}_j \\
            &\le \vec{v}_j^T \vecInput{\bot}^T \vecInput{\bot} \vec{v}_j \\
            &\le \begin{cases}
                \sigma_j^2, & j > \numLearnedBuckets \\
                0           & j \le \numLearnedBuckets
            \end{cases}.
    \end{align}
    
    Therefore, the weighted error is given by 
    \begin{align}
        \Err\PAREN{\algoFD} 
            &\doteq \sum_{i \in [d]} \frac{\sigma_i^2}{\norm{ \matrixInput }_F^2 } \cdot \vec{v}_i^T (\matrixInput^T \matrixInput - \matrixSketch^T \matrixSketch) \vec{v}_i \\
            &= \sum_{i = 1}^{\numLearnedBuckets} \frac{\sigma_i^2}{\norm{ \matrixInput }_F^2 } \cdot \vec{v}_i^T (\matrixInput^T \matrixInput - \matrixSketch^T \matrixSketch) \vec{v}_i
            + \sum_{i = \numLearnedBuckets+ 1}^{d} \frac{\sigma_i^2}{\norm{ \matrixInput }_F^2 } \cdot \vec{v}_i^T (\matrixInput^T \matrixInput - \matrixSketch^T \matrixSketch) \vec{v}_i \\
            &= \sum_{i = \numLearnedBuckets+ 1}^{d} \frac{\sigma_i^2}{\norm{ \matrixInput }_F^2 } \cdot \vec{v}_i^T (\matrixInput^T \matrixInput - \matrixSketch^T \matrixSketch) \vec{v}_i \\
            &\in O \PAREN{ 
                \sum_{i = \numLearnedBuckets+ 1}^{d} \frac{\sigma_i^2}{\norm{ \matrixInput }_F^2 } \cdot \sigma_i^2 
            } \\
            &= O \PAREN{  
                \sum_{i = \numLearnedBuckets+ 1}^{d} \frac{1}{i \cdot \ln d} \cdot \frac{\norm{\matrixInput}_F^2}{i \cdot \ln d} 
            } \\
            &= O \PAREN{
                \frac{1}{\numLearnedBuckets+ 1} \cdot \frac{\norm{\matrixInput}_F^2}{(\ln d)^2}
            } 
    \end{align}
    Noting that $\numLearnedBuckets \in \Theta(\numBuckets)$ finishes the proof of upper bound.

    \textit{Lower Bound for Theorem~\ref{theorem: Expected Error of the Learned FrequentDirections}. \,}
    The proof of the lower bound follows the same approach as the one for Theorem~\ref{theorem: Expected Error of the learned Misra-Gries}, in Appendix~\ref{appendix: analysis of Misra-Gries}.
    
    Assume that $\matrixInput$ consists of standard basis vectors $\vec{e}_1, \ldots, \vec{e}_d \in \R^d$. 
    Let $\freq{\vec{e}_i}$ denote the number of occurrences of $\vec{e}_i$ in $\matrixInput$. Without loss of generality, assume that $\freq{\vec{e}_1} \ge \ldots \ge \freq{\vec{e}_\dimension}$. 
    In this case, we have $\freq{\vec{e}_i} = \sigma_i^2$ and $\sum_{i \in [d]} \freq{\vec{e}_i} = \sum_{i \in [d]} \sigma_i^2 = \norm{\matrixInput}_F^2 = n$. 
    Further, we can then view the $\matrixSketch$ maintained by the Frequent Directions algorithm as an array of $\numBuckets$ buckets.
    
    Now the setting is exactly the same as the Misra-Gries algorithm. 
    Consequently, the constructive lower bound proof from Theorem~\ref{theorem: Expected Error of the learned Misra-Gries} directly applies to learned Frequent Directions.
    
\end{proof}

We next prove Lemma~\ref{lemma: decomposition}.

\begin{proof}[Proof of Lemma~\ref{lemma: decomposition}]
First, observe that 
 \begin{align}
        \vec{x}^T \matrixInput^T \matrixInput \vec{x}
            &= \vec{x}^T \PAREN{\vecInput{\downarrow} + \vecInput{\bot}}^T \PAREN{\vecInput{\downarrow} + \vecInput{\bot}} \vec{x} \\
            &= \vec{x}^T \vecInput{\downarrow}^T \vecInput{\downarrow} \vec{x}
            + \vec{x}^T \vecInput{\bot}^T \vecInput{\bot} \vec{x}
            + \vec{x}^T \vecInput{\downarrow}^T \vecInput{\bot} \vec{x}
            + \vec{x}^T \vecInput{\bot}^T \vecInput{\downarrow} \vec{x}.
    \end{align}
It suffices to show that 
\begin{align}
    \vec{x}^T \vecInput{\downarrow}^T \vecInput{\bot} \vec{x} 
        &= \vec{x}^T \vecInput{\bot}^T \vecInput{\downarrow} \vec{x}
        = \sum_{i \in [\dimension]} \sigma_i^2 \cdot \angles{
            \mbfP_{\overline{H}}^T \vec{v}_i, \mbfP_{\overline{H}}^T \vec{x}
        } \cdot \angles{
            \mbfP_{H}^T \vec{v}_i, \mbfP_{H}^T \vec{x}
        }  
\end{align}

Note that 
\begin{align}
    \vec{x}^T \vecInput{\downarrow}^T \vecInput{\bot} \vec{x} 
    &= \vec{x}^T \Bigparen{\mbfP_{H}^T \mbfP_{H} \matrixInput^T} \Bigparen{ \matrixInput (\bm{I} - \mbfP_{H} \mbfP_{H}^T) \vec{x} } \\
    &= \vec{x}^T \mbfP_{H}^T \mbfP_{H} \matrixInput^T \matrixInput (\bm{I} - \mbfP_{H} \mbfP_{H}^T) \vec{x} \\
    &= \vec{x}^T (\bm{I} - \mbfP_{H} \mbfP_{H}^T)^T  \matrixInput^T \, \matrixInput \mbfP_{H} \mbfP_{H}^T  \vec{x} 
    = \vec{x}^T \vecInput{\bot}^T \vecInput{\downarrow} \vec{x}.
\end{align}
Hence, it remains to study $\vec{x}^T \vecInput{\downarrow}^T \vecInput{\bot} \vec{x}$ or $\vec{x}^T \vecInput{\bot}^T \vecInput{\downarrow} \vec{x}$.
Keeping expanding one of them 
\begin{align}
    \vec{x}^T \vecInput{\downarrow}^T \vecInput{\bot} \vec{x} 
        &= \vec{x}^T (\bm{I} - \mbfP_{H} \mbfP_{H}^T)^T  \matrixInput^T \, \matrixInput \mbfP_{H} \mbfP_{H}^T  \vec{x} \\
        &= \vec{x}^T (\bm{I} - \mbfP_{H} \mbfP_{H}^T)^T  \PAREN{
            \sum_{i \in [\dimension]} \sigma_i^2 \vec{v}_i \vec{v}_i^T
        } \mbfP_{H} \mbfP_{H}^T  \vec{x} \\
        &= \sum_{i \in [\dimension]} \sigma_i^2 \cdot 
        \angles{
            \vec{v}_i, (\bm{I} - \mbfP_{H} \mbfP_{H}^T) \vec{x}
        } \cdot \angles{
            \vec{v}_i, \mbfP_{H} \mbfP_{H}^T \vec{x}
        }
        .
\end{align}
Since $\bm{I} - \mbfP_{H} \mbfP_{H}^T = \mbfP_{\overline{H}} \mbfP_{\overline{H}}^T$, 
\begin{align}
    \vec{x}^T \vecInput{\downarrow}^T \vecInput{\bot} \vec{x} 
        &= \sum_{i \in [\dimension]} \sigma_i^2 \cdot \angles{
            \vec{v}_i, \mbfP_{\overline{H}} \mbfP_{\overline{H}}^T \vec{x}
        } \cdot \angles{
            \vec{v}_i, \mbfP_{H} \mbfP_{H}^T \vec{x}
        } \\
        &= \sum_{i \in [\dimension]} \sigma_i^2 \cdot \angles{
            \mbfP_{\overline{H}}^T \vec{v}_i, \mbfP_{\overline{H}}^T \vec{x}
        } \cdot \angles{
            \mbfP_{H}^T \vec{v}_i, \mbfP_{H}^T \vec{x}
        }  
\end{align}
    
\end{proof}

\newpage
\section{Consistency/Robustness trade offs}
\label{appendix: robustness analysis}

If the predictions are perfect, the sketch $\matrixSketch$ output by Algorithm~\ref{algo: Learning-Based Frequent Direction} satisfies that $ \matrixSketch^T \matrixSketch\preceq \matrixInput ^T\matrixInput $. This property in particular gives the quantity $\vec{x}^T\matrixSketch^T\matrixSketch \vec{x}\leq \vec{x}^T\matrixInput^T\matrixInput \vec{x}$ for any vector $\vec{x}$ and as further $\matrixSketch ^T\matrixSketch \succeq 0$, we get that the error $|\vec{v}_i^T\matrixSketch^T\matrixSketch \vec{v}_i-\vec{v}_i^T\matrixInput ^T\matrixInput \vec{v}_i|\leq \sigma_i^2$ for any of the singular vectors $\vec{v}_i$ (and with perfect predictions the errors on the predicted singular vectors are zero). Unfortunately, with imperfect predictions, the guarantee that $ \matrixSketch ^T \matrixSketch \preceq \matrixInput ^T\matrixInput $ is not retained. To take a simple example, suppose that $d=2$ and that the input matrix $\matrixInput =(1,1)$ has just one row. Suppose we create two frequent direction sketches by projecting onto the standard basis vectors $e_1$ and $e_2$ and stack the resulting sketches $\matrixSketch _1$ and $\matrixSketch _2$ to get a sketch matrix $\matrixSketch $. It is then easy to check that $\matrixSketch $ is in fact the identify matrix. In particular, if $\vec{x}=e_1-e_2$, then $\|\matrixSketch \vec{x}\|_2^2=2$ whereas $\|\matrixInput \vec{x}\|_2^2=0$ showing that $\matrixInput ^T\matrixInput -\matrixSketch ^T\matrixSketch $ is not positive semidefinite. The absence of this property poses issues in proving consistency/robustness trade offs for the algorithm. Indeed, our analysis of the classic frequent directions algorithm under Zipfian distributions, crucially uses that the error incurred in the light directions $\vec{v}_i$ for $i\geq \frac{m}{\ln \frac{d}{m}}$ is at most $\sigma_i^2$.

In this section, we address this issue by presenting a variant of Algorithm~\ref{algo: Learning-Based Frequent Direction} that does indeed provide consistency/robustness trade-offs with only a constant factor blow up in space. To do so, we will maintain three different sketches of the matrix $\matrixInput $. The first sketch is the standard frequent directions sketch~\cite{Liberty13} in Algorithm~\ref{algo: Frequent Direction}, the second one is the learning-augmented sketch produced by Algorithm~\ref{algo: Learning-Based Frequent Direction}, and the final sketch computes an approximation to the residual error $\|\matrixInput -[\matrixInput ]_k\|_F^2$ within a constant factor using an algorithm from~\cite{DBLP:conf/iclr/0002LW24}. Let $\matrixSketch _1$ be the output of Algorithm~\ref{algo: Frequent Direction} on input $\matrixInput $ and $\matrixSketch _2$ be the output of Algorithm~\ref{algo: Learning-Based Frequent Direction} on input $\matrixInput $. 
Suppose for simplicity that we knew $\|\matrixInput -[\matrixInput ]_k\|_F^2$ exactly. Then, the idea is that when queried with a unit vector $\vec{x}$, we compute $\|\matrixSketch _1x\|_2^2$ and $\|\matrixSketch _2x\|_2^2$. If these are within $2\frac{\Vert \matrixInput  - [\matrixInput ]_{k} \Vert _F^2}{ \numBuckets-k }$ of each other, we output $\|\matrixSketch _2x\|_2^2$ as the final estimate of $\vec{x}^T\matrixInput ^T\matrixInput \vec{x}$, otherwise, we output $\|\matrixSketch _1x\|_2^2$. The idea behind this approach is that in the latter case, we know that the learning-based algorithm must have performed poorly with an error of at least $\frac{\Vert \matrixInput  - [\matrixInput ]_{k} \Vert _F^2}{ \numBuckets-k }$ and by outputting the estimate from the classic algorithm, we retain its theoretical guarantee. On the other hand, in the former case, we know that the error is at most $3\frac{\Vert \matrixInput  - [\matrixInput ]_{k} \Vert _F^2}{ \numBuckets-k }$ but could be much better if the learning augmented algorithm performed well. 

To state our the exact result, we recall that the algorithm from~\cite{DBLP:conf/iclr/0002LW24} using space $O(k^2/\eps^4)$ maintains a sketch of $\matrixInput $ such that from the sketch we can compute an estimate $\alpha$ such that  $\|\matrixInput -[\matrixInput ]_k\|_F^2\leq \alpha\leq (1+\varepsilon)\|\matrixInput -[\matrixInput ]_k\|_F^2$. We denote this algorithm $\mathcal{A}_{res}(k,\varepsilon)$. Our final algorithm is Algorithm~\ref{algo:robust} for which we prove the following result.

\begin{algorithm}[!ht]
    \caption{Robust Learning-based Frequent Direction $\mathcal{A}_{RLFD}$}
    \label{algo:robust}
    \begin{algorithmic}[1]
        \Procedure{Initialization}{}
            \State \textbf{Input:} sketch parameters $\numBuckets, \dimension \in \N^+$; learned oracle parameter $\numLearnedBuckets$ s.t., $\numLearnedBuckets \le \numBuckets$;  predicted frequent directions $\mbfP_{H} = \BRACKET{\vec{w}_1 \, \mid \, \ldots \, \mid \, \vec{w}_\numLearnedBuckets} \in \R^{\dimension \times \numLearnedBuckets}$, query vector $\vec{x}$.
            \State Initialize an instance of Algorithm~\ref{algo: Frequent Direction}: $\algoFD.initialization (\numBuckets, 0.5  \cdot \numBuckets, \dimension)$
            \State Initialize an instance of Algorithm~\ref{algo: Learning-Based Frequent Direction}: $\algoLFD.initialization (\numBuckets, 0.5 \cdot \numBuckets , \dimension)$
            \State Initialize the residual error estimation algorithm~\cite{DBLP:conf/iclr/0002LW24}  $\mathcal{A}_{res}(m/2,1)$
        \EndProcedure
        
        \Procedure{Update}{}
        \vspace{1mm}
            \State $\algoFD.update(\vecInput{i})$
            \State $\algoLFD.update(\vecInput{i})$
            \State $\mathcal{A}_{res}.update(\vecInput{i})$

        \EndProcedure

        \vspace{1mm}
        \Procedure{Return}{}
            \State $\matrixSketch_1 \leftarrow \algoFD.return()$
            \State $\matrixSketch_2 \leftarrow \algoFD.return()$
            \State $\alpha_0 \leftarrow \mathcal{A}_{res}.return()$
            \State $\alpha\leftarrow \frac{\alpha_0}{m-k}$
            \State {\bf return} $(\matrixSketch_1,\matrixSketch_2,\alpha)$
        \EndProcedure
    \vspace{1mm}
\Procedure{Query}{$\vec{x}$}
\If {$|\|\matrixSketch_2 \vec{x}\|_2^2-\|\matrixSketch_1 \vec{x}\|_2^2|\leq 2\alpha$}
    \State \Return $\|\matrixSketch_2 \vec{x}\|_2^2$
\Else
\State \Return $\|\matrixSketch_1 \vec{x}\|_2^2$

\EndIf

\EndProcedure
    \end{algorithmic}
\end{algorithm}

\begin{theorem}\label{thm:robust}[Worst-Case guarantees]
    For any unit vector $\vec{x}$, the estimate $\Gamma$ of $\|\matrixInput \vec{x}\|_2^2$ returned by Algorithm~\ref{algo:robust} satisfies
\[
|\|\matrixInput \vec{x}\|_2^2-\Gamma|\leq \min\left(\left|\|\matrixInput \vec{x}\|_2^2-\|\matrixSketch _2x\|_2^2\right|,6\frac{\Vert \matrixInput  - [\matrixInput ]_{k} \Vert _F^2}{ \numBuckets-k }  \right).
\]
In other words, the Error of Algorithm~\ref{algo:robust} is asymptotically bounded by the minimum of Algorithm \ref{algo: Learning-Based Frequent Direction} and the classic Frequent Direction algorithm.
\end{theorem}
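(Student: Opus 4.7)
}
The plan is to introduce the shorthand $E \doteq \frac{\Vert \matrixInput - [\matrixInput]_k \Vert_F^2}{\numBuckets - k}$ and then chain together the guarantees of the three sketches maintained by Algorithm~\ref{algo:robust}. By the error bound of classic Frequent Directions in Proposition~\ref{lemma: property of frequent direction}, the spectral norm bound $\|\matrixInput^T\matrixInput - \matrixSketch_1^T\matrixSketch_1\|_2 \leq E$ implies, for any unit vector $\vec{x}$, the pointwise bound $|\|\matrixInput \vec{x}\|_2^2 - \|\matrixSketch_1 \vec{x}\|_2^2| \leq E$. Next, the residual error estimation algorithm of \cite{DBLP:conf/iclr/0002LW24} invoked with $\varepsilon = 1$ gives $\|\matrixInput - [\matrixInput]_k\|_F^2 \leq \alpha_0 \leq 2\|\matrixInput - [\matrixInput]_k\|_F^2$, so dividing by $\numBuckets - k$ yields $E \leq \alpha \leq 2E$. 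These two facts will drive all subsequent estimates.

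With these in hand, I split on the branch taken in the \textsc{Query} procedure. In the first case the test $|\|\matrixSketch_2 \vec{x}\|_2^2 - \|\matrixSketch_1 \vec{x}\|_2^2| \leq 2\alpha$ succeeds and $\Gamma = \|\matrixSketch_2 \vec{x}\|_2^2$. The first argument of the $\min$ then matches $\Gamma$ exactly. For the second argument, the triangle inequality through $\|\matrixSketch_1 \vec{x}\|_2^2$, combined with $|\|\matrixInput \vec{x}\|_2^2 - \|\matrixSketch_1 \vec{x}\|_2^2| \leq E$ and $2\alpha \leq 4E$, gives $|\|\matrixInput \vec{x}\|_2^2 - \Gamma| \leq E + 4E = 5E \leq 6E$, as required.

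In the second case the test fails, so $\Gamma = \|\matrixSketch_1 \vec{x}\|_2^2$ and $|\|\matrixSketch_2 \vec{x}\|_2^2 - \|\matrixSketch_1 \vec{x}\|_2^2| > 2\alpha \geq 2E$. The second argument of the $\min$ is immediate from the classic Frequent Directions guarantee: $|\|\matrixInput \vec{x}\|_2^2 - \Gamma| \leq E \leq 6E$. For the first argument, apply the triangle inequality in the reverse direction:
\[
|\|\matrixInput \vec{x}\|_2^2 - \|\matrixSketch_2 \vec{x}\|_2^2| \geq |\|\matrixSketch_2 \vec{x}\|_2^2 - \|\matrixSketch_1 \vec{x}\|_2^2| - |\|\matrixSketch_1 \vec{x}\|_2^2 - \|\matrixInput \vec{x}\|_2^2| > 2E - E = E \geq |\|\matrixInput \vec{x}\|_2^2 - \Gamma|,
\]
which is exactly the needed inequality.

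The main obstacle, or more honestly the main bookkeeping point, is tuning the threshold in the \textsc{Query} procedure and the accuracy of the residual estimator so that both halves of the $\min$ hold simultaneously in both branches. The choice of $2\alpha$ as the threshold, together with $\varepsilon = 1$ giving $E \leq \alpha \leq 2E$, is precisely what makes the separation in the failing branch exceed the classic Frequent Directions error (yielding the first bound) while keeping the triangle inequality blow-up in the succeeding branch under the constant $6$ (yielding the second bound). The final constant $6$ is not tight; a sharper $\varepsilon$ in the residual estimator would trade a larger space overhead for a smaller constant, but the stated constant suffices.
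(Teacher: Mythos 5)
Your proof is correct and mirrors the paper's argument: the same two-case split on the \textsc{Query} branch, the same use of the Frequent Directions spectral bound from Proposition~\ref{lemma: property of frequent direction} to get $|\|\matrixInput\vec{x}\|_2^2 - \|\matrixSketch_1\vec{x}\|_2^2| \leq E$, the same $E \leq \alpha \leq 2E$ from $\mathcal{A}_{res}$ with $\varepsilon = 1$, and the same forward/reverse triangle inequalities in the two branches. Your bookkeeping in the succeeding branch actually gives the slightly sharper constant $5E$ (versus the paper's $3\alpha \leq 6E$), but both are within the claimed $6E$.
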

\begin{proof}
Suppose first that $|\|\matrixSketch_2 \vec{x}\|_2^2-\|\matrixSketch_1 \vec{x}\|_2^2|\leq 2\alpha$. Then $|\|\matrixInput \vec{x}\|_2^2-\Gamma|=\left|\|\matrixInput \vec{x}\|_2^2-\|\matrixSketch _2x\|_2^2\right|$. Moreover, by the approximation guarantees of $\mathcal{A}_{res}$ and $\algoFD$,
\[
\left|\|\matrixInput \vec{x}\|_2^2-\|\matrixSketch _2x\|_2^2\right|\leq \left|\|\matrixInput \vec{x}\|_2^2-\|\matrixSketch _1x\|_2^2\right|+\left|\|\matrixSketch _1x\|_2^2-\|\matrixSketch _2x\|_2^2\right|\leq \alpha + 2\alpha\leq 6\frac{\Vert \matrixInput  - [\matrixInput ]_{k} \Vert _F^2}{ \numBuckets-k },
\]
as desired.

Suppose on the other hand that $|\|\matrixSketch_2 \vec{x}\|_2^2-\|\matrixSketch_1 \vec{x}\|_2^2|> 2\alpha$. Since by Fact~\ref{lemma: property of frequent direction}, we always have that $\left|\|\matrixInput \vec{x}\|_2^2-\|\matrixSketch _1x\|_2^2\right|\leq \frac{\Vert \matrixInput  - [\matrixInput ]_{k} \Vert _F^2}{ \numBuckets-k }\leq \alpha$, it follows that $\left|\|\matrixInput \vec{x}\|_2^2-\|\matrixSketch _2x\|_2^2\right|>\alpha\geq \frac{\Vert \matrixInput  - [\matrixInput ]_{k} \Vert _F^2}{ \numBuckets-k }$. But since in this case, we output $\|\matrixSketch_1 \vec{x}\|_2^2$, the estimate of the standard frequent direction, we again have by Fact~\ref{lemma: property of frequent direction} that $\left|\|\matrixInput \vec{x}\|_2^2-\|\matrixSketch _2x\|_2^2\right|\leq \frac{\Vert \matrixInput  - [\matrixInput ]_{k} \Vert _F^2}{ \numBuckets-k }$ as desired.
\end{proof}
We note that the constant $6$ in the theorem can be replaced by any constant $>3$ by increasing the space used for $\mathcal{A}_{res}$.

\subsection{The Error of Non-Perfect Oracles.} We will now obtain a more fine-grained understanding of the consistency/robustness trade off of Algorithm~\ref{algo:robust}. 
Consider the SVD
$
    \matrixInput 
        = \sum_{i \in [\dimension]} \sigma_i \vec{u}_i \vec{v}_i^T.
$
Let $\vecInput{\downarrow} \doteq \matrixInput \mbfP_{H} \mbfP_{H}^T$ be the matrix of projecting the rows of $\matrixInput$ to the predicted subspace, 
and $\vecInput{\bot} \doteq \matrixInput - \vecInput{\downarrow} = \matrixInput (\bm{I} - \mbfP_{H} \mbfP_{H}^T)$.
Recall that $\mbfP_{H}$ consists of orthonormal column vectors $\vec{w}_1 ,  \ldots , \vec{w}_\numLearnedBuckets \in \R^d$.
Extending this set of vectors to form an orthonormal basis of $\R^d$: $\vec{w}_1 ,  \ldots , \vec{w}_\numLearnedBuckets, \vec{w}_{\numLearnedBuckets + 1}, \ldots, \vec{w}_d.$
Write $\mbfP_{\overline{H}} = \BRACKET{\vec{w}_{\numLearnedBuckets + 1} \, \mid \, \ldots \, \mid \, \vec{w}_d}$ the projection matrix to the orthogonal subspace.

Based on Lemma~\ref{lemma: decomposition}, for each vector, we can write 
\begin{align}
    \vec{x}^T \matrixInput^T \matrixInput \vec{x}
        &= \vec{x}^T \vecInput{\downarrow}^T \vecInput{\downarrow} \vec{x}
        + \vec{x}^T \vecInput{\bot}^T \vecInput{\bot} \vec{x}
        + 2 \cdot \sum_{i \in [\dimension]} \sigma_i^2 \cdot \angles{
            \mbfP_{H}^T \vec{v}_i, \mbfP_{H}^T \vec{x}
        } \cdot \angles{
            \mbfP_{\overline{H}}^T \vec{v}_i, \mbfP_{\overline{H}}^T \vec{x}
        } .
\end{align}

To understand the significance of Lemma~\ref{lemma: decomposition}, note that our algorithm attempts to approximate the first two terms (through either exact or approximate Frequent Direction sketches), but ignores the final one. 
Therefore, regardless of how successful it is in approximating $\vec{x}^T \vecInput{\downarrow}^T \vecInput{\downarrow} \vec{x} + \vec{x}^T \vecInput{\bot}^T \vecInput{\bot} \vec{x}$, we will have
$
    2 \cdot \sum_{i \in [\dimension]} \sigma_i^2 \cdot \angles{
                \mbfP_{H}^T \vec{v}_i, \mbfP_{H}^T \vec{x}
            } \cdot \angles{
                \mbfP_{\overline{H}}^T \vec{v}_i, \mbfP_{\overline{H}}^T \vec{x}
            } 
$ occurring as an additional added error.

Note that $\angles{
                \mbfP_{H}^T \vec{v}_i, \mbfP_{H}^T \vec{v}_j
            }$
is the inner product, between the projected vectors $\vec{v}_i$ and $\vec{v}_j$ to the subspace $H$ specified by the predicted frequent directions, 
and that $\angles{
                \mbfP_{\overline{H}}^T \vec{v}_i, \mbfP_{\overline{H}}^T \vec{v}_j
            }$
is the inner product, between the projected vectors $\vec{v}_i$ and $\vec{v}_j$ to the orthogonal complement of $H$.
In particular, if $\mbfP_{H}$ consists of a set of correctly predicted singular vectors of $\matrixInput$, then for any $i$, either $\mbfP_{H}^T \vec{v}_i$ or $\mbfP_{\overline{H}}^T \vec{v}_i $ will be zero and in particular the additional added error will be zero. In order to obtain an algorithm performing as well as if we had perfect predictions, it therefore suffices that the predictions are accurate enough that

\begin{align}
    \card{
        \sum_{i \in [\dimension]} \sigma_i^2 \cdot \angles{
               \mbfP_{H}^T \vec{v}_i, \mbfP_{H}^T \vec{v}_j
            } \cdot \angles{
                \mbfP_{\overline{H}}^T \vec{v}_i, \mbfP_{\overline{H}}^T \vec{v}_j
           } 
    } \in O \PAREN{\frac{\Vert \matrixInput  - [\matrixInput ]_{k} \Vert _F^2}{ \numBuckets }.
    }
\end{align}

To obtain a more general smoothness/robustness trade off, one can plug into Theorem~\ref{thm:robust}. Doing so in the setting of Theorem~\ref{theorem: Expected Error of the Learned FrequentDirections} where the singular values follow a Zipfian distribution, we obtain the following immediate corollary.

\begin{corollary}\label{cor:explicit-error}
Consider the setting of Theorem~\ref{theorem: Expected Error of the Learned FrequentDirections}, but where we run Algorithm~\ref{algo:robust} instead of Algorithm~\ref{algo: Learning-Based Frequent Direction} and where we make no assumptions on the quality of the oracle. Then the error $\Err\PAREN{\mathcal{A}_{RLFD}}$ is at most
\[
       O\PAREN{ \frac{1}{(\ln \dimension)^2} \cdot \frac{\norm{\matrixInput}_F^2}{\numBuckets} }+2 \sum_{i \in [\domain]} \frac{\sigma_i^2}{\norm{ \matrixInput }_F^2 } \cdot \sum_{j \in [\dimension]} \sigma_j^2 \cdot \angles{
                \mbfP_{H}^T \vec{v}_j, \mbfP_{H}^T \vec{v}_i
            } \cdot \angles{
                \mbfP_{\overline{H}}^T \vec{v}_j, \mbfP_{\overline{H}}^T \vec{v}_i
            },
\]
but also always bounded by 
\[
O\PAREN{   \frac{ \PAREN{ \ln \frac{\numBuckets}{\ln \frac{\dimension}{\numBuckets}}  } \cdot \ln \frac{\dimension}{\numBuckets}}{(\ln \dimension)^2} \cdot \frac{\norm{\matrixInput}_F^2}{\numBuckets}  }.
\]
\end{corollary}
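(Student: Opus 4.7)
The plan is to derive both bounds from Theorem~\ref{thm:robust}, which states that the pointwise estimation error of $\mathcal{A}_{RLFD}$ on any unit query direction is simultaneously bounded by the pointwise error of the inner learned sketch $\mathcal{A}_{LFD}$ and by a constant multiple of the classical Frequent Directions bound $\|\matrixInput-[\matrixInput]_k\|_F^2/(\numBuckets-k)$. Because this minimum is taken pointwise, it propagates through the weighted sum $\sum_i(\sigma_i^2/\|\matrixInput\|_F^2)(\cdot)$ defining $\Err$, giving
\[
\Err(\mathcal{A}_{RLFD}) \le \min\bigl(\Err(\mathcal{A}_{LFD}),\, 6\,\Err(\mathcal{A}_{FD})\bigr),
\]
and the two displayed inequalities of the corollary are exactly the two arms of this minimum. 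The second (robust) bound is then immediate from Theorem~\ref{theorem: Expected Error of the FrequentDirections}, which computes $\Err(\mathcal{A}_{FD})$ under the Zipfian assumption to be $\Theta\bigl((\ln(\numBuckets/\ln(d/\numBuckets)))\cdot(\ln(d/\numBuckets))/(\ln d)^2 \cdot \|\matrixInput\|_F^2/\numBuckets\bigr)$.

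For the first (consistency) bound I would apply Lemma~\ref{lemma: decomposition} at $\vec{x}=\vec{v}_i$. Since $\matrixSketch_2$ is the row-wise stacking of $\matrixSketch^{\downarrow}$ and $\matrixSketch^{\bot}$, one has $\|\matrixSketch_2 \vec{v}_i\|_2^2 = \|\matrixSketch^{\downarrow}\vec{v}_i\|_2^2 + \|\matrixSketch^{\bot}\vec{v}_i\|_2^2$, and the lemma yields
\begin{align*}
\vec{v}_i^T\matrixInput^T\matrixInput\vec{v}_i - \|\matrixSketch_2 \vec{v}_i\|_2^2
  &= \vec{v}_i^T\bigl(\matrixInput_\downarrow^T\matrixInput_\downarrow - (\matrixSketch^{\downarrow})^T\matrixSketch^{\downarrow}\bigr)\vec{v}_i + \vec{v}_i^T\bigl(\matrixInput_\bot^T\matrixInput_\bot - (\matrixSketch^{\bot})^T\matrixSketch^{\bot}\bigr)\vec{v}_i \\
  &\quad + 2\sum_j \sigma_j^2\,\langle\mbfP_{H}^T\vec{v}_j,\mbfP_{H}^T\vec{v}_i\rangle\langle\mbfP_{\overline{H}}^T\vec{v}_j,\mbfP_{\overline{H}}^T\vec{v}_i\rangle.
\end{align*}
By Fact~\ref{lemma: property of frequent direction} the first two bracketed quantities are non-negative; after taking absolute values and weighting by $\sigma_i^2/\|\matrixInput\|_F^2$, they contribute the weighted FD errors of $\algoFD^{\downarrow}$ and $\algoFD^{\bot}$, which I would bound by $O(\|\matrixInput\|_F^2/(\numBuckets\ln^2 d))$ by reproducing the Zipfian tail calculation from the proof of Theorem~\ref{theorem: Expected Error of the Learned FrequentDirections}. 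The third summand is carried through unchanged (its sign absorbed into the absolute value) and reproduces the cross term stated in the corollary.

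The main obstacle is establishing that the weighted intra-subspace errors still collapse to the perfect-prediction rate $O(\|\matrixInput\|_F^2/(\numBuckets\ln^2 d))$ for an arbitrary predicted projector $\mbfP_{H}$. The $1/\ln^2 d$ saving in Theorem~\ref{theorem: Expected Error of the Learned FrequentDirections} used the per-direction inequality $\vec{v}_i^T\matrixInput_\downarrow^T\matrixInput_\downarrow\vec{v}_i \le \sigma_i^2$ (and analogously for $\matrixInput_\bot$) for large $i$, which can fail when predictions are imperfect because of spillover between the two subspaces. The key observation is that this spillover is precisely the weighted cross term---moving the cross term onto the right-hand side of the corollary (charging it as an explicit additive error) leaves the residual intra-subspace errors to be controlled as though predictions were perfect, closing the argument. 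Taking the minimum with the robust arm then yields both bounds of the corollary.
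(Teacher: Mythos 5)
The paper gives no explicit proof of this corollary; the text before it says only that one ``can plug into Theorem~\ref{thm:robust}'' and the Lemma~\ref{lemma: decomposition} decomposition is offered as heuristic motivation, so your proposal is being measured against a proof the paper left implicit.

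Your derivation of the second (robustness) bound is correct and matches the intended route: Theorem~\ref{thm:robust} gives a pointwise $\min$ with a constant times the classical Frequent Directions error, the $\min$ passes through the weighted sum, and the FD arm is then computed under Zipfian by Theorem~\ref{theorem: Expected Error of the FrequentDirections}. Likewise, the setup for the first bound --- decompose $\norm{\matrixInput \vec{v}_i}_2^2 - \norm{\matrixSketch_2 \vec{v}_i}_2^2$ via Lemma~\ref{lemma: decomposition}, observe that $\norm{\matrixSketch_2\vec{v}_i}_2^2 = \norm{\matrixSketch^{\downarrow}\vec{v}_i}_2^2+\norm{\matrixSketch^{\bot}\vec{v}_i}_2^2$, and separate the two intra-subspace errors from the cross term --- is the right skeleton, and you correctly identify the obstacle: the $O(\norm{\matrixInput}_F^2/(\numBuckets(\ln d)^2))$ rate of Theorem~\ref{theorem: Expected Error of the Learned FrequentDirections} depends on $\vec{v}_j^T\vecInput{\bot}^T\vecInput{\bot}\vec{v}_j = 0$ for $j \le \numLearnedBuckets$, which is a perfect-prediction fact.

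However, the resolution you propose --- ``the spillover is precisely the weighted cross term; moving it onto the right-hand side leaves the residual intra-subspace errors to be controlled as though predictions were perfect'' --- does not actually close the gap, because the spillover into the $\bot$ sketch and the cross term are not the same quantity, and one can vanish while the other does not. Take $\mbfP_H$ to have columns exactly $\vec{v}_2,\dots,\vec{v}_{\numLearnedBuckets+1}$, i.e.\ a ``wrong'' but nonetheless \emph{exact} subset of true singular vectors. Then for every index $i$ either $\mbfP_H^T\vec{v}_i = 0$ or $\mbfP_{\overline{H}}^T\vec{v}_i = 0$, so \emph{every} cross term $\text{cross}_i = 2\sum_j\sigma_j^2\angles{\mbfP_H^T\vec{v}_j,\mbfP_H^T\vec{v}_i}\angles{\mbfP_{\overline{H}}^T\vec{v}_j,\mbfP_{\overline{H}}^T\vec{v}_i}$ vanishes identically, yet $\vecInput{\bot}$ now carries $\vec{v}_1$ with its full weight $\sigma_1^2$. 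The weighted contribution of direction $i=1$ to the $\algoFD^{\bot}$ error is $\frac{\sigma_1^2}{\norm{\matrixInput}_F^2}E_1$ where $E_1 = \vec{v}_1^T(\vecInput{\bot}^T\vecInput{\bot}-\matrixSketch^{\bot T}\matrixSketch^{\bot})\vec{v}_1$, and the only bounds available are $E_1 \le \vec{v}_1^T\vecInput{\bot}^T\vecInput{\bot}\vec{v}_1 = \sigma_1^2$ (useless here) or the max-error FD bound of Proposition~\ref{lemma: property of frequent direction} applied to $\algoFD^{\bot}$ with $\Theta(\numBuckets)$ buckets, which under the Zipfian tail carries an extra $\ln\frac{d}{\numBuckets}$ factor over $\norm{\matrixInput}_F^2/(\numBuckets\ln d)$. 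Weighting by $\sigma_1^2/\norm{\matrixInput}_F^2 = \Theta(1/\ln d)$ gives a term of order $\norm{\matrixInput}_F^2\ln\frac{d}{\numBuckets}/(\numBuckets(\ln d)^2)$ that the first summand of the corollary does not cover and the cross term (being zero) does not absorb. So the step ``closing the argument'' is a genuine gap: the cross-term charge does not reduce the intra-subspace analysis to the perfect-prediction case, and you would need an additional argument --- for instance a direction-specific bound on $E_i$ finer than the uniform FD spectral bound, or a modification to the corollary statement --- to make this step go through.
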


We finish by showing an example demonstrating that even with very accurate predictions, the extra added error can be prohibitive. Assume that the input space is $\R^2$, and the input vectors are either $(1, 0)$ or $(0, 1)$.
Assume that $\sigma_1^2 = 10^7$, $\sigma_2^2 = 1,$, $\vec{v}_1 = (1, 0)$, and $\vec{v}_2 = (0, 1)$.

In this case, assume that $\numLearnedBuckets = 1$.
A perfect $\mbfP_{H}$ should be $\mbfP_{H} = (1, 0)$, but we will assume that the actual prediction we get is a little perturbed, say we change it to $\mbfP_{H} = (\cos \frac{1}{100}, \sin \frac{1}{100})$.
Therefore, $\mbfP_{\overline{H}} = (\sin \frac{1}{100}, -\cos \frac{1}{100})$,
\begin{align}
    \sum_{i \in [2]} \sigma_i^2 \cdot \angles{
            \mbfP_{H}^T \vec{v}_i, \mbfP_{H}^T \vec{v}_1
        } \cdot \angles{
            \mbfP_{\overline{H}}^T \vec{v}_i, \mbfP_{\overline{H}}^T \vec{v}_1
        } 
    &=
    10^7 \cdot \angles{\cos \frac{1}{100}, \cos \frac{1}{100} } \cdot \angles{\sin \frac{1}{100}, \sin \frac{1}{100} } \\
    &+ 
    1 \cdot \angles{\sin \frac{1}{100}, \cos \frac{1}{100} } \cdot \angles{-\cos \frac{1}{100}, \sin \frac{1}{100} } \\
    &\approx 10^7 \cos^2 \frac{1}{100} \cdot \sin^2 \frac{1}{100} \\
    &\approx 10^7 \cdot \frac{1}{100^2} 
    \approx 10^3.
\end{align}

In general, assume that $\mbfP_{H} = (\cos \theta, \sin \theta)$ for small $\theta.$
The 
\begin{align}
    \sum_{i \in [2]} \sigma_i^2 \cdot \angles{
            \mbfP_{H}^T \vec{v}_i, \mbfP_{H}^T \vec{v}_1
        } \cdot \angles{
            \mbfP_{\overline{H}}^T \vec{v}_i, \mbfP_{\overline{H}}^T \vec{v}_1
        } 
    &= \sigma_1^2 \cos^2 \theta \cdot \sin^2 \theta  \approx \sigma_1^2 \theta^2
\end{align}
So we need $\theta \approx 1 / \sqrt{m}$, in order that this bound is comparable with the normal FD bound.

\newpage
\section{Additional Experiments}
\label{appendix-experiments}
In this section, we include figures which did not fit in the main text.

\subsection{Dataset Statistics}

\begin{figure}[H]
    \centering
    \includegraphics[width=0.45\linewidth]{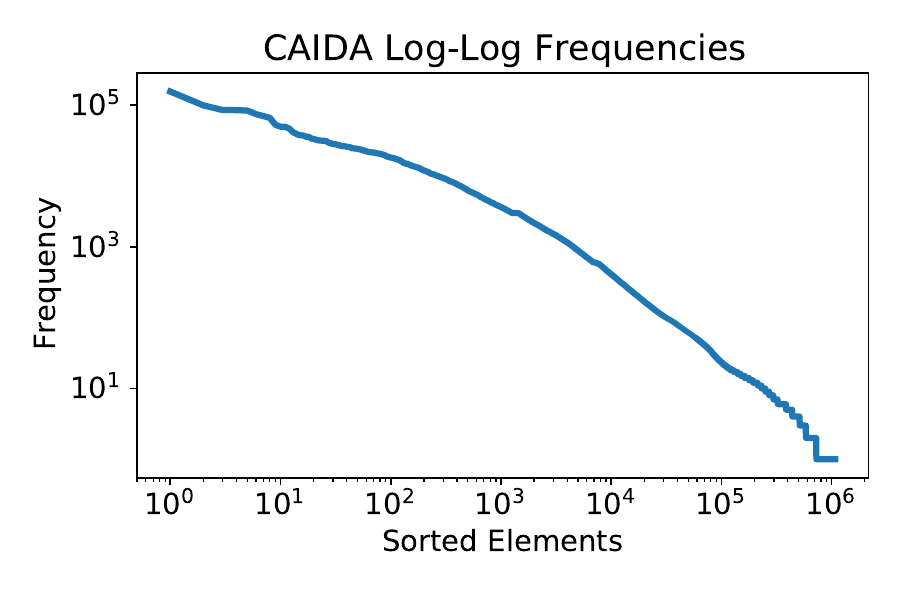}
    \includegraphics[width=0.45\linewidth]{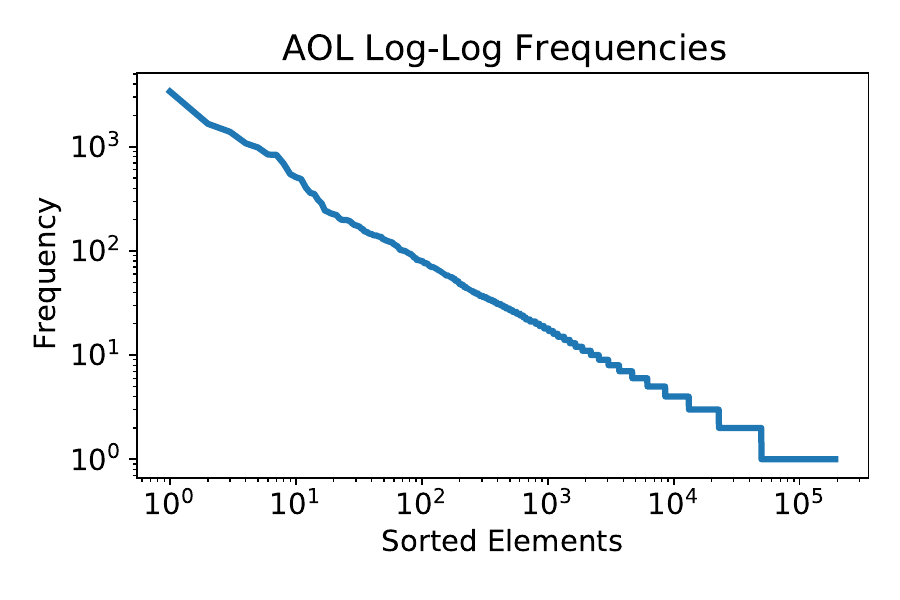}
    \caption{Log-log plot of frequencies for the CAIDA and AOL datasets.}
\end{figure}

\begin{figure}[H]
    \centering
    \includegraphics[width=0.45\linewidth]{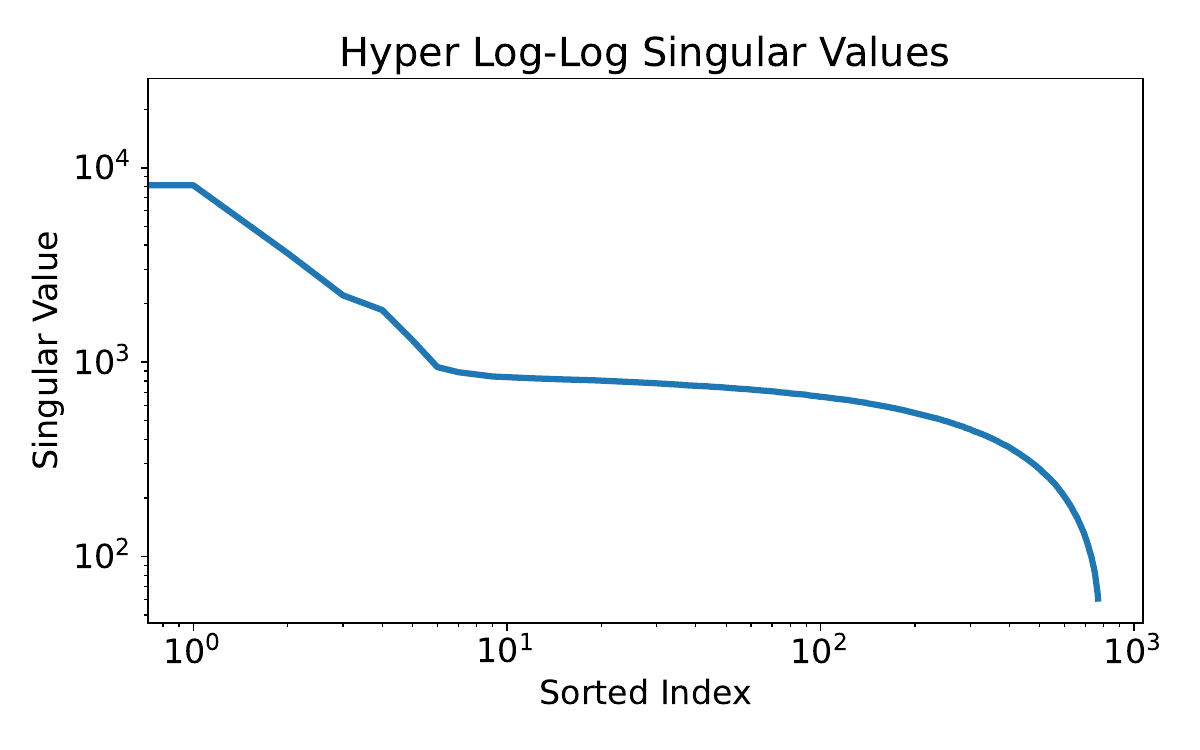}
    \includegraphics[width=0.45\linewidth]{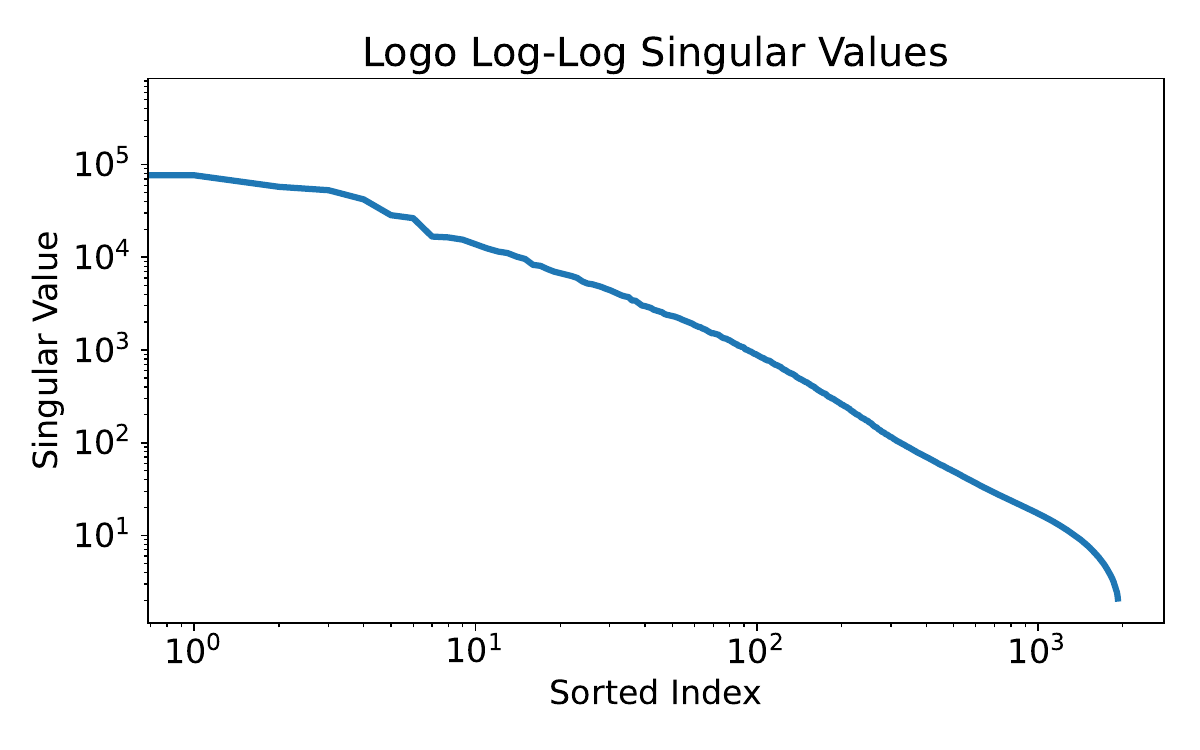}
    \caption{Log-log plot of singular values for the first Hyper and Logo matrices.}
\end{figure}

\begin{figure}[H]
    \centering
    \includegraphics[width=0.45\linewidth]{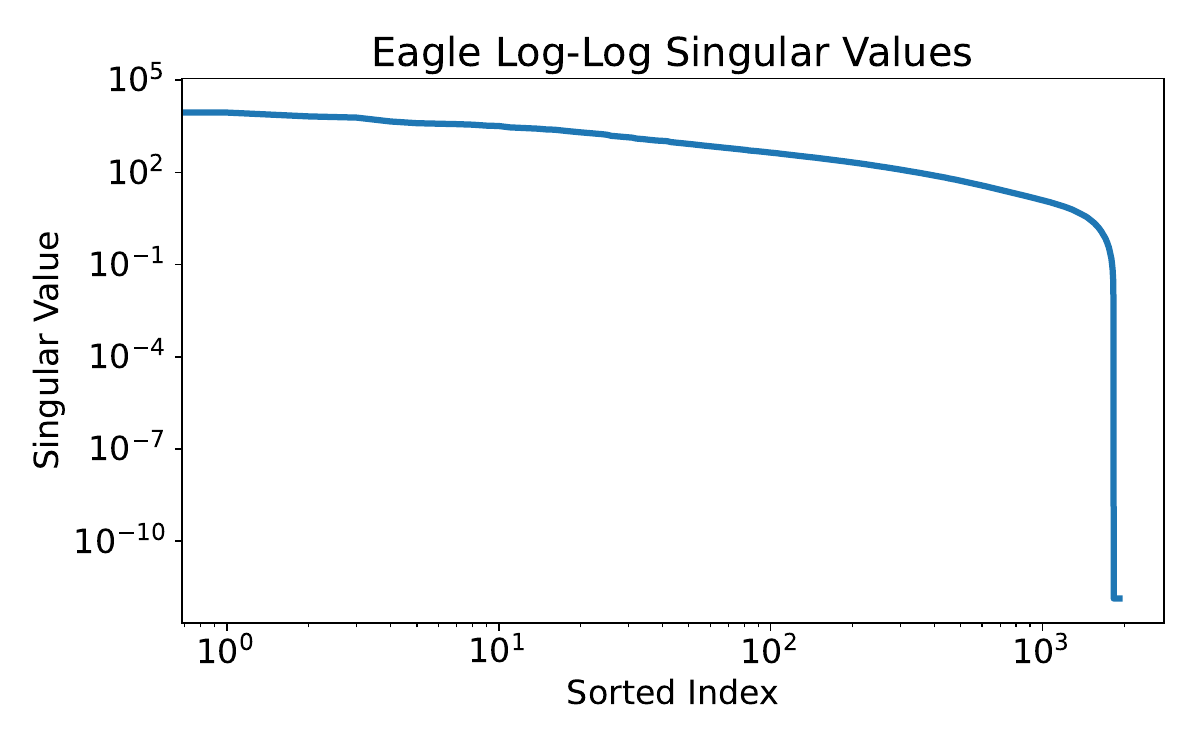}
    \caption{Log-log plot of singular values for the first Eagle and Friends matrices.}
\end{figure}

\subsection{Noise Analysis in Frequent Directions}

We present the following figure for the 

\begin{figure}[H]
    \centering
    \includegraphics[width=0.5\linewidth]{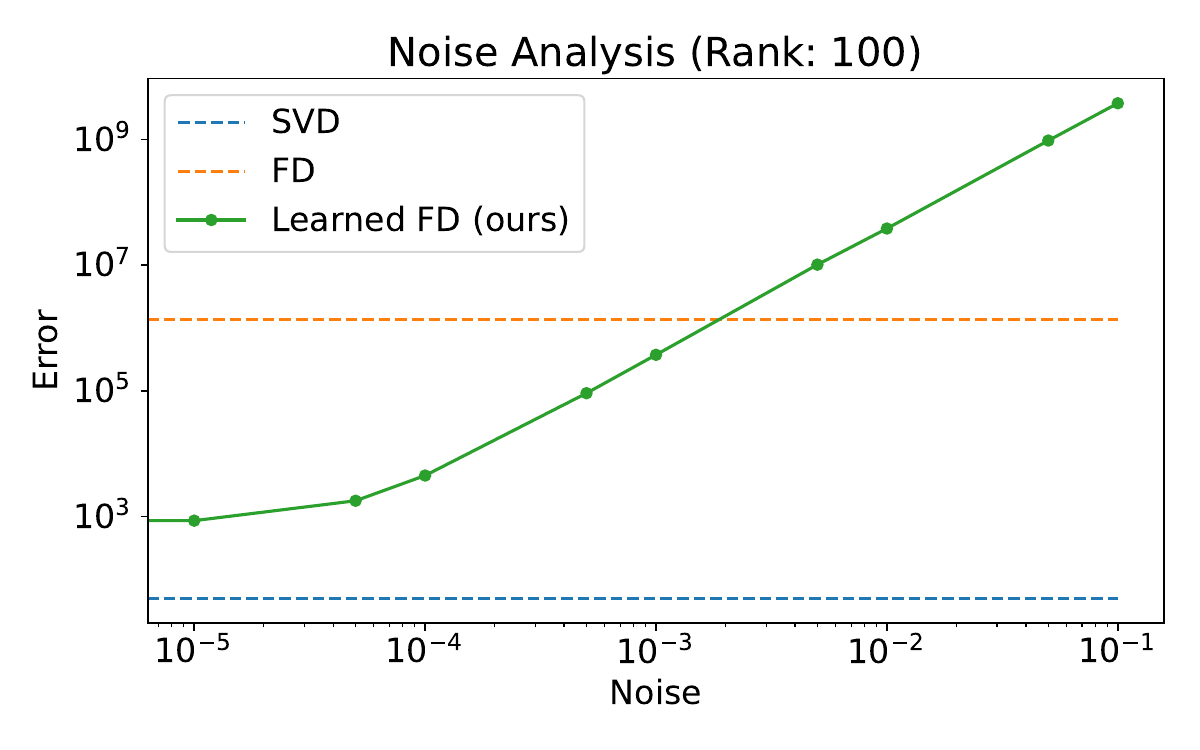}
    \label{fig:fd-noise}
    \caption{Analysis of prediction noise in matrix streaming on the first matrix of the Logo dataset. The rank of the algorithms is 100. The baselines of Frequent Directions and the true SVD are shown as dashed lines. Our learned Frequent Directions algorithm uses perfect predictions corrupted by a matrix of Gaussian noise with standard deviation $\sigma/\sqrt{d}$ where $\sigma$ is displayed as the amount of noise on the horizontal axis. The linear relationship on the log-log plot indicates that the performance of our algorithm decays polynomially with the amount of noise.}
\end{figure}

\subsection{Additional Frequent Directions Experiments}
We present plots of error/rank tradeoffs and error across sequences of matrices with fixed rank for all four datasets Hyper, Logo, Eagle, and Friends.

\subsubsection{Hyper dataset}

\begin{figure}[H]
    \centering
    \includegraphics[width=0.45\linewidth]{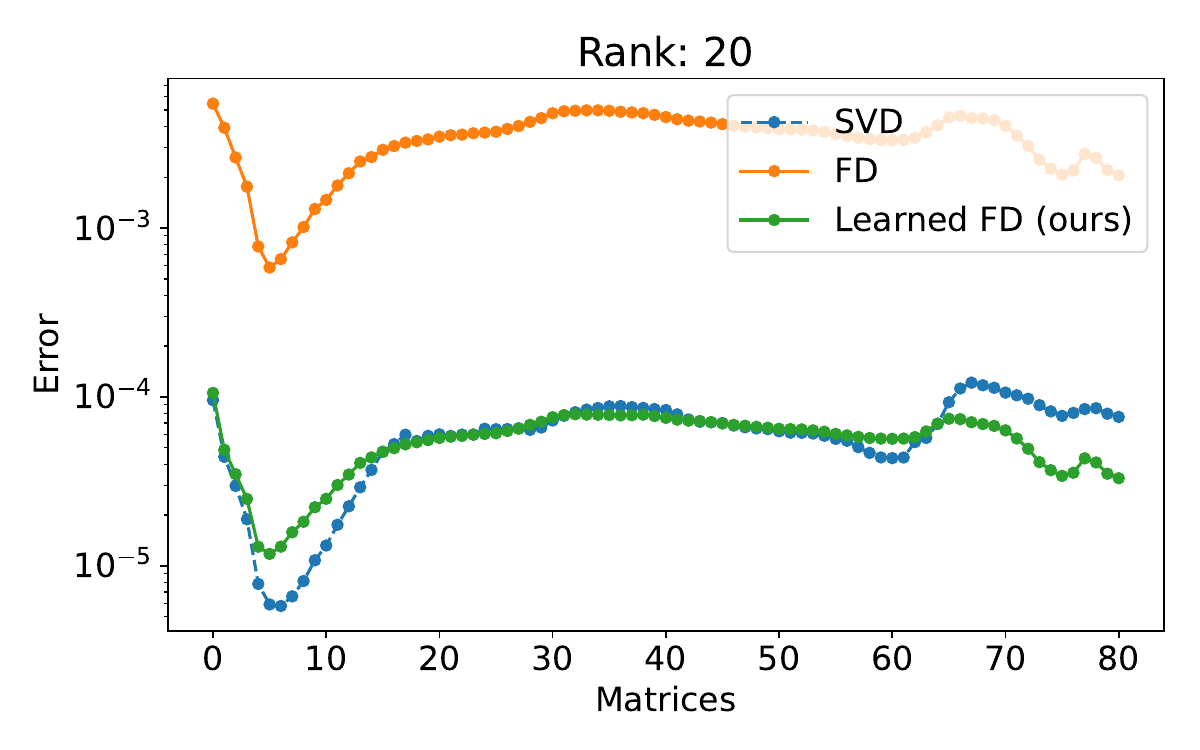}
    \includegraphics[width=0.45\linewidth]{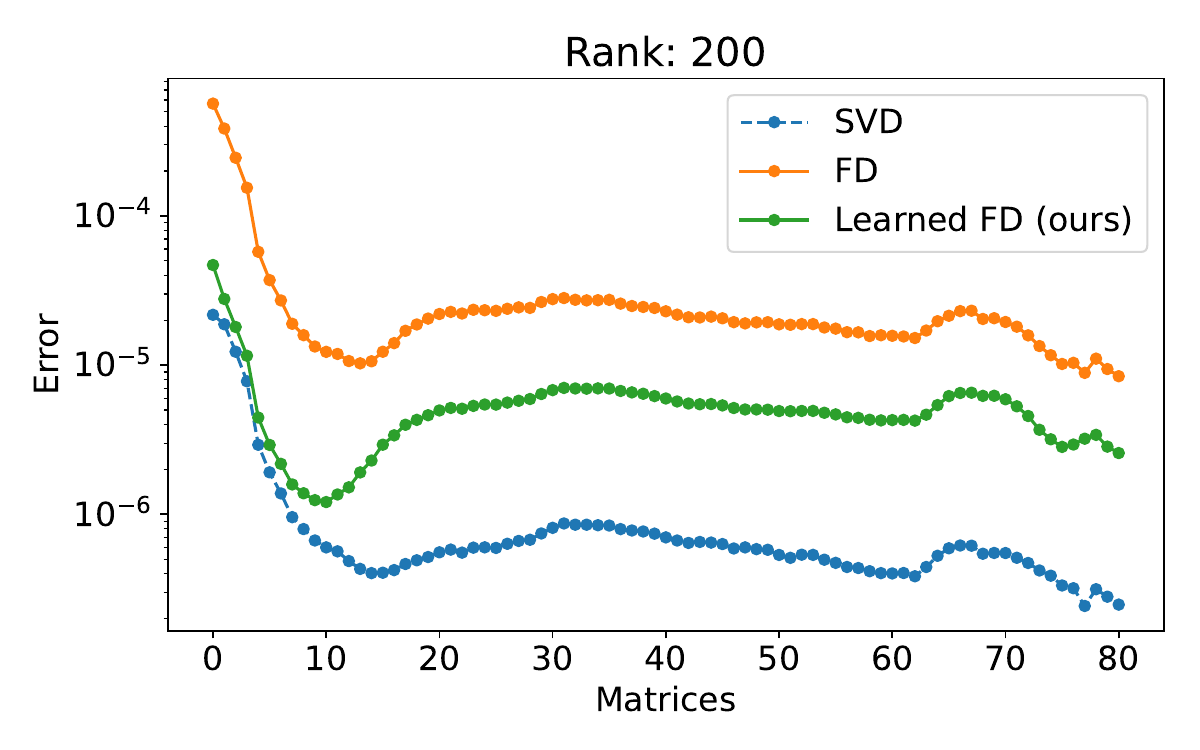}
    \caption{Frequent directions results on the Hyper dataset.}
\end{figure}

\subsubsection{Logo dataset}

\begin{figure}[H]
    \centering
    \includegraphics[width=0.45\linewidth]{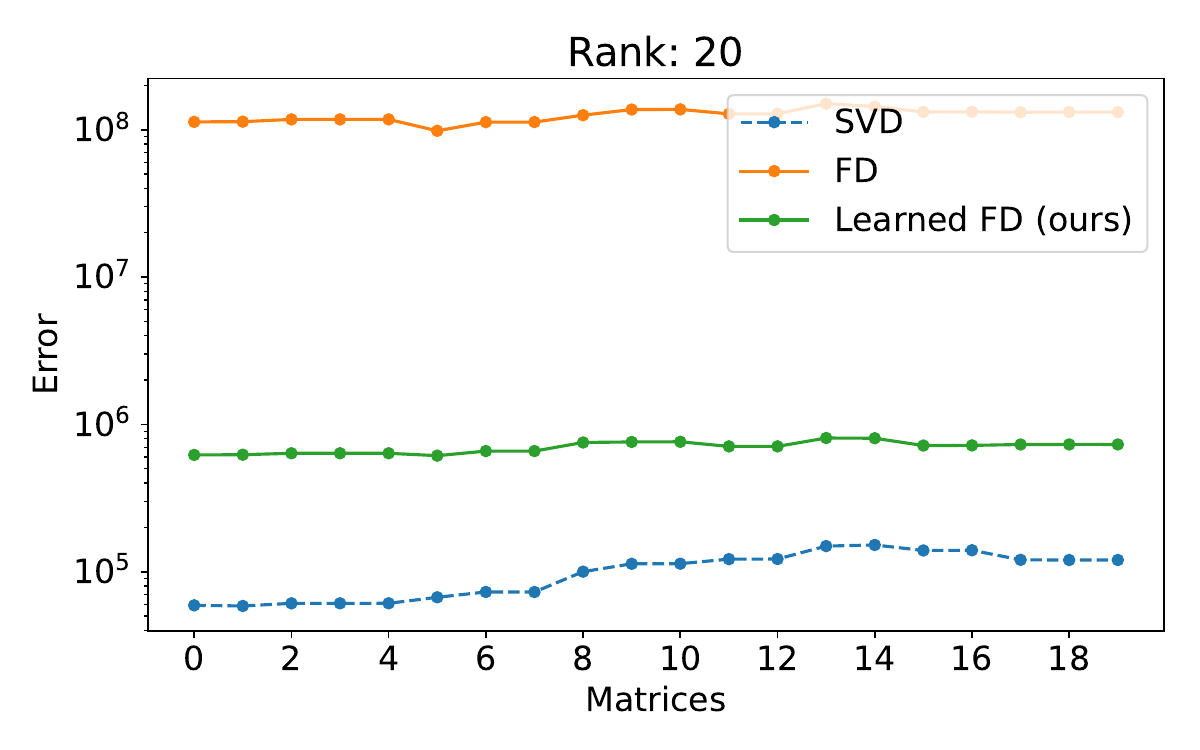}
    \includegraphics[width=0.45\linewidth]{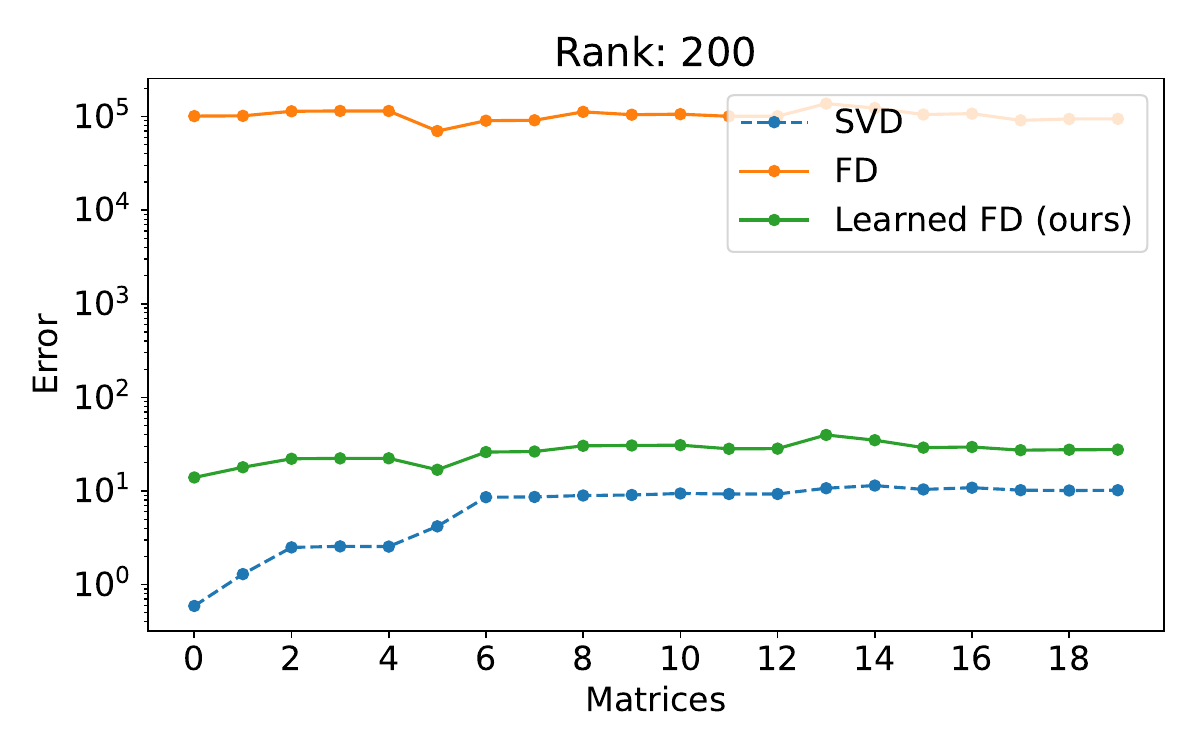}
    \caption{Frequent directions results on the Logo dataset.}
\end{figure}

\subsubsection{Eagle dataset}

\begin{figure}[H]
    \centering
    \includegraphics[width=0.45\linewidth]{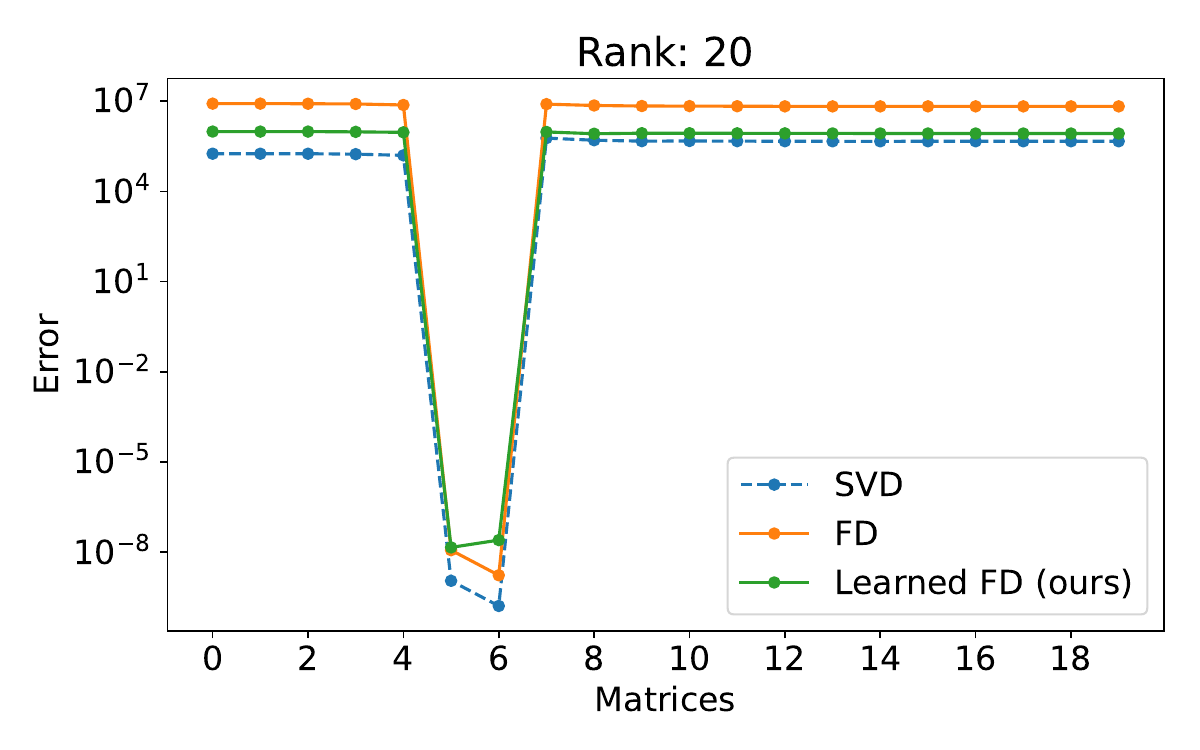}
    \includegraphics[width=0.45\linewidth]{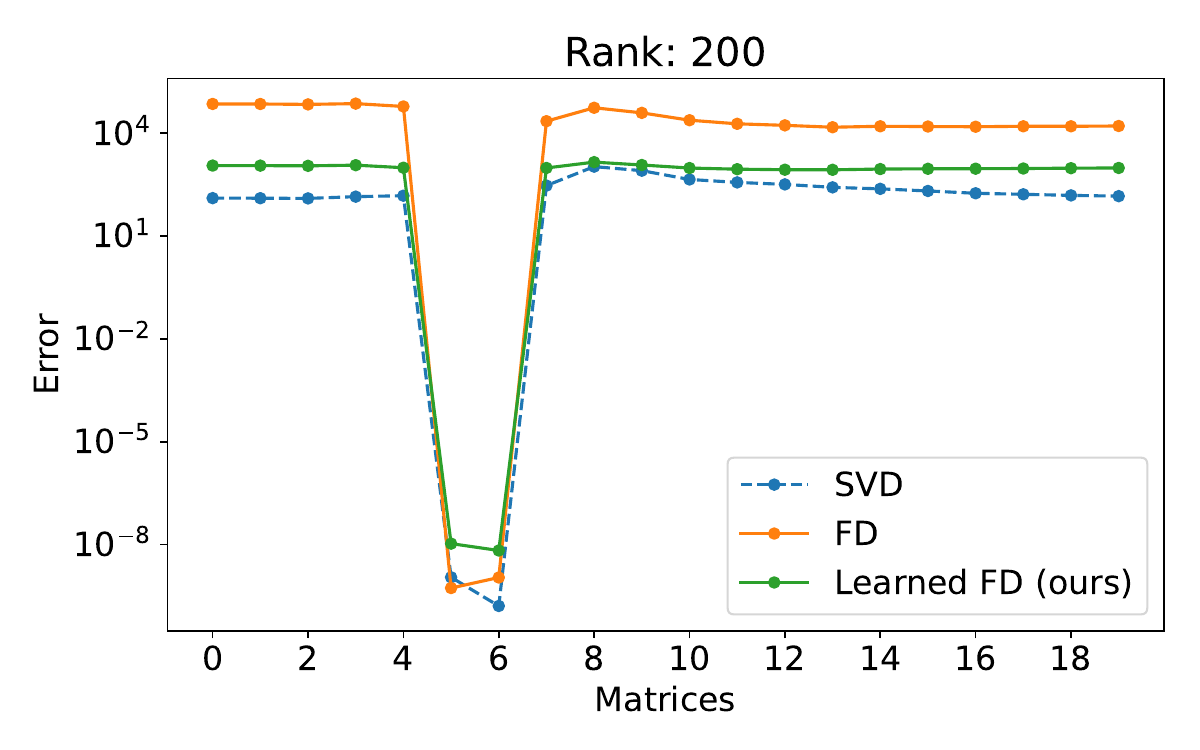}
    \caption{Frequent directions results on the Eagle dataset.}
\end{figure}

\subsubsection{Friends dataset}

\begin{figure}[H]
    \centering
    \includegraphics[width=0.45\linewidth]{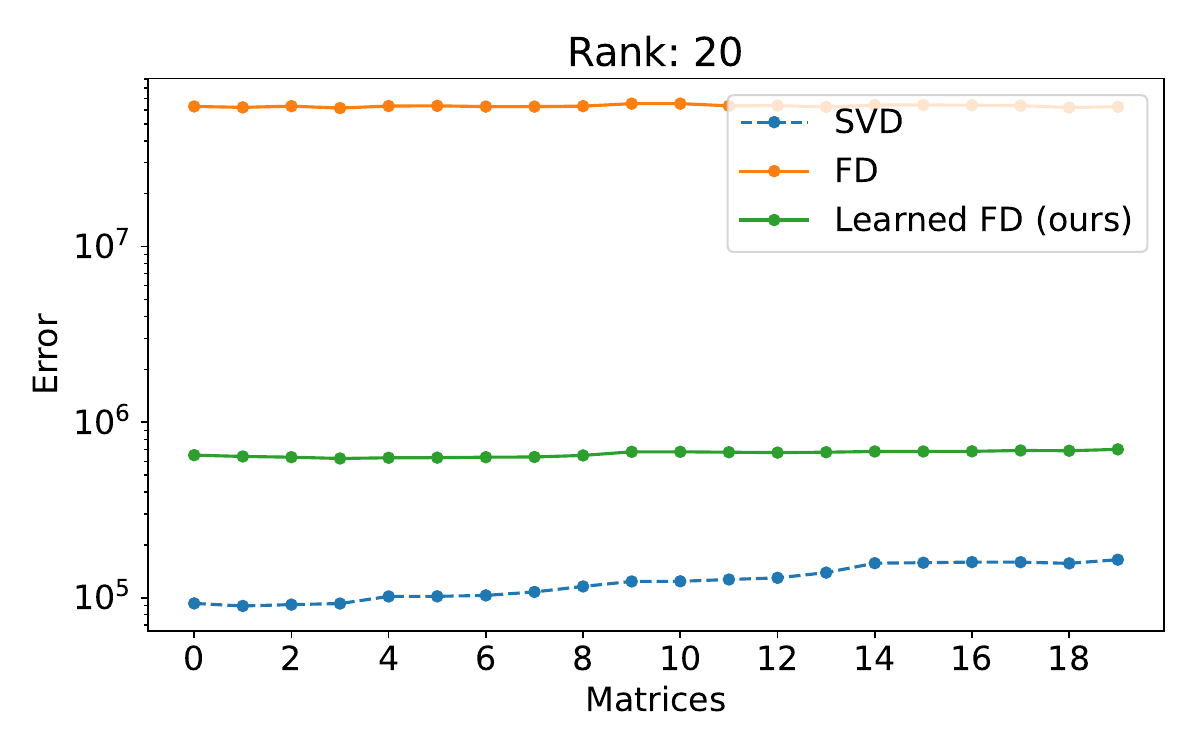}
    \includegraphics[width=0.45\linewidth]{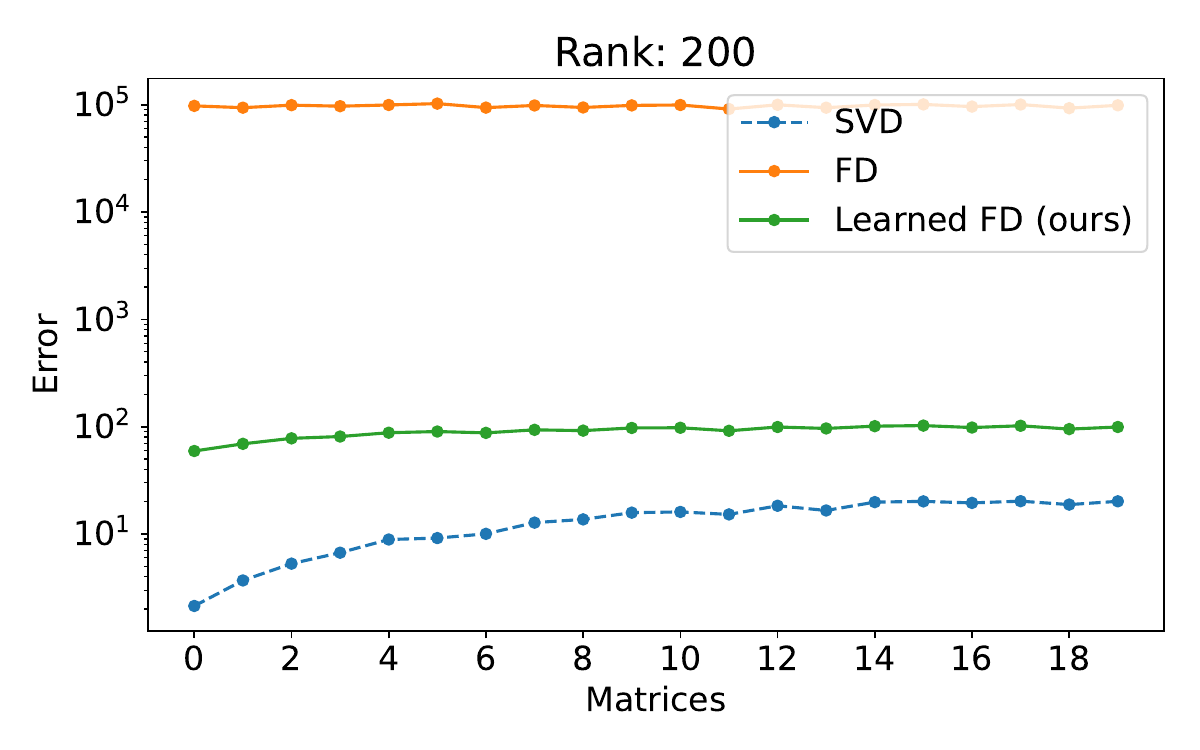}
    \caption{Frequent directions results on the Friends dataset.}
\end{figure}

\subsection{Additional Frequency Estimation Experiments}

Here, we present all frequency estimation results comparing our Learned Misra-Gries algorithm with Learned CountSketch of \cite{HsuIKV19} and Learned CountSketch++ of \cite{AamandCNSV23}. We present results both with and without learned predictions. Additionally, we present results both with standard weighted error discussed in this paper as well as unweighted error also evaluated in the experiments of prior work. The unweighted error corresponds to taking the sum of absolute errors across all items appearing in the stream (not weighted by their frequencies).

\subsubsection{No Predictions, Weighted Error}

\begin{figure}[H]
    \centering
    \includegraphics[width=0.45\linewidth]{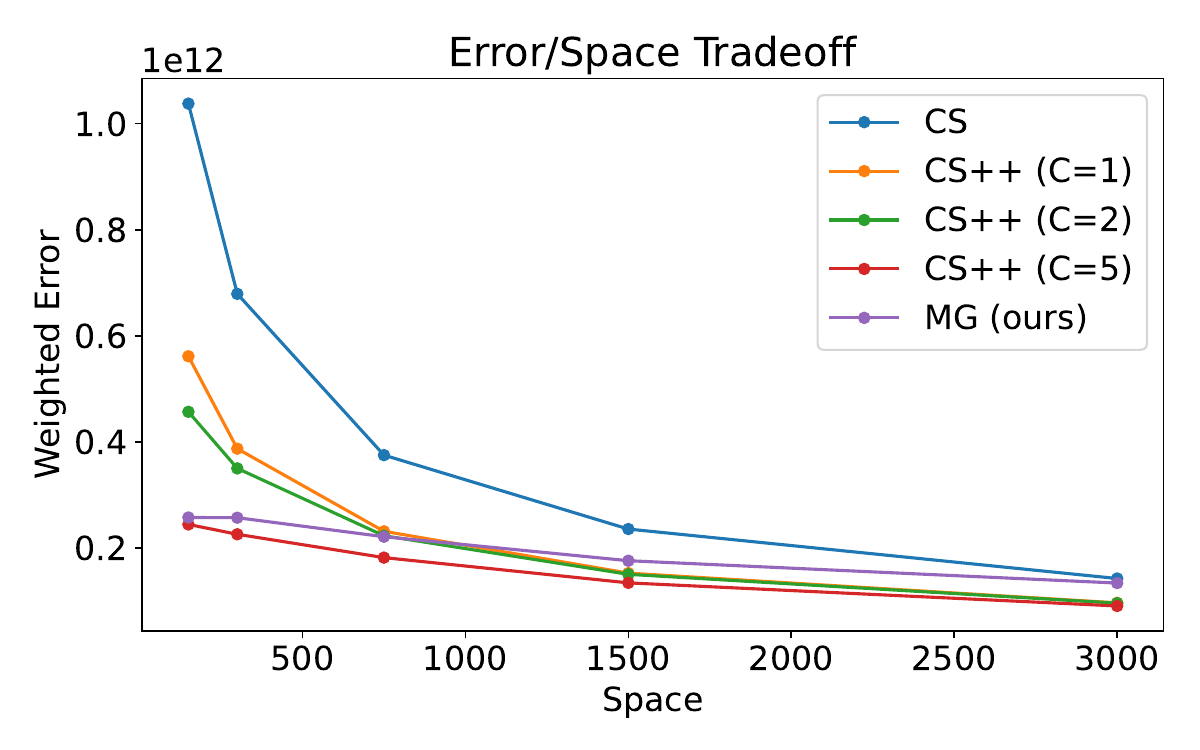}
    \includegraphics[width=0.45\linewidth]{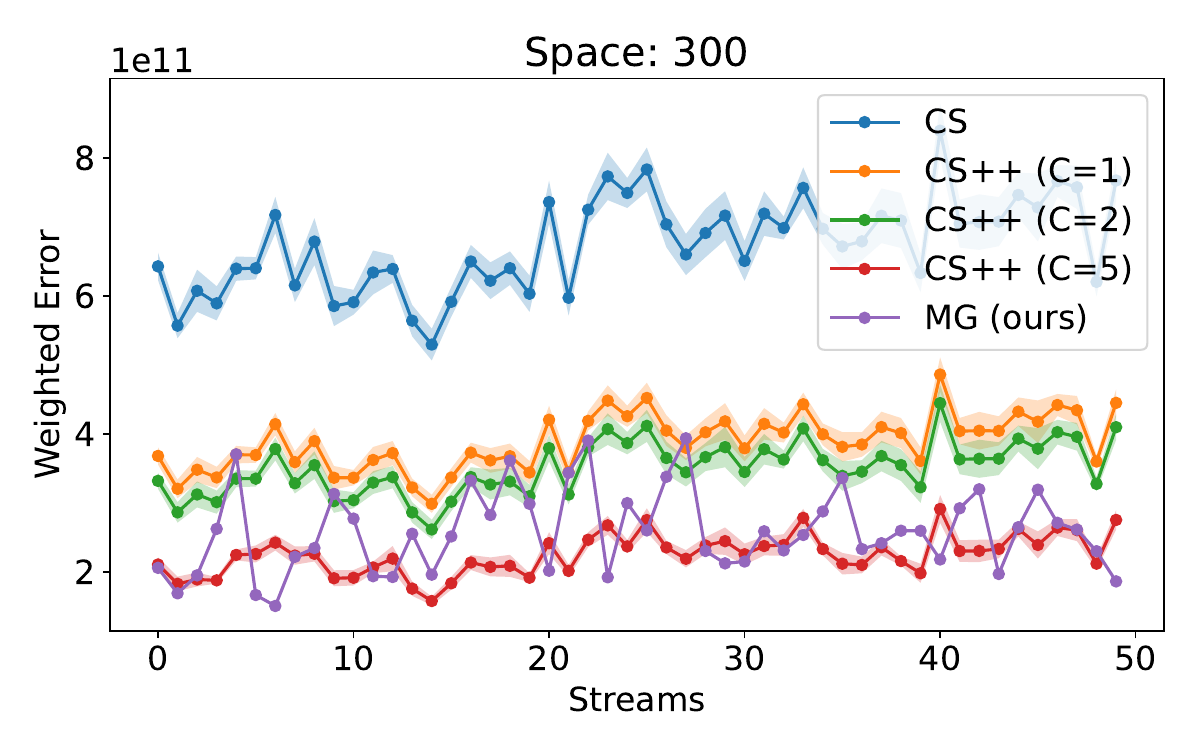}
    \caption{Frequency estimation on the CAIDA dataset with weighted error and no predictions.}
\end{figure}

\begin{figure}[H]
    \centering
    \includegraphics[width=0.45\linewidth]{ICLR/plots/aol/werr-predFalse-avg.pdf}
    \includegraphics[width=0.45\linewidth]{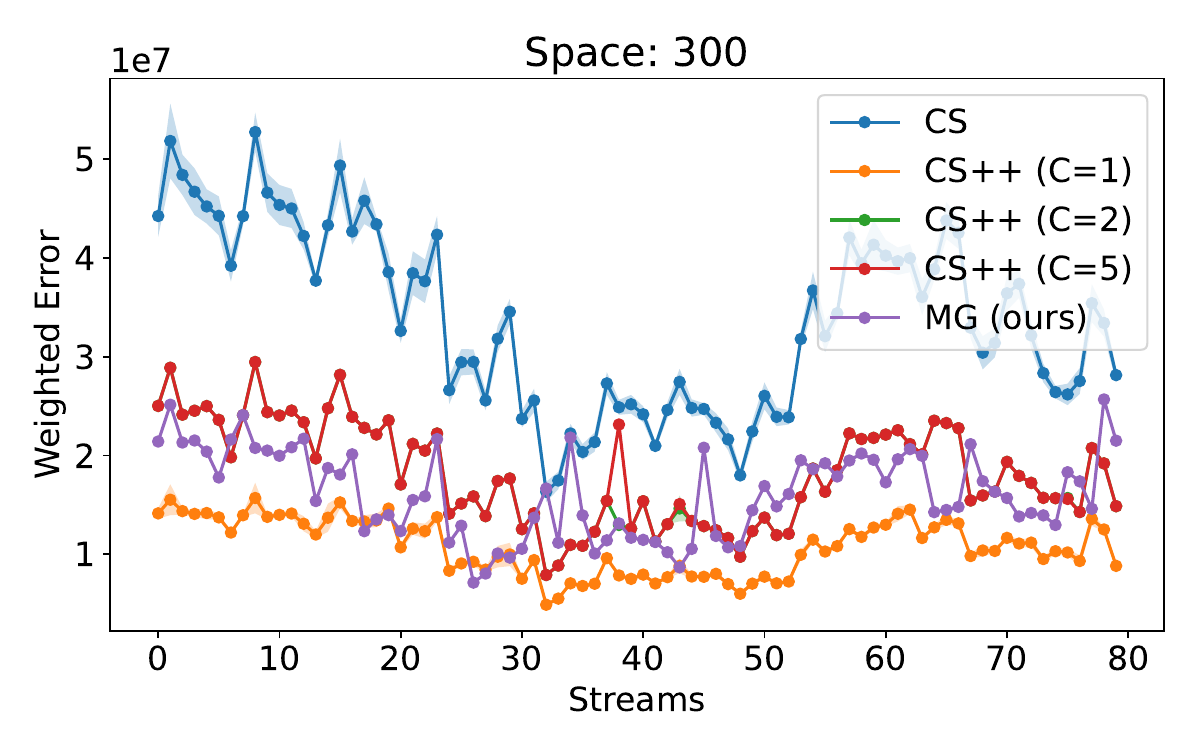}
    \caption{Frequency estimation on the AOL dataset with weighted error and no predictions.}
\end{figure}

\subsubsection{With Predictions, Weighted Error}

\begin{figure}[H]
    \centering
    \includegraphics[width=0.45\linewidth]{ICLR/plots/ip/werr-predTrue-avg.pdf}
    \includegraphics[width=0.45\linewidth]{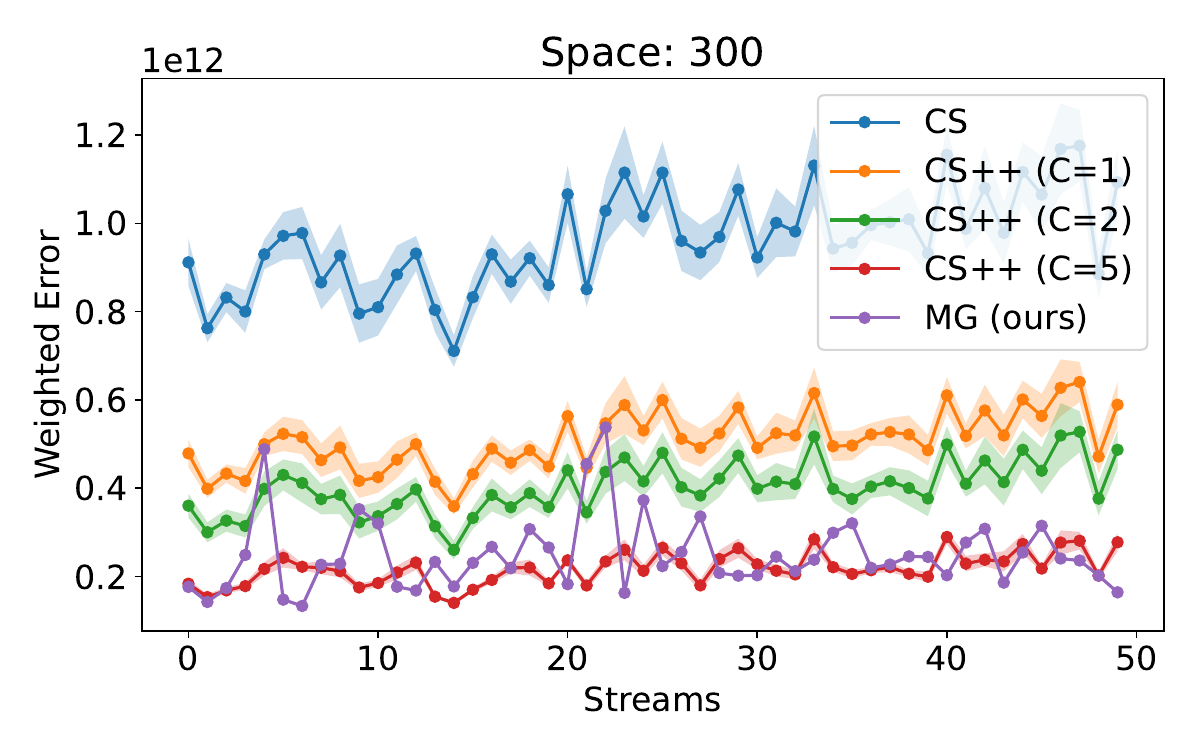}
    \caption{Frequency estimation on the CAIDA dataset with weighted error and learned predictions.}
\end{figure}

\begin{figure}[H]
    \centering
    \includegraphics[width=0.45\linewidth]{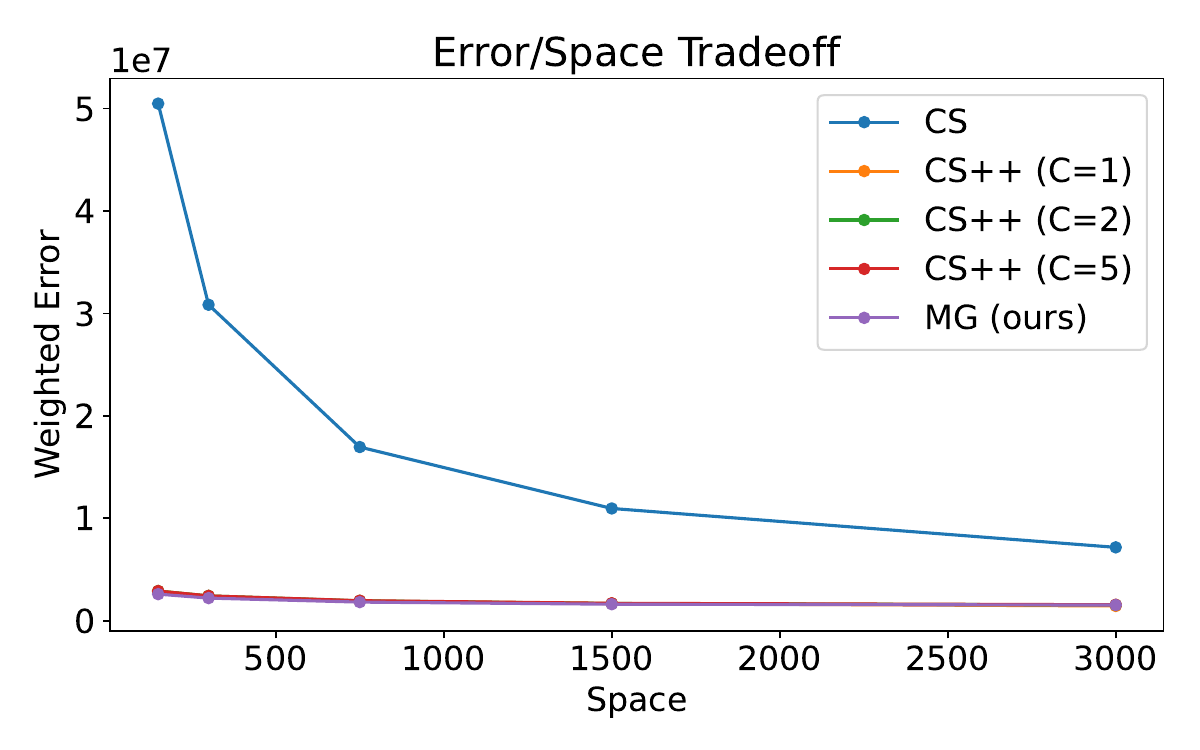}
    \includegraphics[width=0.45\linewidth]{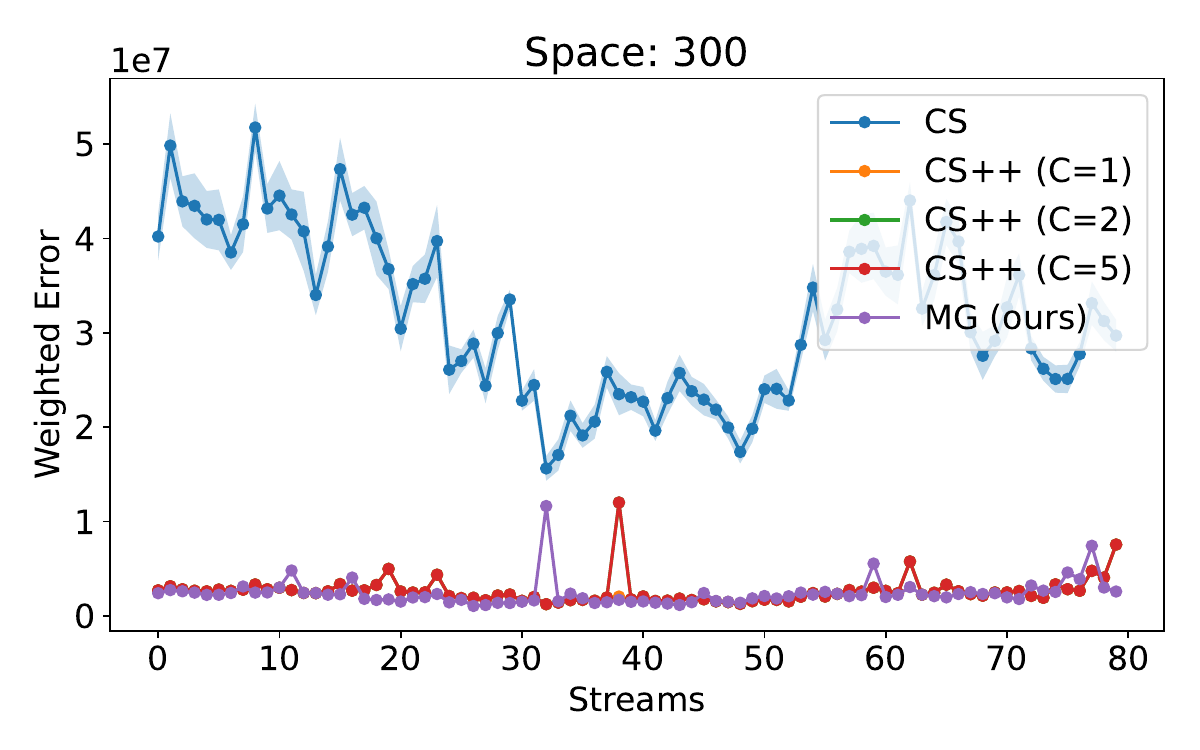}
    \caption{Frequency estimation on the AOL dataset with weighted error and learned predictions.}
\end{figure}

\subsubsection{No Predictions, Unweighted Error}

\begin{figure}[H]
    \centering
    \includegraphics[width=0.45\linewidth]{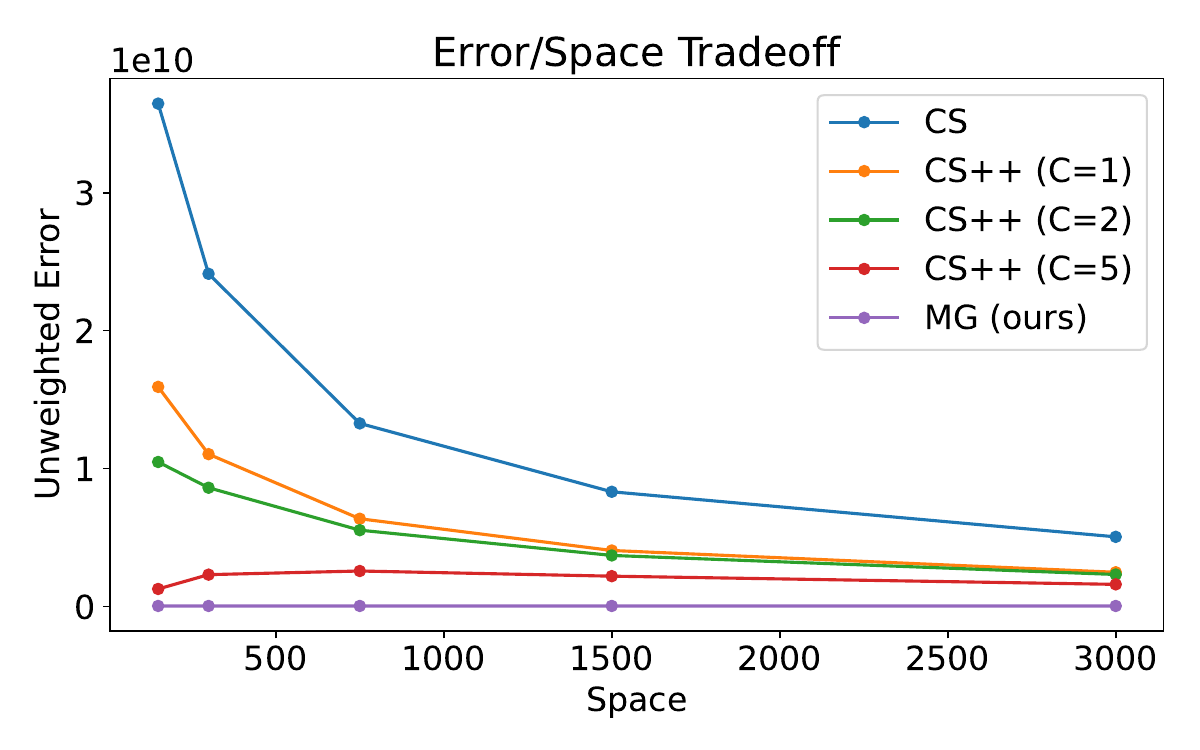}
    \includegraphics[width=0.45\linewidth]{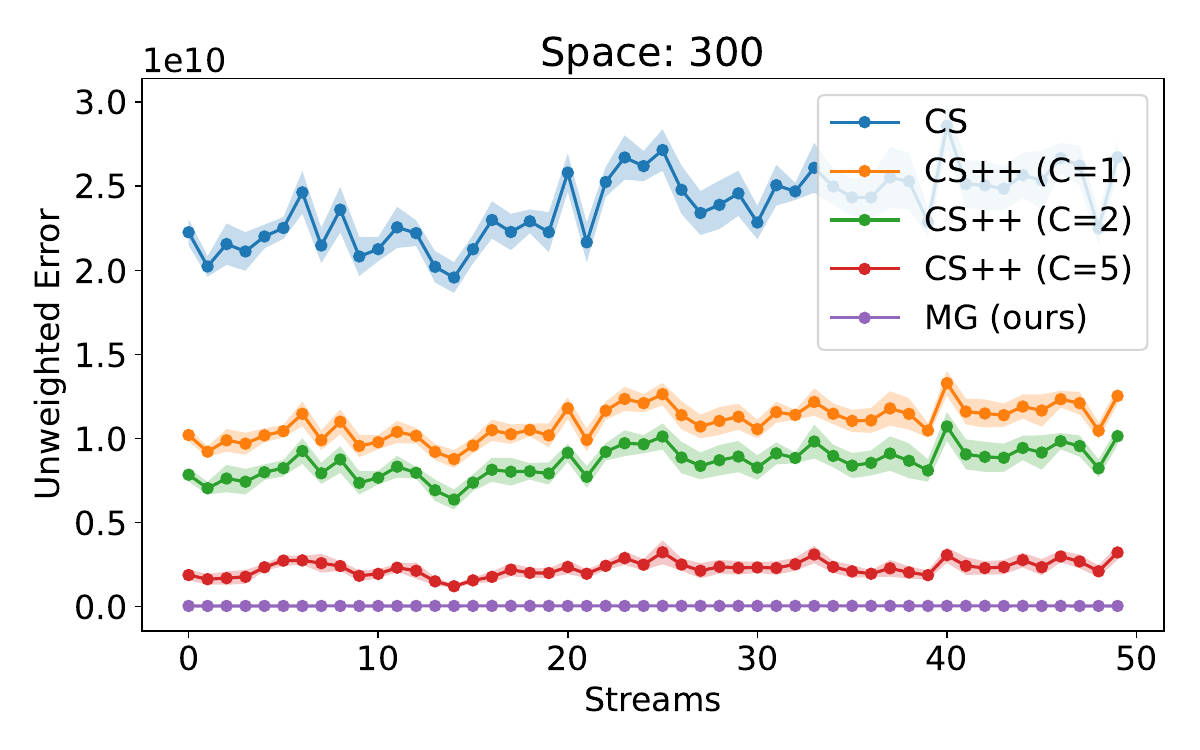}
    \caption{Frequency estimation on the CAIDA dataset with unweighted error and no predictions.}
\end{figure}

\begin{figure}[H]
    \centering
    \includegraphics[width=0.45\linewidth]{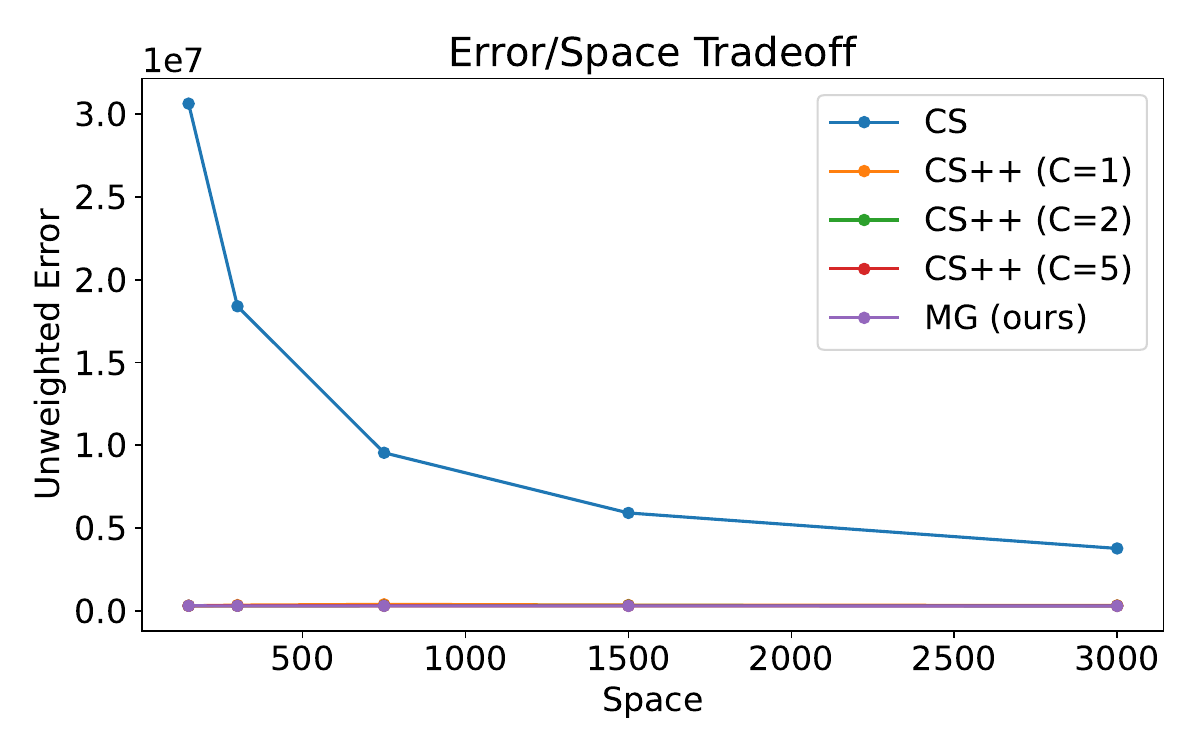}
    \includegraphics[width=0.45\linewidth]{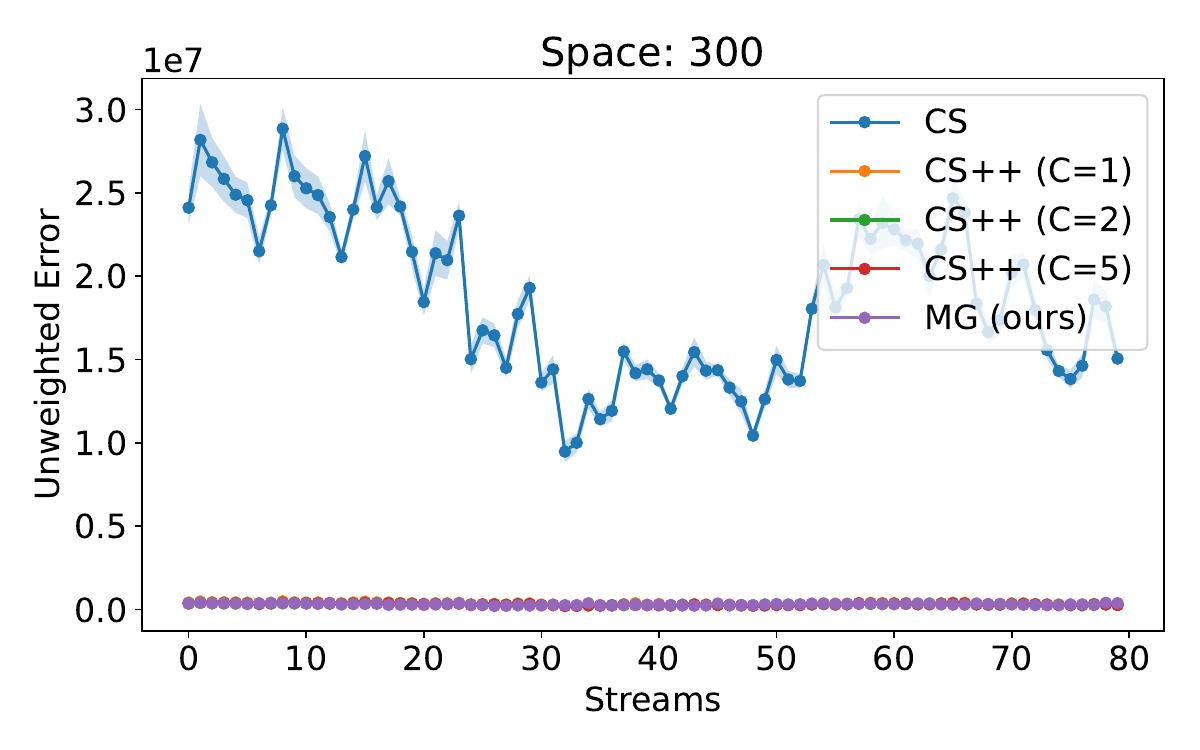}
    \caption{Frequency estimation on the AOL dataset with unweighted error and no predictions.}
\end{figure}

\subsubsection{With Predictions, Unweighted Error}

\begin{figure}[H]
    \centering
    \includegraphics[width=0.45\linewidth]{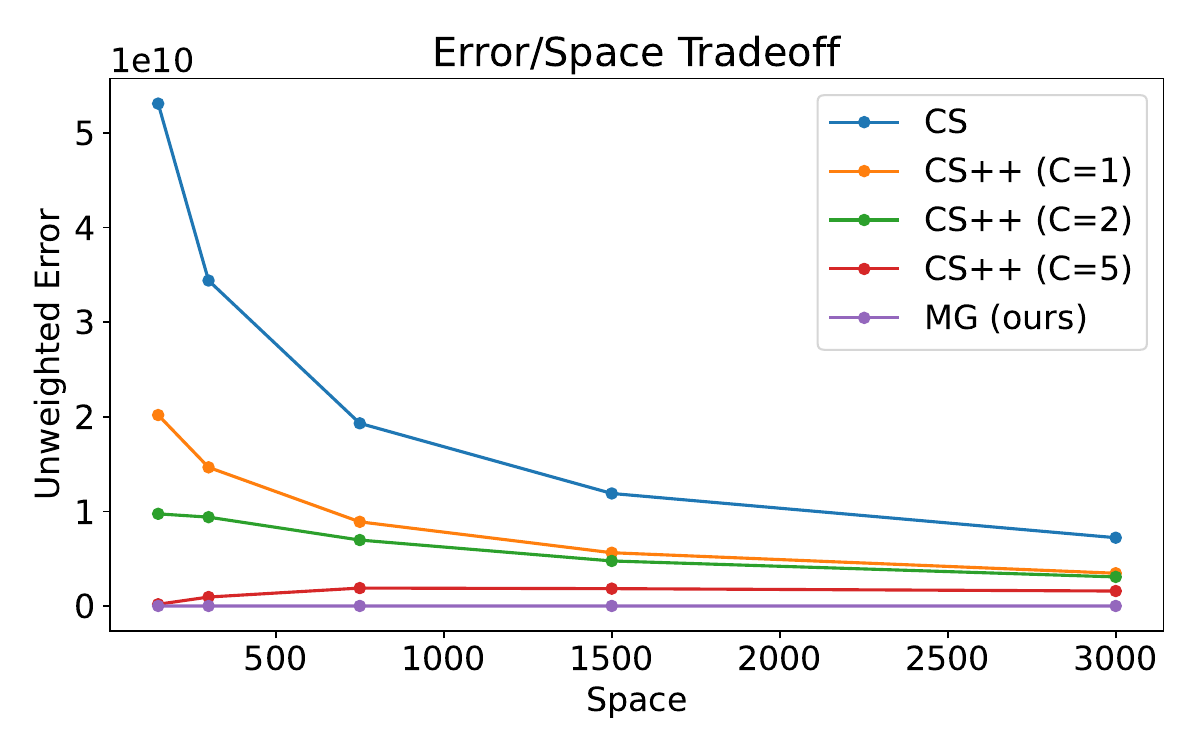}
    \includegraphics[width=0.45\linewidth]{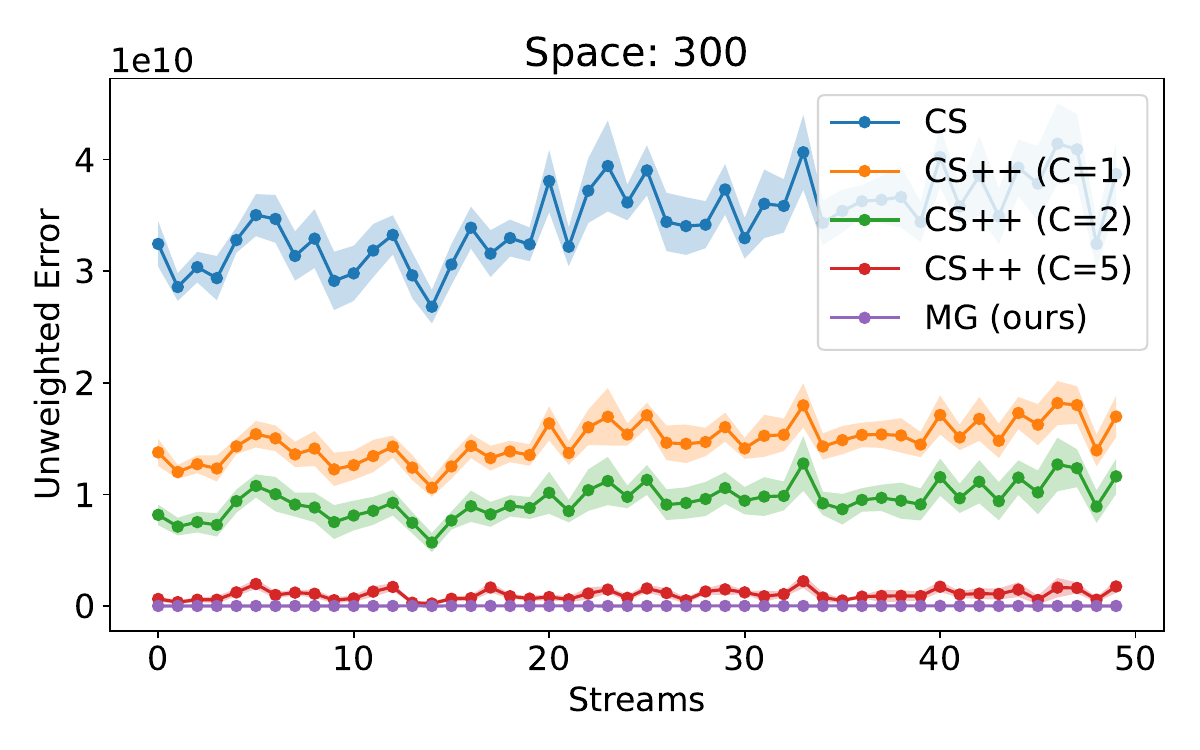}
    \caption{Frequency estimation on the CAIDA dataset with unweighted error and learned predictions.}
\end{figure}

\begin{figure}[H]
    \centering
    \includegraphics[width=0.45\linewidth]{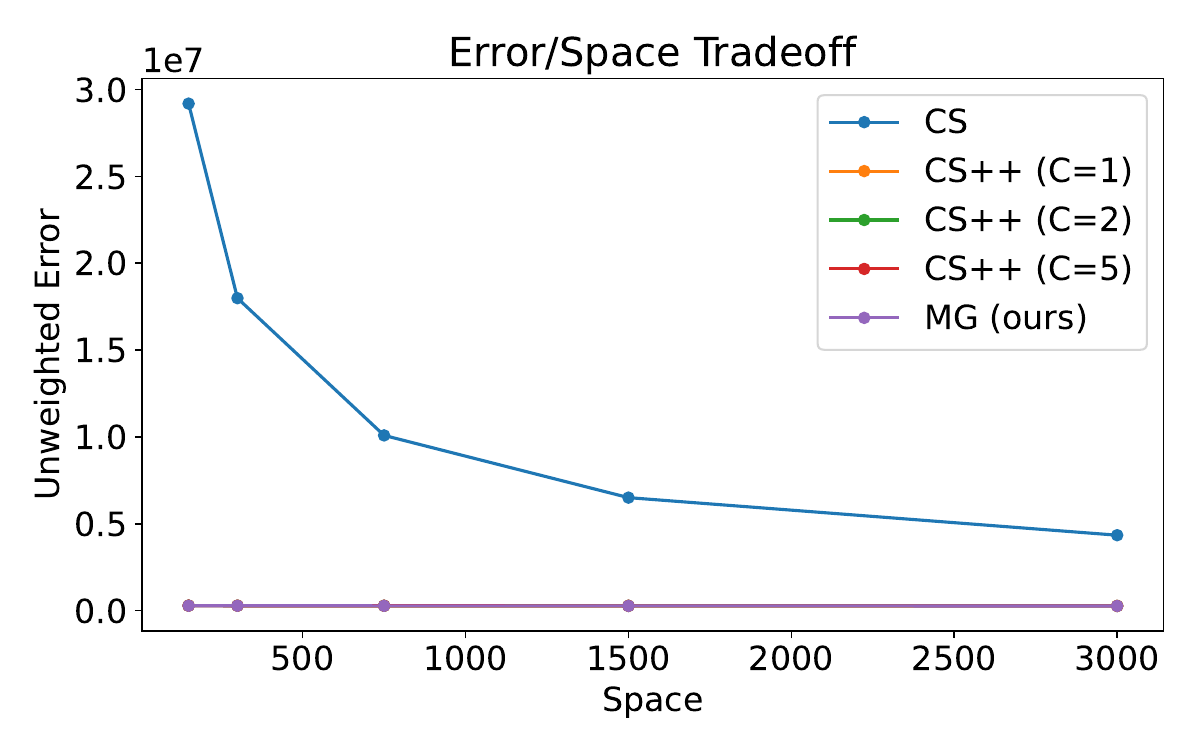}
    \includegraphics[width=0.45\linewidth]{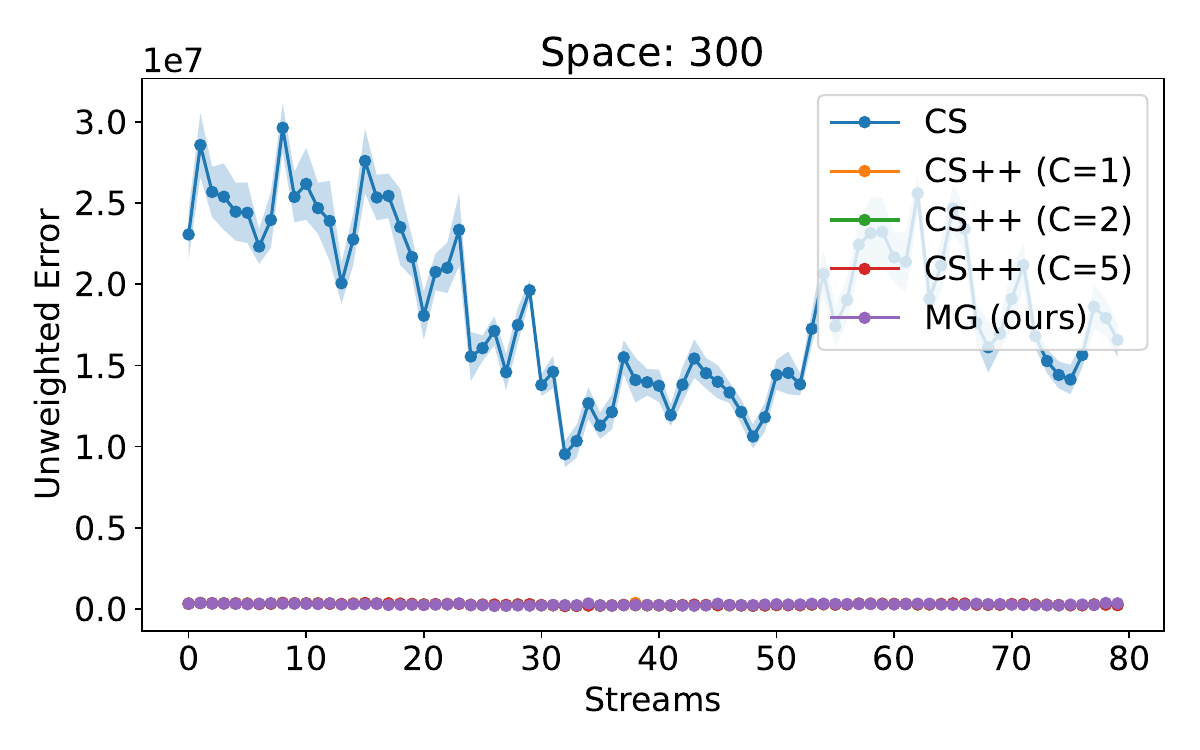}
    \caption{Frequency estimation on the AOL dataset with unweighted error and learned predictions.}
\end{figure}

\end{document}